\documentclass{tlp}
\usepackage{hyperref}
\usepackage{stmaryrd}
\usepackage{url}
\usepackage{amssymb}
\usepackage{dsfont}
\usepackage{aopmath}
\usepackage{graphicx}
\usepackage{type1cm}
\usepackage{eso-pic}

\newcommand{\GLop}{\gamma}
\newcommand{\dlGLop}{\alpha}

\newcommand{\tva}[1]{\mathcal{I}_{#1}}
\newcommand{\neck}{\leftarrow}
\newcommand{\naf}{\sim}

\newcommand{\bclos}{\mathbf{\beta}}

\newcommand{\proofend}{$\square$}

\def\dsmodels{\mathrel|\joinrel\approx}
\def\dsrefutes{\mathrel\approx\joinrel\mid}

\def\asmodels{\mathrel|\joinrel\approx_{ADL}}
\def\asrefutes{\mathrel\approx\joinrel\mid_{ADL}}
\def\dmodels{\mathrel|\joinrel\sim}
\def\drefutes{\mathrel\sim\joinrel\mid}

\newcommand{\itrue}[1]{\mathcal{T}_{#1}}
\newcommand{\ifalse}[1]{\mathcal{U}_{#1}}

\newcommand{\ordC}{\gamma}
\newcommand{\ordG}{\eta}
\newcommand{\ordJ}{\kappa}
\newcommand{\ordK}{\lambda}

\newcommand{\myemptyset}{\varnothing}

\newcommand{\defrule}{\Rightarrow}
\newcommand{\srule}{\rightarrow}
\newcommand{\defeater}{\rightsquigarrow}

\newcommand{\jconsA}{T}
\newcommand{\junfA}{U}
\newcommand{\jwfA}{W}

\newcommand{\bdldefeasibly}{+\delta}
\newcommand{\bdldefeasiblyno}{-\delta}

\def\definitely{+\Delta}

\def\definitelyno{-\Delta}

\newtheorem{definition}{Definition} 
\newtheorem{example}{Example} 

\begin{document}
\bibliographystyle{acmtrans}

 \submitted{September 10 2010}
 \revised{May 21 2011}
 \accepted{June 1 2011}

\title[Interdefinability of Defeasible Logic and Logic Programming under the WFS]{Interdefinability of Defeasible Logic and Logic Programming under the Well-Founded Semantics}

\author[Frederick Maier]
{FREDERICK MAIER\thanks{\textbf{To appear in Theory and Practice of Logic Programming (TPLP).}}
\thanks{Portions of this work were performed while the author was a doctoral student at The University of Georgia in Athens, Georgia.}
\\
Kno.e.sis Center, Wright State University
}


\maketitle

\label{firstpage}

\begin{abstract}
We provide a method of translating theories of Nute's defeasible logic into logic programs, and a corresponding translation in the opposite direction. Under certain natural restrictions, the conclusions of defeasible theories under the  ambiguity propagating defeasible logic ADL correspond to those of the well-founded semantics for normal logic programs, and so it turns out that the two formalisms are closely related.  Using the same translation of logic programs into defeasible theories, the semantics for the ambiguity blocking defeasible logic NDL  can be seen as indirectly providing an ambiguity blocking semantics for logic programs.

We also provide antimonotone operators for both ADL and  NDL, each based on the Gelfond-Lifschitz (GL) operator for logic programs. For defeasible theories without defeaters or priorities on rules, the operator for ADL corresponds to the GL operator and so can be seen as partially capturing the consequences according to ADL. Similarly, the operator for NDL captures the consequences according to NDL, though in this case no restrictions on theories apply. Both operators can be used to define stable model semantics for defeasible theories.

\end{abstract}

\begin{keywords}
 defeasible logic, logic programming, well-founded semantics, stable model semantics, ambiguity blocking and propagation.
\end{keywords}

\section{Introduction}

Defeasible logic is a family of rule-based nonmonotonic reasoning formalisms originally developed by Donald Nute (Nute \citeyearNP{nute1986,nute1994,nute1997}; Nute et al. \citeyearNP{nute1989}). Over the years, many variants have been proposed,
 with the most recent system created by Nute himself---an ambiguity blocking logic which we call NDL---appearing in the late 90s
  (Nute \citeyearNP{nute1999,nute2001}; Donnelly \citeyearNP{donnelly}). An ambiguity propagating counterpart to NDL, called ADL, was developed considerably later \cite{maier2006}.  Working separately, David Billington \citeyear{billington1993} presented a
quantified version of one of Nute's logics and showed it to be cumulative. Billington, together with Grigoris Antoniou,
Michael Maher, Guido Governatori, and others, would later go on to
publish a number of papers on this logic and its offshoots
(Antoniou et al.  \citeyearNP{antoniou2000,antoniou1,antoniou2001,antoniou05}; Governatori et al. \citeyearNP{governatori2004}; Maher and Governatori \citeyearNP{maher99}; Maher et al. \citeyearNP{maher01b}).
This logic---which we call BDL---and its variants are the ones most frequently encountered in the literature.

Defeasible logic possesses several virtues which warrant its study. It is of low computational complexity  compared to, say, default logic
 \cite{reiter:1} or logic programming under the stable model semantics \cite{gelfond88}.  E.g., some of the logics based on BDL have linear complexity \cite{maher01b}.
Also, the different variants of defeasible logic express a variety of different intuitions, and so, as Antoniou et al. \citeyear{antoniou2000} have said, defeasible logic forms a ``flexible framework'' for knowledge representation.
Furthermore, the device primarily responsible for making defeasible logic defeasible---namely, the defeasible rule---is intuitively easy to grasp, arguably easier than the {default negation} $\sim$ used in logic programming. E.g., we at least  find the defeasible rule

\begin{center}$\{bird(X)\} \Rightarrow fly(X)$\end{center}

\noindent{}(which might be read as ``Birds usually can fly'') to be more understandable than its logic program counterpart:

\begin{center}$fly(X) \neck bird(X), \naf \neg fly(X)$\end{center}

\noindent{}Such considerations are important when it comes to creating and maintaining knowledge--based systems.

Defeasible logic is nevertheless relatively little known in the nonmonotonic reasoning (NMR) community, and relationships to more mainstream NMR formalisms have been only partially studied.
There are exceptions.   Antoniou and Billington \citeyear{antoniou2001b} have shown how an ambiguity propagating variant of BDL can be embedded into default logic, and Brewka \citeyear{brewka} provided a simple scheme for translating the same logic (though without ``team defeat'') into logic programming under his own prioritized well-founded semantics \cite{brewka96}. Later, Antoniou et al. \citeyear{antoniou05} provided an alternative embedding of defeasible theories into logic programs and  showed a relationship between the BDL-conclusions of theories and both the Kunen \citeyear{kunen} and stable model semantics of their embeddings.

Similar analyses have not been performed for NDL and ADL, however. NDL was not well known at the time Brewka's paper was written, and ADL did not exist until 2006.  This is unfortunate, since these logics incorporate features not found in other defeasible logics.  Particularly, NDL and ADL include \emph{failure-by-looping}, a mechanism  to weed out circular arguments.
Its absence  contributes greatly to the low complexity of the other logics, but it also means that the logics fail to draw
reasonable conclusions in some cases.  E.g., if given the single rule
\begin{center}
$\{p\} \rightarrow p$
\end{center}
and nothing else, then the earlier defeasible logics would be unable to conclude anything at all about $p$; it is neither  provable nor refutable in these logics. However, based on the rule alone, there's no reason to believe $p$, and so it should be unprovable. Both NDL and ADL are able to show this.

Failure-by-looping is conceptually similar to the notion of an \emph{unfounded set} in the \emph{well-founded semantics} (WFS) for logic programs \cite{vangelder91}. Indeed, a recognition of this is what led to the development of adequate semantics \cite{maier2009} for both NDL and ADL---semantics which are based explicitly on the WFS. Historically, defeasible logics have been defined  proof-theoretically, with semantics coming only later. Prior to 2006, the only semantics offered for NDL \cite{donnelly} was sound but incomplete, even for finite propositional theories.

Given the similarity of the semantics for NDL and ADL to the WFS for logic programs, it is natural to inquire whether each formalism can be translated into the other. For proponents of defeasible logic, the benefit of interdefinability with logic programs would be that preexisting logic program reasoners---such as XSB \cite{xsb} or \texttt{smodels} \cite{niemela2001}---could be used to draw conclusions according to defeasible logic, and many of the theoretical results already known about logic programs could be applied to defeasible theories. For proponents of logic programming, the benefit of interdefinability would be that results known about defeasible logic could be applied to logic programs. Furthermore, interdefinability would allow certain programs to be represented in a more concise and intuitively acceptable manner.

The present paper takes up interdefinability and related issues. It is shown here that for a restricted class of defeasible theories (those with minimal conflict sets, no defeaters, and no priorities on rules) the semantics for ADL corresponds to the WFS for normal logic programs. That is, there exists a
rather natural translation of a defeasible theory into a logic program (under the WFS) that preserves the ADL-consequences of
the theory. It is also shown that a consequence preserving translation exists in the other direction. The closed-world assumption of the WFS is easily represented as a set of defeasible rules with empty bodies.  And so,  given the restrictions, either formalism can be embedded into the other.

We also define antimonotone operators $\alpha$ and $\beta$ for defeasible theories (with $\alpha$ propagating ambiguity and $\beta$ blocking it) and show that when unprioritized defeasible theories are translated into logic programs, $\alpha$ coincides with the Gelfond--Lifschitz operator $\gamma$ \cite{gelfond88}. Given this and the correspondence between ADL and the WFS, it immediately follows that an alternating fixpoint procedure based on $\alpha$ can be used on unprioritized theories to generate the consequences according to ADL.  This parallels the known relationship between $\gamma$ and the WFS \cite{baral91}. Additionally, a similar fixpoint procedure,  based on $\beta$, exactly captures the consequences according to NDL.  Both operators can be used to define stable model semantics for defeasible theories.

Given that NDL blocks ambiguity and both ADL and the WFS propagate it, it is no surprise that there is no correspondence between the NDL-consequences of a defeasible theory and the well-founded model of its logic program counterpart.  In general, the consequences according to the WFS and ADL are a subset of those according to NDL. However,  using the same scheme to translate logic programs into defeasible theories, NDL can be
viewed as indirectly providing an ambiguity blocking semantics for logic programs.

The remainder of the paper is organized as follows:
Sections \ref{wfs}, \ref{DL:Syntax} and \ref{WFSSec} give overviews, respectively, of logic programming with the WFS, the syntax of defeasible logic (in general), and the semantics of NDL and ADL (in particular).
The consequence-preserving translations of defeasible theories into logic programs (and vice versa) are given in
Sections \ref{DL2LP} and \ref{LPtoDL}.
Section \ref{AltFixpoints} presents the operators $\alpha$ and $\beta$. The correspondence between $\alpha$ and $\gamma$ is proven, as is the relationship between $\beta$ and the semantics for NDL. The operators are used in Section \ref{stableModelsForDL} to define semantics for defeasible theories akin to the stable model semantics.
We conclude with a brief discussion of related work, discussing in particular the differences between ADL/NDL and other versions of defeasible logic.

Two appendices are also included. The first presents the proof systems for NDL and ADL. The  second shows how  defeasible theories can be transformed into equivalent ones lacking defeaters, extended conflict sets, and priorities on rules. Given this, it follows that  $\alpha$ and logic programs under the WFS can both be used to compute all of the ADL consequences of defeasible theories.

\section{The WFS for normal logic programs}\label{wfs}

A normal logic program $\Pi$ consists of  rules of the  form

\noindent{}\begin{center}$a \neck b_1, b_2, \ldots, b_n, \naf c_1, \naf c_2, \ldots, \naf c_m$\end{center}

\noindent where  $n$ and $m$ are nonnegative integers, $a$ and each $b_i$ and $c_i$ are first order atomic formulas, and $\sim$ is \emph{default negation}; each $c_i$ is a \emph{default literal}.
In the original WFS, an interpretation $\tva{}$ of program $\Pi$ may be represented as a tuple $\langle \itrue{},\ifalse{}\rangle$, where  $\itrue{}$ and $\ifalse{}$ are disjoint sets of atoms.  Interpretations are thus 3-valued;
an atom $p$ is \emph{true} if it is in $\itrue{}$, \emph{false} if it is in $\ifalse{}$, and  \emph{undefined} otherwise.  Interpretations in defeasible logic are similar, but in defeasible logic   $\itrue{}$ and $\ifalse{}$  can contain a mix of atoms \emph{and} their negations (and so it is somewhat awkward to speak of truth and falsity). To avoid confusion,
 we will say that elements of $\itrue{}$ are \emph{well-founded} and those of $\ifalse{}$ are \emph{unfounded}. What remains is said to be \emph{ambiguous}.  The WFS selects one interpretation to serve as the canonical model (the well-founded model) of the program. Every program is guaranteed to have exactly one well-founded model.

  In the following discussion, we assume that logic programs contain only ground terms, and that the number of rules in the program is countable. $At(\Pi)$ is the set of ground atoms associated with a program, while $Lit(\Pi)$ is defined to be $At(\Pi) \cup \{\neg p| p \in At(\Pi)\}$. The  literal $\neg p$ is the classical complement of atom $p$.
    When it is necessary to talk about a literal and its complement in a general way, we will use of the following notation:
    If $p$ is an atom $a$, then $\overline{p}$ is $\neg a$. If $p$ is $\neg a$, where $a$ is an atom, then $\overline{p}$ is $a$.

  The expressions \emph{head(r)} and \emph{body(r)} refer, respectively, to the head and body of the rule $r$, and $body(r)^{-}$ and $body(r)^{+}$ refer, respectively, to the default and non-default literals of $body(r)$. Similarly, $r^{+}$ is $r$ with all default literals removed.
 An \emph{NAF-free} (or \emph{definite}) logic program is a normal logic program containing no default literals.

If $\itrue{}$ and $\ifalse{}$ are allowed to overlap in interpretations, then the set of all interpretations forms a complete lattice under the relation $\sqsubseteq$, where

\begin{center}
$\langle \itrue{1},\ifalse{1}\rangle \sqsubseteq \langle \itrue{2},\ifalse{2}\rangle$ \emph{iff} $\itrue{1} \subseteq \itrue{2}$ and $\ifalse{1} \subseteq \ifalse{2}$.
\end{center}

\noindent{}This is the so-called knowledge ordering.
The bottom $\bot$ of the lattice is
 $\langle \myemptyset, \myemptyset\rangle$ and the top $\top$  is $\langle At(\Pi),At(\Pi)\rangle$.
The well-founded model of $\Pi$ is defined using the operators $U_\Pi$, $T_\Pi$, and $W_\Pi$, all of which are  monotone on the lattice. $T_\Pi$ is the \emph{immediate consequence operator}.

\begin{center}
$\jconsA_{\Pi}(\tva{}) = \{head(r)$ $|$ $r\in\Pi$, $body(r)^{+} \subseteq \itrue{}$, and $body(r)^{-} \subseteq \ifalse{}\}$.
\end{center}

\noindent{}$U_\Pi$ and $W_\Pi$ are defined via \emph{unfounded sets}, which intuitively are sets for which no external support exists.
 If $\Pi$ is a normal logic program and $\tva{} = \langle \itrue{}, \ifalse{} \rangle$ an interpretation, then a  set  $S \subseteq At(\Pi)$ is an \emph{unfounded set}
of $\Pi$ \emph{wrt} interpretation $\tva{}$ \emph{iff} for each $p \in S$ and each rule $r \in \Pi$ with head $p$, either:

\begin{enumerate}
\item there is a $q \in body(r)^{+}$ such that $q \in
\ifalse{} \cup S$, or
\item there is a $q \in body(r)^{-}$ such that $q \in
\itrue{}$.
\end{enumerate}

\noindent{}Unfounded sets are closed under union. $U_\Pi(\tva{})$ is the \emph{greatest} unfounded set of $\Pi$ \emph{wrt} $\tva{}$:

\begin{center}
$\junfA_{\Pi}(\tva{}) = \bigcup\{A$ $|$ $A$ is an unfounded set of
$\Pi$ with respect to $\tva{}\}$.
\end{center}

$U_\Pi(\tva{})$ and $T_\Pi(\tva{})$ are disjoint, and $W_\Pi$ combines them to form a new interpretation:

\begin{center}
$\jwfA_{\Pi}(\tva{}) = \langle \jconsA_{\Pi}(\tva{}) ,
\junfA_{\Pi}(\tva{}) \rangle$
\end{center}

\noindent Beginning with $\bot$, the following sequence $(\tva{0},\tva{1},\ldots)$ is defined using  $\jwfA_{\Pi}$.

\begin{enumerate}
    \item $\tva{0} = \jwfA_{\Pi}\uparrow 0 = \langle\myemptyset,\myemptyset\rangle$
    \item $\tva{\alpha+1} = \jwfA_{\Pi}\uparrow{}\alpha+1 = \jwfA_{\Pi}(\tva{\alpha})$ (for successor ordinals)
    \item $\tva{\alpha} = \jwfA_{\Pi}\uparrow{}\alpha = \langle \displaystyle\bigcup_{\beta < \alpha} \itrue{\beta}, \displaystyle\bigcup_{\beta < \alpha} \ifalse{\beta}\rangle$ (for limit ordinals)

\end{enumerate}

\noindent The well-founded model $wfm(\Pi)$ of $\Pi$  is defined to be $\jwfA_\Pi\uparrow \lambda$, where $\lambda$ is the closure ordinal of the sequence---i.e. the least $\lambda$ such that $\jwfA_\Pi\uparrow \lambda = \jwfA_\Pi\uparrow (\lambda+1)$.
Since $W_\Pi$ is monotone on the lattice of interpretations, then by the
Knaster-Tarski Theorem \cite{tarski55}, least $lfp(W_\Pi)$ and greatest $gfp(W_\Pi)$ fixpoints of $W_\Pi$
exist. The well-founded model may equivalently be defined as  $lfp(W_\Pi)$.

\begin{example}
\begin{enumerate}
\item $p \neck \naf q_0$
\item $q_n \neck  q_{n+1}$ \emph{(for all $n \in \mathds{N}_0$, where $\mathds{N}_0$ is the set of nonnegative integers.)}
\end{enumerate}
\end{example}

The well-founded model of the above infinite program is $\langle \{p\},\{q_n|n \in \mathds{N}_0\}\rangle$.  Clearly, each $q_i$ is intuitively unfounded. In fact, $\ifalse{1} = \{q_n|n \in \mathds{N}_0\}$.  Given this, $p \in
T_\Pi(\tva{1})$ (in other words, $p \in \itrue{2}$).  The closure ordinal of the sequence is 2.

\section{Defeasible logic}\label{DL:Syntax}
Like logic programs, defeasible logic deals with sets of rules,  where the rules are composed of sets of ground literals (atoms and their classical complements). Unlike logic programs, however, defeasible logic allows three sorts of rules.  If $S$ is a finite set of literals and $p$ is a literal,
then  $S \srule p$ is a {\it strict} rule, $S \defrule p$ a {\it
defeasible} rule, and $S \defeater p$ an {\it (undercutting)
defeater}.
We may read $S \srule p$ as saying ``If $S$, then {\it
definitely} $p$,'' $S \defrule p$ as ``If $S$, then {\it
defeasibly} ({\it normally}, {\it apparently}, {\it
evidently}) $p$'', and $S \defeater p$ as ``If S, then {\it
maybe} $p$. Strict rules with empty bodies are called {\it facts}
and defeasible rules with empty bodies are called {\it
presumptions.} The rule $\myemptyset \defrule p$ may be read as saying ``Presumably, $p$.''

Defeasible logic is intended to be a logic of justification, and in our view, such a logic must be nonmonotonic. From an intuitive standpoint, it is possible for a belief to be justified and nevertheless false, and it is also possible for a belief that was once justified to lose its justification---not because its support has been rejected, but because new information has come to light which either contradicts the belief directly or else undermines it by contradicting its support.

This basic intuition is captured in defeasible logic by allowing {defeasible} rules to be {\it defeated} by other rules. Given the rules

\begin{enumerate}
\item $\myemptyset \srule lives\_alone$
\item $\myemptyset \srule has\_a\_wife$
\item $\{lives\_alone\} \defrule \neg married$
\item $\{has\_a\_wife\} \srule married$
\end{enumerate}

\noindent{}it is reasonable to conclude $lives\_alone$ and $has\_a\_wife$, as both are facts.  However, we cannot,  on pain of contradiction,
simultaneously detach the heads of \emph{both} of the latter two rules.
In defeasible logic, to detach the head of rule 3, we must first show that rule 4 \emph{cannot} be applied. This, in fact, we cannot do. Though the different defeasible logics might formalize the intuition differently, rule 3 is defeasible and rule 4 is strict, and rule 4 defeats rule 3.  In each logic, one can conclude $married$ but not $\neg married$.

The heads of defeaters can never be detached---defeaters exist solely to prevent the application of a conflicting defeasible rule. For example, the defeater $\neg$ \emph{has-intact-flight-feathers} $\rightsquigarrow \neg flies$ might be used to prevent a proof of $flies$ from $\{bird\} \Rightarrow flies$, but it cannot be used to directly prove $\neg flies$.

Strict, defeasible, and defeater rules are collected into \emph{defeasible theories}. Below, if $D$ is a defeasible theory, then $At(D)$ and $Lit(D)$ are defined as they are in logic programming: $At(D)$ is the set of atoms associated with $D$, while $Lit(D)$ is the set of literals.

\begin{definition} A \emph{defeasible theory} $D$ is a triple
$\langle R, C,\prec \rangle$, where $R$ is a countable set of rules, $C$ is a countable set of finite sets of literals in $Lit(D)$ such that for any literal $p \in At(D)$, $\{p, \neg p\} \in C$, and $\prec$ is an acyclic binary relation over the non-strict rules in $R$.
\end{definition}

The elements of $C$
 are called  \emph{conflict
sets}. It is these which determine incompatibilities in defeasible theories. Simply put, a collection of rules conflict if their heads constitute a conflict set.
The  priority relation $\prec$ is used to  resolve conflicts between non-strict rules, and it is this relation which in part determines which rules can be used to defeat others.
Different versions of defeasible logic
specify precisely how these components are used.

We call  a conflict set of the form $\{p, \neg p\}$ a \emph{minimal} conflict set, and we use $C_{MIN}$ to indicate that no conflict sets other than the minimal ones are defined in a given defeasible theory.   We say that conflict sets are \emph{closed under strict rules} if, for all $c \in C$, if $A \rightarrow p$ is a rule and $p \in c$, then $(A
\cup c-\{p\})\in C$.  It  is expensive  to close conflict sets under the strict rules of a theory, but it is often necessary to do so in order to draw  reasonable conclusions. The non-minimal conflict sets are called \emph{extended} conflict sets. The predecessors of NDL and ADL, including BDL and its variants, do not allow extended conflict sets.

For a given theory, $R_s$, $R_d$, and $R_u$ refer to the strict,
defeasible, and defeater rules of $R$, respectively, while
$R_s[p]$, $R_d[p]$, and $R_u[p]$  refer to those rules with head
$p$. $C[p]$ denotes the set of conflict sets containing $p$. The expression $\overline{p}$ refers to the complement of $p$.

\section{Well-founded semantics for ADL and NDL}\label{WFSSec}

The proof systems for ADL and NDL are presented in \ref{PS}. We will not discuss them further here other than to say that adequate semantics (presented below) corresponding to the logics did not exist until recently \cite{maier2009}. The  proof
systems for NDL and ADL are sound
relative to their counterpart semantics, and  while completeness does not hold in general, the proof systems are complete for the class of \emph{locally finite theories} (defined in \ref{PS}).  The restricted nature of the completeness result is unsurprising, since proofs in defeasible logic are required to be finite structures, while no counterpart restriction exists for the semantics.

As Example \ref{ambiex} illustrates,
 NDL and ADL differ in how ambiguity is handled.

\begin{example}\label{ambiex}\mbox{$D = \langle R, C_{MIN}, \myemptyset\rangle$, $R$ is}

\begin{enumerate}
\item  $\myemptyset \Rightarrow p$
\item  $\myemptyset \Rightarrow \neg p$
\item  $\{p\} \Rightarrow \neg q$
\item  $\myemptyset \Rightarrow q$
\end{enumerate}
\end{example}

The first two defeasible rules in the example are vacuously supported and conflict, and there is no mechanism for choosing between them. The literals $p$ and $\neg p$ are \emph{ambiguous} in an intuitive sense, and there is some debate in the literature regarding the proper handling of ambiguity.  It is clear that neither $p$ nor $\neg p$ should be considered justified.  One possible course of action is to consider both $p$ and $\neg p$ as refuted, which effectively \emph{blocks} or localizes the ambiguity to just those literals.  This is the course taken by Horty \cite{horty87} and by most forms of defeasible logic, including NDL.  If one does this, then since $p$ is refuted, all  support for $\neg q$
vanishes, and only $q$ is left with any support. Indeed, under the ambiguity blocking view, $q$ is proved while $\neg q$ is refuted.

Alternatively, one could simply refrain from concluding anything at all about $p$ and $\neg p$. Since the status of $\neg q$ depends upon resolving the status of $p$, the ambiguity of $p$ is effectively
 \emph{propagated} to $\neg q$.  This is the course taken by ADL.  Adopting ambiguity propagation yields in a sense a more extreme
form of skepticism, in that  \emph{fewer} conclusions can be drawn.
In the example, $p$ \emph{might} hold (there is conflicting
information about it and no way to resolve the conflict), and if it
does hold, then there would be evidence for both $q$ and $\neg q$, and so $q$ and $\neg q$ would be ambiguous.

The WFS for logic programs is also ambiguity propagating.  The following logic program $\Pi$ is the most-natural counterpart to the above defeasible theory.

\begin{enumerate}
\item  $p \neck \naf \neg p$
\item  $\neg p \neck \naf p$
\item  $\neg q \neck  \naf q, p$
\item  $q \neck \naf \neg q$
\end{enumerate}

\noindent
If $\neg p$ and $\neg q$ are simply treated as atoms (which is what the original WFS would do),   it can be seen that no non-empty subset of $At(\Pi)$ is an unfounded set relative to $\bot = \langle\myemptyset, \myemptyset\rangle$.  Furthermore, $T_\Pi(\bot)$ is empty. As this is so, the well-founded model of the program is simply $\bot$.

We note that in Example \ref{ambiex}, the ambiguity of $p$ and $\neg p$ can be resolved  by specifying that either rule 1 or 2 takes priority over the other. For instance, if $2 \prec 1$, then both NDL and ADL would conclude $p$, and $\neg p$ would be refuted. In both logics, $q$ and $\neg q$ would still be ambiguous, because neither rule 3 nor 4 is superior to the other.

The semantics for both NDL and ADL  are based explicitly on the WFS. As noted earlier, the components $\itrue{}$ and $\ifalse{}$  of interpretations are allowed to contain negative literals.    The set of interpretations still forms a complete lattice under $\sqsubseteq$, with $\top$ now being  $\langle Lit(D),Lit(D)\rangle$.
The operators $U_D$, $T_D$, and $W_D$, as well as the underlying notion of unfounded set, are recast to apply to defeasible theories.  Somewhat surprisingly, the difference between the semantics for NDL and that for ADL lies solely in how unfounded sets are defined.  It is this definition that determines whether ambiguity is blocked or propagated.

\begin{definition}\label{unfoundedSetDL}
A set $S \subseteq Lit(D)$ is \emph{unfounded} in NDL with respect
to $D$ and an interpretation $\tva{} = \langle \itrue{}, \ifalse{}\rangle{}$ iff for all literals $p \in
S$:

\begin{enumerate}

\item For every $r\in R_s[p]$, $body(r) \cap  (\ifalse{} \cup S) \neq
\myemptyset$.

\item For every $r\in R_d[p]$,

\begin{enumerate}

\item $body(r) \cap  (\ifalse{} \cup S) \neq \myemptyset$, or

\item there is a $c \in C[p]$ such that for each $q \in c-\{p\}$
there is a rule $s \in R[q]$ such that
\begin{enumerate}
\item  $body(s) \subseteq \itrue{}$ and,
\item\label{2bi}   $s \nprec r$.
\end{enumerate}
\end{enumerate}
\end{enumerate}
\end{definition}

\noindent{}
The
definition of unfounded set in ADL is exactly the same as for NDL,
save that condition \ref{2bi} is replaced with the following
requirement: \emph{$r \prec s$ or $s$ is strict}.

Examining the definition above and Example \ref{ambiex}, it can be seen that $\{p,\neg p\}$ is an unfounded set in NDL relative to $D$ and interpretation $\langle \varnothing,\varnothing\rangle$. It is not unfounded according to ADL, however.

Given an account of unfounded set, $U_D$ is defined identically for  NDL and ADL.

\begin{center}$U_D(\tva{}) = \bigcup\{S|$ $S$ is
an unfounded set \emph{wrt} $D$ and $\tva{}$\}\end{center}

The
immediate consequence operator for NDL and ADL is  defined in terms of \emph{witnesses of provability}.

\begin{definition} If  $D$ is a defeasible theory and $\tva{} = \langle \itrue{}, \ifalse{}\rangle$ an interpretation, then  a rule $r \in R_D$ is a \emph{witness of provability} for $p$ \emph{wrt} $D$ and $\tva{}$ if one of the below conditions applies.
\begin{enumerate}

\item $r \in R_s[p]$ and $body(r) \subseteq
\itrue{}$.

\item  $r \in R_d[p]$ and $body(r)
\subseteq \itrue{}$, and for each conflict set $c \in C[p]$, there
exists a $q \in c-\{p\}$ such that for all $s\in R[q]$, $s \prec
r$ or $body(s) \cap \ifalse{} \neq \myemptyset$.
\end{enumerate}
\end{definition}

\noindent{}
Given this, the \emph{immediate consequences} of $D$ \emph{wrt} $\tva{}$, written $T_D(\tva{})$, is the set

\begin{center}$T_D(\tva{}) = \{p|$ there exists a witness of provability for $p$ \emph{wrt} $D$ and $\tva{}\}$.\end{center}

\noindent{}The account of the operator $W$ remains unchanged from the WFS:

\begin{center}
$W_D(\tva{}) = \langle T_D(\tva{}), U_D(\tva{})\rangle $
\end{center}

\noindent{}Furthermore, $W_D$ may be used to define the monotonically increasing  sequence $(\tva{0}, \tva{1}, \ldots)$. The sequence is
\emph{coherent}, in the sense that $\itrue{\alpha}\cap\ifalse{\alpha} = \myemptyset$ for any $\alpha \geq 0$. As in the WFS, the well--founded model of the defeasible theory is defined to be the least fixpoint $lfp(W_D)$ of $W_D$:

 \begin{center}
 $wfm(D) =_{def} lfp(W_D)$
 \end{center}

\noindent{}Again, it is the case that this fixpoint coincides with the limit of the above sequence.

The well-founded models of  defeasible theories (under NDL or ADL) can be viewed as defining both a consequence relation $\dsmodels$ and an ``anti-consequence'' relation $\dsrefutes$. Analogous relations ($\dsmodels_{WFS}$ and $\dsrefutes_{WFS}$) can be defined for normal logic programs under the WFS.

\begin{definition}
Let $D$ be a defeasible theory, $L$ one of NDL or ADL, and $wfm(D) = \langle \itrue{}, \ifalse{}\rangle$ $D$'s well-founded model according to $L$.
\begin{enumerate}
\item $D \dsmodels_{L} p$ \emph{iff} $p \in \itrue{}$, and
\item $D
\dsrefutes_{L} p$ \emph{iff} $p \in \ifalse{}$.
\end{enumerate}
\end{definition}

\section{Translating Defeasible Theories into Logic Programs}\label{DL2LP}

In the scheme used by Brewka \citeyear{brewka} to translate defeasible theories into logic programs,
every defeasible rule $S \Rightarrow p$ becomes
$p \neck \naf \overline{p}, S$, and every strict rule $S \rightarrow p$ becomes $p \neck S$. The result of
the transformation is a so-called \emph{extended} logic program (which allows both $\naf$ and $\neg$ to be used).
Several examples are presented to demonstrate that the two systems do
not always agree, and Brewka argues that the results of the defeasible logic are less reasonable.  The particular logic investigated by Brewka is an ambiguity propagating variant of BDL (without team defeat, a feature we have not discussed).
Brewka dismisses other variants of defeasible logic
without discussing them in any detail, mainly because these logics are ambiguity blocking.

Brewka's translation scheme assumes minimal conflict sets.
 Below, we alter it to encompass theories with extended conflict sets, and we use this modified scheme to compare ADL to the simple WFS, i.e., not to Brewka's prioritized variant. Since conflict sets are sufficient to encode
negation, we will assume that all negative literals are just atoms. Furthermore, since defeaters and  priorities on rules are not defined
for the simple WFS, we assume that no defeaters occur in the theory and that $\prec$ is empty.

\begin{definition}Let $D = \langle R, C, \prec\rangle$ be a defeasible theory. For any literal $p \in Lit(D)$,

\begin{center}$Prod(C[p]) = \{\{a_1,\ldots,a_m\}|(a_1,\ldots,a_m) \in c_1-\{p\} \times \ldots \times c_m-\{p\}\}$\end{center}
\noindent{} where $C[p] = \{c_1,\ldots, c_m\}$
\end{definition}

$Prod(C[p])$ is the set of all sets that can be created by taking a single literal (other than $p$) from
each conflict set containing $p$ (the order in the n-ary product above does not matter). We use these sets when translating defeasible rules of a theory into
logic program rules. In order to ensure that the rules in the translation are finite in length, we require that $C[p]$ is finite for each $p \in  Lit(D)$.

\begin{definition}Let $D = \langle R, C, \myemptyset\rangle$ be a defeasible theory such that  $R_u = \myemptyset$ and $C[p]$ is finite for each $p \in Lit(D)$. The logic program translation $\Pi_D$ of $D$ is the smallest rule-set such that

\begin{enumerate}
\item If $\{q_1,\ldots,q_n\} \rightarrow p \in R_s$, then $p \neck q_1,\ldots,q_n \in \Pi_D$.
\item If $\{q_1,\ldots,q_n\} \Rightarrow p \in R_d$ and $\{a_1,\ldots,a_m\} \in Prod(C[p])$, then \\$p \neck \naf a_1,\ldots,\naf a_m,q_1,\ldots,q_n \in \Pi_D$.

\end{enumerate}
\end{definition}

Let $trans(r)$ denote the set of logic program rules obtained from rule $r$ of the defeasible theory.
If we ignore notational differences, then $trans(r) = \{r\}$ if $r$ is strict.  For the sake of convenience, we will simply say that $trans(r) = r$. Normally, if $r$ is defeasible, then $trans(r)$ will contain many rules, but  if conflict sets are minimal, then it again holds that $trans(r)$ contains only a single rule. In the translation of a defeasible rule $r$, each $a_i$ is a literal of some conflict set containing $p$.

\subsection{Soundness and Completeness of ADL \emph{wrt} WFS}\label{sec:SoundnessAndCompleteness}

Provided $C[p]$ is finite for each $p \in Lit(D)$, ADL is sound \emph{wrt} the WFS. If in addition conflict sets are minimal, ADL is  complete \emph{wrt} the WFS.

Below, since the operators for the WFS have direct analogs for defeasible logic and are defined individually for
each logic program and defeasible theory, we can use the same basic symbols for each (writing, for
instance, $T_D$ and $T_\Pi$) without causing confusion. We will use $(\tva{D,0}, \tva{D,1},\ldots)$ to denote the sequence
of interpretations obtained using $W_D$, and $(\tva{\Pi,0}, \tva{\Pi,1},\ldots)$ to denote the sequence of
interpretations obtained using $W_\Pi$. For a given interpretation $\tva{\Pi,\lambda}$, we will write $\itrue{\Pi,\lambda}$ and $\ifalse{\Pi,\lambda}$ to
distinguish well-founded and unfounded sets.
We will also sometimes indicate the well-founded model of a given theory $D$ (or program $\Pi$) by writing  $\tva{D,WF}$ ($\tva{\Pi,WF}$).

\begin{proposition}[Soundness of ADL \emph{wrt} WFS]\label{soundness}
Let $D = \langle R,C,\myemptyset\rangle$ be a defeasible theory such
that $R_u = \myemptyset$ and for each $p \in Lit(D)$, $C[p]$ is finite. Let $\Pi$ be the logic program translation of $D$.
For any $p \in Lit(D)$,

\begin{enumerate}
\item if $D \asmodels p$, then $\Pi \dsmodels_{WFS} p$, and
\item if $D \asrefutes p$, then $\Pi \dsrefutes_{WFS} p$.
\end{enumerate}
\end{proposition}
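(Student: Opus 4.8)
The plan is to establish the single lattice inequality $wfm(D) \sqsubseteq wfm(\Pi)$, working in the common lattice of interpretations over $Lit(D) = At(\Pi)$ (recall that in $\Pi$ the negative literals of $D$ are treated as fresh atoms). Writing $wfm(D) = \langle \itrue{} , \ifalse{}\rangle$, the two halves of this inequality, $\itrue{}\subseteq\itrue{\Pi,WF}$ and $\ifalse{}\subseteq\ifalse{\Pi,WF}$, are precisely assertions (1) and (2). I would prove the inequality by transfinite induction on $\alpha$, showing $\tva{D,\alpha} \sqsubseteq wfm(\Pi)$ for every interpretation in the $W_D$-sequence and then instantiating $\alpha$ at the closure ordinal. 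The reason for comparing each $\tva{D,\alpha}$ against the \emph{fixpoint} $wfm(\Pi)$, rather than against the corresponding $\tva{\Pi,\alpha}$, is that a naive stage-by-stage comparison is off by one stage in both the $T$-clause and the $U$-clause; since $wfm(\Pi) = W_\Pi(wfm(\Pi))$, comparing against it lets one ``close the loop'' and absorb that shift.

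Base and limit cases are immediate. For the successor step, put $\langle\mathcal{T},\mathcal{U}\rangle = wfm(\Pi)$, assume $\tva{D,\alpha}\sqsubseteq\langle\mathcal{T},\mathcal{U}\rangle$, and prove $T_D(\tva{D,\alpha})\subseteq\mathcal{T} = T_\Pi(\langle\mathcal{T},\mathcal{U}\rangle)$ and $U_D(\tva{D,\alpha})\subseteq\mathcal{U} = U_\Pi(\langle\mathcal{T},\mathcal{U}\rangle)$. For the $T$-part: a strict witness of provability for $p$ is, up to notation, its own translation and is applicable wrt $\langle\mathcal{T},\mathcal{U}\rangle$ by the induction hypothesis. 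For a defeasible witness $r = \{q_1,\ldots,q_n\}\Rightarrow p$: since $\prec = \myemptyset$, for every $c\in C[p]$ there is a literal $b_c\in c-\{p\}$ such that every rule $s\in R[b_c]$ has $body(s)\cap\ifalse{D,\alpha}\neq\myemptyset$; using $\ifalse{D,\alpha}\subseteq\mathcal{U}$, every $\Pi$-rule with head $b_c$ (which comes from some $trans(s)$, so its positive body is exactly $body(s)$) has a positive body literal in $\mathcal{U}$, so $\{b_c\}$ is an unfounded set of $\Pi$ wrt $\langle\mathcal{T},\mathcal{U}\rangle$ and hence $b_c\in U_\Pi(\langle\mathcal{T},\mathcal{U}\rangle)=\mathcal{U}$. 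Selecting one such $b_c$ per conflict set yields a member of $Prod(C[p])$, and the corresponding $\Pi$-rule $p\neck\naf b_{c_1},\ldots,\naf b_{c_m},q_1,\ldots,q_n$ has negative body inside $\mathcal{U}$ and positive body $\{q_1,\ldots,q_n\}=body(r)\subseteq\itrue{D,\alpha}\subseteq\mathcal{T}$, so $p\in T_\Pi(\langle\mathcal{T},\mathcal{U}\rangle)=\mathcal{T}$.

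For the $U$-part it suffices to show that every $S$ unfounded in ADL wrt $D$ and $\tva{D,\alpha}$ is unfounded, in the logic-programming sense, wrt $\Pi$ and $\langle\mathcal{T},\mathcal{U}\rangle$: then $S\subseteq U_\Pi(\langle\mathcal{T},\mathcal{U}\rangle)=\mathcal{U}$, and since $U_D(\tva{D,\alpha})$ is a union of such $S$ it too lies in $\mathcal{U}$. Fix $p\in S$ and a $\Pi$-rule $r''$ with head $p$, say $r''\in trans(r)$ with $r\in R[p]$. If $r$ is strict, ADL-unfoundedness clause (1) produces a member of $body(r)\cap(\ifalse{D,\alpha}\cup S)\subseteq body(r'')^{+}\cap(\mathcal{U}\cup S)$, which is the first logic-programming unfoundedness clause. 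If $r=A\Rightarrow p$ is defeasible and clause (2a) applies, the same computation works; if clause (2b) applies with conflict set $c\in C[p]$, then taking the literal $a_k\in c-\{p\}$ that appears negated in $r''$, the ADL condition (with $\prec=\myemptyset$ forcing the disjunct ``$s$ is strict'') supplies a strict rule $s\in R[a_k]$ with $body(s)\subseteq\itrue{D,\alpha}\subseteq\mathcal{T}$; since $trans(s)=s$ this gives $a_k\in T_\Pi(\langle\mathcal{T},\mathcal{U}\rangle)=\mathcal{T}$, i.e. a member of $body(r'')^{-}\cap\mathcal{T}$, which is the second logic-programming unfoundedness clause.

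The crux is the successor step, and inside it the two uses of $\prec=\myemptyset$: it reduces the ADL priority side-conditions (in the witness-of-provability and unfoundedness clauses) to statements purely about refutation and strictness, and it is exactly the strictness of $s$ in unfoundedness clause (2b) that makes $trans(s)$ a rule forcing $a_k$ into $T_\Pi$ (a defeasible $s$ would only yield a rule carrying extra $\naf$-literals about whose status nothing is known) --- which is also why the analogous correspondence with NDL fails. The remaining care is bookkeeping for $Prod(C[p])$: finiteness of each $C[p]$ is what keeps the selected blocker set, and hence the generated $\Pi$-rule, finite. I expect no trouble from the base and limit cases, from the identification of the two lattices, or from the fact that $wfm(\Pi)$ is a $W_\Pi$-fixpoint.
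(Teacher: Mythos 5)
Your proposal is correct and follows essentially the same route as the paper's proof: transfinite induction along the $W_D$-sequence, comparing each stage against the fixpoint $wfm(\Pi)$ to absorb the one-stage shift, with the same case split (strict vs.\ defeasible witnesses for the $T$-part; showing ADL-unfounded sets are LP-unfounded for the $U$-part), the same use of $Prod(C[p])$ to select one blocker per conflict set, and the same exploitation of $\prec=\myemptyset$ to force the defeating rule $s$ in the unfoundedness clause to be strict. The only cosmetic difference is that you argue over arbitrary ADL-unfounded sets and invoke closure under union, where the paper works directly with $\ifalse{D,\lambda}$; this changes nothing of substance.
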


\begin{proof*}
The proof is by induction on the sequence $(\tva{D,0},\tva{D,1},\ldots)$, showing that for all $\lambda \geq 0$, if $p \in \itrue{D,\lambda}$ ($p \in \ifalse{D,\lambda}$), then $p \in \itrue{\Pi,WF}$ ($p \in \ifalse{\Pi,WF}$).  Since $\tva{D,0} = \tva{\Pi,0}$, the claim holds for $\lambda = 0$.  Suppose it holds for all $\kappa < \lambda$.
We may assume \emph{wlog}
that $\lambda$ is a successor ordinal.
There are two cases to consider.

\begin{enumerate}
\item Suppose $p \in \itrue{D,\lambda}$.  Then there exists an $r \in R[p]$
such that $body(r) \subseteq  \itrue{D,\lambda-1}$ and either (1) r is strict or else (2) r is defeasible and for each conflict
set $c \in C[p]$, there exists a $q \in c-\{p\}$ such that for all $s \in R[q]$, $body(s) \cap \ifalse{D,\lambda-1} \neq \myemptyset$.
In both cases, $body(r) \subseteq  \itrue{\Pi,WF}$ by the inductive hypothesis. As such, for any $r' \in trans(r)$,
$body(r')^{+} \subseteq \itrue{\Pi,WF}$.
If $r$ is strict then $body(r')^{+} = body(r')$ and so
 by definition of $T_\Pi$, $p \in \itrue{\Pi,WF}$.

 Suppose, instead, that $r$ is defeasible, and let $c \in C[p]$.  Then there exists a $q \in c-\{p\}$ such that for each $s \in R[q]$, $body(s) \cap \ifalse{D,\lambda-1} \neq \myemptyset$. Let $s' \in trans(s)$.  Since $body(s) = body(s')^{+}$, it follows that $body(s')^{+} \cap \ifalse{D,\lambda-1} \neq \myemptyset$. By inductive hypothesis,  $body(s')^{+} \cap \ifalse{\Pi,WF} \neq \myemptyset$.
Generalizing on $s'$ and then $s$, every rule for $q$ in $\Pi$ has a non-default literal in $\ifalse{\Pi,WF}$, and so by definition of $U_\Pi$, $q \in \ifalse{\Pi,WF}$.

Generalizing on $c$, every conflict set for $p$ has a literal $q \neq p$ such that $q \in \ifalse{\Pi,WF}$.  Let $Q=\{q_1,\ldots,q_m\}$ be the set of such literals. Obviously, $Q \in Prod(C[p])$, and so there is a rule $r' \in trans(r)$ such that $body(r')^{+} = body(r)$ and $body(r')^{-} = Q$.  Since $body(r')^{+} \subseteq \itrue{\Pi,WF}$, and $Q \subseteq \ifalse{\Pi,WF}$, it follows that $p \in \itrue{\Pi,WF}$.

\item Suppose $p \in \ifalse{D,\lambda}$ and let $b$ be any literal in $\ifalse{D,\lambda}$.  $\ifalse{D,\lambda}$ is by definition
unfounded wrt $D$ and $\tva{D,\lambda-1}$. If $r \in R_s[b]$, then there is a $q \in body(r)$ such that $q \in
\ifalse{D,\lambda}\cup \ifalse{D,\lambda-1}$. $U_D$ is monotone, and so $q \in \ifalse{D,\lambda}$.

Suppose $r \in R_d[b]$. Then either (1) there is a $q \in body(r)$ such that $q \in \ifalse{D,\lambda}$, or (2) there is
a conflict set $c \in C[b]$ such that for all $a \in  c-\{b\}$, there is a $s \in R_s[a]$ such that $body(s) \subseteq
\itrue{D,\lambda-1}$ ($s$ must be strict since the priority relation is empty). Suppose (2) holds. By inductive
hypothesis, $body(s) \subseteq  \itrue{\Pi,WF}$. Since $s$ is strict, $trans(s) = s$ and so $a \in \itrue{\Pi,WF}$. Recall that by definition of $Prod(C[b])$, for each set $Q \in Prod(C[b])$ we have
$Q\cap c-\{b\} \neq \myemptyset$. By definition of $trans(r)$, for each $t \in trans(r)$, there exists a $Q \in Prod(C[b])$
such that $Q = body(t)^{-}$. Since this is so, if (2) holds then for each rule $r' \in trans(r)$, there exists
a $a \in body(r')^{-}$ such that $a \in \itrue{\Pi,WF}$.

Generalizing on $r$, every logic program rule $r'$ for $b$ has a classical literal $q \in body(r')^{+}$ such
that $q \in \ifalse{D,\lambda}$, or else a default literal $a \in body(r')^{-}$ such that $a \in \itrue{\Pi,WF}$. Generalizing on $b$, by definition
of unfounded sets for logic programs, $\ifalse{D,\lambda}$ is an unfounded set relative to $\Pi$ and $\tva{\Pi,WF}$, and so
$\ifalse{D,\lambda}\subseteq \ifalse{\Pi,WF}$. Since $p \in \ifalse{D,\lambda}$, $p \in \ifalse{\Pi,WF}$.
\mathproofbox\end{enumerate}
\end{proof*}

As noted, the claim of soundness pertains to defeasible theories with extended conflict sets,
provided that $C[p]$ is finite for all $p$. The other specific requirements are that no defeaters are used
and that the rules are unprioritized. Completeness requires more, however. Specifically,  conflict sets must also be minimal.

\begin{example}\label{ADLdiffersWFS} Consider the following unprioritized defeasible theory and its corresponding logic program, and
suppose that the conflicts sets $\{p,\neg p\}$, $\{q,\neg q\}$, $\{q, \neg p\}$ are used.

\begin{center}
\begin{tabular}{cc}
\begin{minipage}{.4\textwidth}
\begin{enumerate}
\item $\myemptyset \rightarrow p$
\item $\myemptyset \Rightarrow \neg p$
\item $\myemptyset \Rightarrow q$
\item $\{q\} \rightarrow p$
\end{enumerate}
\end{minipage}
&
\begin{minipage}{.4\textwidth}
\begin{enumerate}
\item $p$
\item $\neg p \neck \naf p$
\item $q \neck \naf \neg q, \naf \neg p$
\item $p \neck q$
\end{enumerate}
\end{minipage}\\
\end{tabular}
\end{center}
 In the defeasible theory, $p$ is a fact, and so the presumption of $\neg p$ is defeated. Nevertheless, it is supported, and this is sufficient to prevent $q$ from being concluded---$q$ is ambiguous according to ADL. In the logic program, however, both $p$ and $q$ are well-founded.  And so the two formalisms disagree. In contrast, if the conflict sets are minimal, the defeasible theory and corresponding logic program yield the same results.
 Examples such as above are problematic for ADL and NDL, as the result produced by the logic program is intuitively more reasonable than the one produced by the defeasible theory with conflict sets closed under strict rules. We pick up this topic again in Section \ref{relatedwork}.
\end{example}

Before we prove that ADL is complete with respect to the WFS---subject to the restrictions noted above---we need to prove the following small lemma.

\begin{lemma}\label{completenessLemma} Let $D = \langle R,C,\myemptyset\rangle$  be a defeasible theory such that $R_u = \myemptyset$ and the conflict sets of $C$ are
minimal. For all $p \in Lit(D)$, if $r \in R_d[p]$, $body(r) \subseteq \itrue{WF}$, and $\overline{p} \in  \ifalse{WF}$, then $p \in \itrue{WF}$.\end{lemma}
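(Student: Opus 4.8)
The plan is to show that the hypothesised defeasible rule $r$ is a witness of provability for $p$ with respect to $D$ and its (ADL) well-founded model $\tva{WF}$ --- unless some strict rule already puts $p$ into $\itrue{WF}$ outright. Throughout I use that $\tva{WF} = lfp(W_D)$ is a fixpoint of $W_D$, so $\itrue{WF} = T_D(\tva{WF})$ and $\ifalse{WF} = U_D(\tva{WF})$; in particular $\ifalse{WF}$ is itself an unfounded set with respect to $D$ and $\tva{WF}$. I also use that minimal conflict sets force $C[p] = C[\overline{p}] = \{\{p,\overline{p}\}\}$, and that $R_u = \myemptyset$ and $\prec$ is empty.

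First I would unpack what $\overline{p} \in \ifalse{WF}$ gives us. Applying Definition \ref{unfoundedSetDL} in its ADL form (condition \ref{2bi} replaced by ``$r \prec s$ or $s$ is strict'') to the literal $\overline{p}$, with $S = \ifalse{WF}$ and interpretation $\tva{WF}$ --- so that $\ifalse{WF} \cup S = \ifalse{WF}$ --- yields: (i) every $s \in R_s[\overline{p}]$ satisfies $body(s) \cap \ifalse{WF} \neq \myemptyset$; and (ii) every $s \in R_d[\overline{p}]$ either satisfies $body(s) \cap \ifalse{WF} \neq \myemptyset$ or else --- since $\{p,\overline{p}\}$ is the only conflict set containing $\overline{p}$ and $\prec$ is empty --- there is a \emph{strict} rule $t \in R_s[p]$ with $body(t) \subseteq \itrue{WF}$.

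Then I would split into two cases. If clause (ii) ever produces, for some defeasible rule for $\overline{p}$, a strict rule $t \in R_s[p]$ with $body(t) \subseteq \itrue{WF}$, then $t$ is immediately a witness of provability for $p$ (clause 1 of that definition), so $p \in T_D(\tva{WF}) = \itrue{WF}$ and we are done. Otherwise, no defeasible rule for $\overline{p}$ invokes that alternative, so combining with (i) and with $R_u = \myemptyset$ we get $body(s) \cap \ifalse{WF} \neq \myemptyset$ for \emph{every} $s \in R[\overline{p}]$. I then verify that $r \in R_d[p]$ is a witness of provability for $p$ by clause 2: $body(r) \subseteq \itrue{WF}$ by hypothesis, the only conflict set in $C[p]$ is $\{p,\overline{p}\}$, the only element of $\{p,\overline{p}\} - \{p\}$ is $\overline{p}$, and every $s \in R[\overline{p}]$ satisfies ``$s \prec r$ or $body(s) \cap \ifalse{WF} \neq \myemptyset$'' (via the second disjunct). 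Hence $r$ is a witness of provability and $p \in T_D(\tva{WF}) = \itrue{WF}$.

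The only real subtlety --- and it is mild --- is seeing that the unfoundedness of $\overline{p}$ can be realised in two qualitatively different ways, and that one of them (body literals already unfounded) is exactly what licences using $r$ as a witness of provability for $p$, while the other (a grounded strict rule for $p$) makes $p$ well-founded directly; the case split is essential. It also matters that we use the ADL form of Definition \ref{unfoundedSetDL}: it is precisely the ``$s$ is strict'' clause that delivers, in the second subcase, a \emph{strict} rule for $p$ (a defeasible one would merely re-pose the problem), so the lemma is genuinely an ADL statement.
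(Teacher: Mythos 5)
Your proof is correct and follows essentially the same route as the paper's: unpack the unfoundedness of $\overline{p}$ under minimal conflict sets and empty $\prec$, and observe that each of the two resulting alternatives (a strict rule for $p$ with well-founded body, or every rule for $\overline{p}$ having a body literal in $\ifalse{}$) yields a witness of provability for $p$. The only difference is presentational: the paper descends to the least stage $\lambda$ of the sequence and argues by contradiction, whereas you work directly at the fixpoint $\tva{WF}$ (using $T_D(\tva{WF})=\itrue{WF}$ and that $\ifalse{WF}=U_D(\tva{WF})$ is itself an unfounded set), which is a slightly cleaner way to organize the identical argument.
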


\begin{proof*}
Suppose $r \in R_d[p]$ and $body(r) \subseteq \itrue{WF}$ and $\overline{p} \in \ifalse{WF}$. Then there must be some least
successor ordinal $\lambda$ such that $body(r) \subseteq \itrue{\lambda}$ and $\overline{p}\in \ifalse{\lambda}$. Recall that $\ifalse{\lambda}$ is the greatest unfounded
set wrt $\tva{\lambda-1}$. Suppose for a proof by contradiction that $p \notin \itrue{\lambda+1}$. By definition of unfounded set (and since $C = C_{MIN}$) we have,

\begin{enumerate}
\item for all $s \in  R_s[\overline{p}]$, $body(s) \cap (\ifalse{\lambda} \cup \ifalse{\lambda -1})\neq \myemptyset$, and
\item  for all $s \in R_d[\overline{p}]$, either
\begin{enumerate}
\item $body(s) \cap  (\ifalse{\lambda} \cup \ifalse{\lambda-1})\neq \myemptyset$, or
\item\label{2b} there is a rule $t \in R_s[p]$ such that $body(t) \subseteq \itrue{\lambda-1}$.
\end{enumerate}
\end{enumerate}
Since $U_D$ is monotone,
$body(s) \cap (\ifalse{\lambda} \cup \ifalse{\lambda -1})$
reduces to $body(s) \cap \ifalse{\lambda}$. If \ref{2b} above holds, then $p \in \itrue{\lambda}$ and so $p \in \itrue{\lambda+1}$.  As such, $r \in R_d[p]$ and $body(r) \subseteq \itrue{\lambda}$, and it must be that for each rule $s \in R[\overline{p}]$,
$body(s) \cap \ifalse{\lambda}\neq \myemptyset$. But  this implies (via $T_D$) that $p \in \itrue{\lambda+1}$.  This is a contradiction, and so (again) $p \in \itrue{\lambda+1}$.\mathproofbox\end{proof*}

\begin{proposition}[Completeness]\label{completeness}
Let $D = \langle R,C,\myemptyset\rangle$ be a defeasible theory such that $R_u = \myemptyset$ and the conflict sets of $C$ are minimal.  Let $\Pi$ be the logic program translation of $D$. For all $p \in Lit(D)$,
\begin{enumerate}
\item if $\Pi \dsmodels_{WFS} p$, then $D \asmodels p$, and
\item if $\Pi \dsrefutes_{WFS} p$, then $D \asrefutes p$.
\end{enumerate}
\end{proposition}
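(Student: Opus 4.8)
The plan is to run the same kind of transfinite induction used for Proposition~\ref{soundness}, but this time along the $W_\Pi$-sequence $(\tva{\Pi,0},\tva{\Pi,1},\ldots)$ rather than the $W_D$-sequence. Writing $\tva{D,WF} = \langle \itrue{D,WF},\ifalse{D,WF}\rangle = wfm(D) = lfp(W_D)$, I would show by induction on $\lambda$ that \emph{both} $\itrue{\Pi,\lambda} \subseteq \itrue{D,WF}$ and $\ifalse{\Pi,\lambda} \subseteq \ifalse{D,WF}$ hold for every $\lambda \geq 0$; passing to the closure ordinal of the $W_\Pi$-sequence then gives $\itrue{\Pi,WF}\subseteq \itrue{D,WF}$ and $\ifalse{\Pi,WF}\subseteq\ifalse{D,WF}$, which is exactly both clauses of the proposition. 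As in Proposition~\ref{soundness}, the base case and limit ordinals are immediate (the sequence starts at $\bot$ and takes unions at limits), so it suffices to treat a successor $\lambda$ assuming the conjunction at $\lambda-1$; recall also that $U_\Pi$ and $U_D$ are monotone, so $\ifalse{\Pi,\lambda-1}\subseteq\ifalse{\Pi,\lambda}$. The essential simplification provided by the hypothesis $C = C_{MIN}$ is that $Prod(C[p]) = \{\{\overline p\}\}$ for every $p$, so a defeasible rule $\{q_1,\ldots,q_n\}\defrule p$ of $D$ has the single translate $p \neck \naf\,\overline p, q_1,\ldots,q_n$ in $\Pi$, strict rules translate verbatim, and $\Pi$ has no other rules with a given head.

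For the \emph{well-founded} half, suppose $p \in \itrue{\Pi,\lambda} = T_\Pi(\tva{\Pi,\lambda-1})$, witnessed by a rule $r' \in \Pi$ with head $p$, and let $r$ be the rule of $D$ with $r' \in trans(r)$. When $r$ is strict, $body(r) = body(r')^{+} \subseteq \itrue{\Pi,\lambda-1} \subseteq \itrue{D,WF}$ by the induction hypothesis, so $r$ is a witness of provability for $p$ \emph{wrt} $D$ and $\tva{D,WF}$ and hence $p \in T_D(\tva{D,WF}) = \itrue{D,WF}$ (the last equality because $\tva{D,WF}$ is a fixpoint of $W_D$). When $r$ is defeasible, $body(r) = body(r')^{+}\subseteq \itrue{D,WF}$ as before, and $body(r')^{-} = \{\overline p\}\subseteq\ifalse{\Pi,\lambda-1}\subseteq\ifalse{D,WF}$; this is precisely the hypothesis of Lemma~\ref{completenessLemma}, which gives $p \in \itrue{D,WF}$.

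For the \emph{unfounded} half, suppose $p \in \ifalse{\Pi,\lambda} = U_\Pi(\tva{\Pi,\lambda-1})$. I would prove that the set $S := \ifalse{\Pi,\lambda}$ is itself unfounded \emph{in the ADL sense} \emph{wrt} $D$ and $\tva{D,WF}$; since $U_D(\tva{D,WF}) = \ifalse{D,WF}$, this yields $S \subseteq \ifalse{D,WF}$, and in particular $p \in \ifalse{D,WF}$. Fix $b \in S$. Since $S$ is an unfounded set for $\Pi$ \emph{wrt} $\tva{\Pi,\lambda-1}$ and $\ifalse{\Pi,\lambda-1}\subseteq S$, each $\Pi$-rule with head $b$ has either a positive body literal in $S$ or a default body literal in $\itrue{\Pi,\lambda-1}$. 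If $r \in R_s[b]$, its translate has empty negative body, so the first alternative gives $body(r)\cap S \neq \myemptyset$, discharging clause~(1) of the unfounded-set definition. If $r \in R_d[b]$ with translate $b \neck \naf\,\overline b, q_1,\ldots,q_n$, then either some $q_i \in S$, discharging clause~(2a), or $\overline b \in \itrue{\Pi,\lambda-1} \subseteq \itrue{D,WF} = T_D(\tva{D,WF})$, so there is a witness of provability $s$ for $\overline b$ \emph{wrt} $D$ and $\tva{D,WF}$: if $s$ is strict, then (with $c = \{b,\overline b\}$, the only member of $C[b]$) clause~(2b) of the ADL unfounded-set definition holds; if $s$ is defeasible, then, because $\prec = \myemptyset$, being a witness of provability forces $body(t)\cap\ifalse{D,WF}\neq\myemptyset$ for \emph{every} $t \in R[b]$, in particular for $r$, which again discharges clause~(2a). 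Generalizing over $b$ shows $S$ is unfounded \emph{wrt} $D$ and $\tva{D,WF}$, completing the induction.

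The step I expect to be the main obstacle is exactly this last case split: when the witness of provability for $\overline b$ turns out to be a \emph{defeasible} rule rather than a strict one, clause~(2b) of the ADL unfounded-set condition---which, since $\prec=\myemptyset$, demands a \emph{strict} supporting rule for $\overline b$---cannot be satisfied directly, and one has to observe instead that a defeasible witness of provability for $\overline b$ necessarily kills every rule for $b$ through $\ifalse{D,WF}$, so that clause~(2a) applies. Keeping straight which disjunct of the unfounded-set definition each subcase discharges, and checking that the minimality of the conflict sets is what makes these disjuncts line up, is where the care lies; the rest is bookkeeping against the translation.
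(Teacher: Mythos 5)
Your proposal is correct and follows essentially the same route as the paper's own proof: induction along the $W_\Pi$-sequence, with the strict/defeasible split in the well-founded case resolved by Lemma~\ref{completenessLemma} exactly as in the paper, and the unfounded case handled by showing $\ifalse{\Pi,\lambda}$ is an ADL-unfounded set \emph{wrt} $D$ and $\tva{D,WF}$, including the same key observation that a defeasible witness of provability for $\overline{b}$ forces $body(r)\cap\ifalse{D,WF}\neq\myemptyset$ for every rule $r$ for $b$. The only difference is presentational: you are somewhat more explicit than the paper about which clause of the unfounded-set definition each subcase discharges.
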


\begin{proof*}

\noindent
The proof is by induction on the sequence $(\tva{\Pi,0}$, $\tva{\Pi,1}$,$\ldots)$, showing that for all $\lambda \geq 0$, if $p \in \itrue{\Pi,\lambda}$ ($p \in \ifalse{\Pi,\lambda}$), then $p \in \itrue{D,WF}$ ($p \in \ifalse{D,WF}$). Since $\tva{D,0} = \tva{\Pi,0}$, the clam holds for $\lambda = 0$.  Suppose it holds for all $\kappa < \lambda$.  We may assume that $\lambda$ is a successor ordinal.

\begin{enumerate}
\item Suppose $p \in \itrue{\Pi,\lambda}$.  Then there is a rule $s \in trans(r)$ for some $r \in R[p]$ such that $body(s)^{+}\subseteq \itrue{\Pi,\lambda-1}$ and $body(s)^{-} \subseteq \ifalse{\Pi,\lambda-1}$.  By inductive hypothesis, $body(s)^{+}\subseteq \itrue{D,WF}$ and $body(s)^{-} \subseteq \ifalse{D,WF}$.  If $body(s)^{-} = \myemptyset$, then $r \in R_s[p]$ and $body(r) = body(s)$, and so $p \in \itrue{D,WF}$ by definition of $\tva{D,WF}$ and $T_D$. If $body(s)^{-} \neq \myemptyset$, since conflict sets are minimal, it must be that  $body(s)^{-} = \{\overline{p}\}$.  And so $\overline{p} \in \ifalse{D,WF}$.  Since $body(s)^{+} = body(r)$, by Lemma \ref{completenessLemma}, $p \in \itrue{D,WF}$.

\item Suppose $p \in \ifalse{\Pi,\lambda}$.
Let $a$ be a literal of $\ifalse{\Pi, \lambda}$ and let $r' \in trans(r)$ for some $r \in R[a]$. If $r \in R_s[a]$, then since $U_\Pi$ is monotone and $\ifalse{\Pi,\lambda}$ is unfounded relative to $\tva{\Pi,\lambda-1}$, there exists a $b \in body(r')^{+}$ such that $b \subseteq \ifalse{\Pi,\lambda}$.  Thus there exists a $b \in body(r)$ such that $b \in  \ifalse{\Pi,\lambda}$.

Suppose that $r$ is defeasible. Then either (1) there exists a classical $b \in body(r')$ such that $b \in \ifalse{\Pi,\lambda}$, or else (2) the  literal $\naf \overline{a}$ appears in $body(r')$ and $\overline{a} \in \itrue{\Pi,\lambda-1}$. If (1), then
$body(r) \cap \ifalse{\Pi,\lambda} \neq \myemptyset$. If (2) then by the inductive hypothesis $\overline{a} \in \itrue{D,WF}$, and so there must be
a rule $s \in R[\overline{a}]$ such that $body(s) \subseteq \itrue{D,WF}$ and either (2.1) $s$ is strict or else (2.2) for all rules
$t \in R[a]$ (including $t = r$), $body(t) \cap \ifalse{D,WF} \neq \myemptyset$.

Generalizing on $r$, for each rule $r \in R_s[a]$, $body(r) \cap (\ifalse{\Pi,\lambda} \cup \ifalse{D;WF}) \neq \myemptyset$. For each $r \in
R_d[a]$, either
$body(r) \cap (\ifalse{\Pi,\lambda} \cup \ifalse{D;WF}) \neq \myemptyset$ or else there exists an $s \in R_s[\overline{a}]$ and $body(s) \subseteq
\itrue{D,WF}$. Generalizing on $a$,  $\ifalse{\Pi,\lambda}$ is   unfounded  \emph{wrt} $D$ and $\tva{D,WF}$, and so
 $\ifalse{\Pi,\lambda}\subseteq U_D(\tva{D,WF})$. Since $\tva{D,WF}$ is a fixpoint of $W_D$, we have $\ifalse{\Pi,\lambda}\subseteq \ifalse{D,WF}$ and
hence $p \in \ifalse{D,WF}$. \mathproofbox
\end{enumerate}
\end{proof*}

\section{Translating Logic Programs into Defeasible Theories}\label{LPtoDL}

A translation in the other direction is also possible.
That is, normal logic programs under the WFS can also be translated into defeasible theories under ADL so that the canonical models of each agree. We show this by first translating the normal logic program into
an equivalent extended logic program  encoding the closed-world assumption.  We then show the equivalence
between the ADL theory and the extended program.  Below, though  both $\naf$ and $\neg$ appear  in the extended logic program, we intend the original WFS to be used on the programs---a literal $\neg p$ is simply taken as another atom.

\begin{definition}Let $\Pi$ be a normal program. The \emph{explicit version} of $\Pi$
is the smallest extended program $\Phi$ such that
\begin{enumerate}
\item If $p \neck  a_1,\ldots,a_n, \naf  b_1,\ldots,\naf b_m$ appears in $\Pi$, then\\
 $p \neck  a_1,\ldots,a_n, \neg b_1,\ldots,\neg b_m$ appears in $\Phi$.

\item For each $p \in At(\Pi)$, the rule $\neg p \neck \naf p$ appears in $\Phi$.
\end{enumerate}
\end{definition}
The following lemmas relate the well-founded models of $\Pi$ and $\Phi$ and make the translation of $\Pi$ into a defeasible theory $D_\Pi$ apparent (Definition \ref{labelDefDL2LP}).

\begin{lemma}\label{explicitNF1}
Let $\Pi$ be a normal program and $\Phi$ its explicit version. For any $b \in At(\Pi)$ and any
ordinal $\lambda \geq 0$,
\begin{enumerate}
\item $\neg b \in \itrue{\Phi,\lambda}$ \emph{iff}
 there exists a $\kappa < \lambda$ such that $b \in \ifalse{\Phi,\kappa}$.
\item $\neg b \in \ifalse{\Phi,\lambda}$ \emph{iff}
 there exists a $\kappa < \lambda$ such that $b \in \itrue{\Phi,\kappa}$.
\end{enumerate}
\end{lemma}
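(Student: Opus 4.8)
The statement to prove is Lemma~\ref{explicitNF1}, which relates the well-founded model of a normal program $\Pi$ to that of its explicit version $\Phi$. The key point is that in $\Phi$, the only rules with head $\neg b$ (for $b \in At(\Pi)$) are those of the form $\neg b \neck \naf b$ added by clause~2 of the definition of the explicit version --- there are no other rules producing $\neg b$ (all rules from clause~1 have heads in $At(\Pi)$, which are ordinary atoms, and bodies contain $\neg b_i$ only as default-negation-free literals). So the status of $\neg b$ at any stage is completely determined by the status of $b$ at earlier stages.

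The plan is a simultaneous induction on the ordinal $\lambda$, proving both items together. For $\lambda = 0$ both sides are vacuously false (empty interpretation, and no $\kappa < 0$). For a successor ordinal $\lambda$, I would argue as follows. For item~1: $\neg b \in \itrue{\Phi,\lambda} = T_\Phi(\tva{\Phi,\lambda-1})$ iff some rule with head $\neg b$ has its positive body in $\itrue{\Phi,\lambda-1}$ and its default body in $\ifalse{\Phi,\lambda-1}$; the only such rule is $\neg b \neck \naf b$, so this holds iff $b \in \ifalse{\Phi,\lambda-1}$, which (taking $\kappa = \lambda - 1$, and noting monotonicity of $W_\Phi$ for the converse when $\kappa < \lambda - 1$) is equivalent to ``there exists $\kappa < \lambda$ with $b \in \ifalse{\Phi,\kappa}$.'' For item~2: $\neg b \in \ifalse{\Phi,\lambda} = U_\Phi(\tva{\Phi,\lambda-1})$ iff $\neg b$ lies in the greatest unfounded set wrt $\tva{\Phi,\lambda-1}$; since the only rule for $\neg b$ is $\neg b \neck \naf b$, the unfounded-set condition for $\neg b$ reduces to ``$b \in \itrue{\Phi,\lambda-1}$'' (condition~2 of the unfounded-set definition, the default-literal clause), and one checks that $\{\neg b : b \in \itrue{\Phi,\lambda-1}\}$ together with the rest of the greatest unfounded set is itself unfounded, while no $\neg b$ with $b \notin \itrue{\Phi,\lambda-1}$ can belong to an unfounded set. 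Again monotonicity handles the equivalence with the existential form over $\kappa < \lambda$. For a limit ordinal $\lambda$, both $\itrue{\Phi,\lambda}$ and $\ifalse{\Phi,\lambda}$ are the unions of the earlier stages, and the existential statements ``there exists $\kappa < \lambda$ \ldots'' are manifestly preserved under such unions, so the claim follows directly from the induction hypothesis applied to each $\kappa < \lambda$.

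The main obstacle --- really the only delicate point --- is the unfounded-set direction (item~2) at successor stages: one must be careful that deciding membership of $\neg b$ in the \emph{greatest} unfounded set $U_\Phi(\tva{\Phi,\lambda-1})$ is genuinely local, i.e. that whether $\neg b$ is in some unfounded set $S$ does not interact with which other literals are in $S$. Because the unique rule $\neg b \neck \naf b$ has an empty positive body, condition~1 of the unfounded-set definition (a positive body literal in $\ifalse{} \cup S$) can never be satisfied, so only condition~2 ($b \in \itrue{\Phi,\lambda-1}$) matters, and this mentions neither $S$ nor any other $\neg b'$. Hence the set of all such $\neg b$ can be freely added to or removed from any unfounded set, giving $\neg b \in U_\Phi(\tva{\Phi,\lambda-1})$ iff $b \in \itrue{\Phi,\lambda-1}$. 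Once this locality observation is in hand, the rest is routine bookkeeping with ordinals and the monotonicity of $W_\Phi$ established in Section~\ref{wfs}.
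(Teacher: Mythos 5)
Your proof is correct and takes essentially the same route as the paper's: both arguments hinge on the fact that $\neg b \neck \naf b$ is the only rule of $\Phi$ with head $\neg b$, read off the immediate-consequence and unfounded-set conditions at the relevant successor stage, and use monotonicity of the $W_\Phi$-sequence to pass between that stage and the existential form over $\kappa < \lambda$. Your explicit justification of why membership of $\neg b$ in the \emph{greatest} unfounded set is local (empty positive body, so only the default-literal clause matters) is, if anything, slightly more careful than the paper's one-line assertion of the same fact.
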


\begin{proof*}
\begin{enumerate}
\item Suppose $\neg b \in \itrue{\Phi, \ordK}$ for some ordinal $\ordK$. Then there exists a least successor ordinal $\ordJ \leq \ordK$ such that $\neg b \in \itrue{\Phi, \ordJ}$.  As $\neg b \neck \naf b$ is the only rule with head $\neg
b$, it must be the case that  $b \in \ifalse{\Phi,\ordJ-1}$.

Now suppose  there is an ordinal $\ordJ < \ordK$  such that $b \in \ifalse{\Phi,\ordJ}$.
Since $\neg b \neck \naf b$ is a rule in $\Phi$, it must be the
case that $\neg b \in \itrue{\Phi,\ordJ+1}$. Either $\ordJ+1 = \ordK$, or else by monotonicity of the sequence $(\tva{})$ we have $\neg b \in \itrue{\Phi,\ordK}$.

\item Suppose $\neg b \in \ifalse{\Phi, \ordK}$. Then there exists a least successor ordinal $\ordJ \leq \ordK$ such that $\neg b \in \ifalse{\Phi, \ordJ}$.  Since $\neg b \neck \naf b$ is the only rule in $\Phi$ with head $\neg b$, it must be the
case that $b \in \itrue{\Phi,\ordJ-1}$.

Now suppose  there is an ordinal $\ordJ < \ordK$ such that $b \in \itrue{\Phi,\ordJ}$.  As $\neg b \neck \naf b$ is the only  rule of $\Phi$ with head $b$, it must be the case that  $\neg b \in \ifalse{\Phi,\ordJ+1}$.
  Either $\ordJ+1 = \ordK$, or else by monotonicity we have $\neg b \in \ifalse{\Phi,\ordK}$. \mathproofbox
\end{enumerate}
\end{proof*}

\noindent{}It immediately follows from the above Lemma that $\Phi \dsmodels_{WFS} b$ \emph{iff} $\Phi \dsrefutes_{WFS} \neg b$, and $\Phi \dsmodels_{WFS} \neg b$ \emph{iff} $\Phi \dsrefutes_{WFS} b$.

\begin{lemma}\label{explicitNF2}
Let $\Pi$ be a normal program and $\Phi$ the explicit version of $\Pi$. For any $b \in At(\Pi)$

\begin{enumerate}
\item $\Pi \dsmodels_{WFS} b$ \emph{iff} $\Phi \dsmodels_{WFS} b$.
\item $\Pi \dsrefutes_{WFS} b$ \emph{iff} $\Phi \dsrefutes_{WFS} b$.
\end{enumerate}
\end{lemma}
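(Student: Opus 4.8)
The plan is to prove both biconditionals simultaneously by a single transfinite induction on the common construction sequences of $\Pi$ and $\Phi$, and then package the result as an equality of the well-founded models restricted to $At(\Pi)$. Concretely, I would show by induction on $\lambda$ that for every atom $b \in At(\Pi)$ we have $b \in \itrue{\Pi,\lambda}$ iff $b \in \itrue{\Phi,\lambda}$, and $b \in \ifalse{\Pi,\lambda}$ iff $b \in \ifalse{\Phi,\lambda}$. Since both sequences are monotone and converge to their respective well-founded models, taking the limit over $\lambda$ then yields the statement. The base case $\lambda = 0$ is immediate since both interpretations are $\langle\myemptyset,\myemptyset\rangle$, and the limit-ordinal case follows coordinatewise from the induction hypothesis because unions commute with the biconditional here.

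The successor step is where the work lies, and it splits into the $T$-part and the $U$-part. For $T$: a rule for $b$ in $\Phi$ has the form $b \neck a_1,\ldots,a_n,\neg b_1,\ldots,\neg b_m$, coming from a rule $b \neck a_1,\ldots,a_n,\naf b_1,\ldots,\naf b_m$ of $\Pi$. To fire it in $\Phi$ we need each $a_i \in \itrue{\Phi,\lambda-1}$ and each $\neg b_j \in \itrue{\Phi,\lambda-1}$; by the induction hypothesis the $a_i$ condition matches the $\Pi$ side, and by Lemma \ref{explicitNF1}(1) together with the induction hypothesis, $\neg b_j \in \itrue{\Phi,\lambda-1}$ corresponds exactly to the existence of some $\kappa < \lambda-1$ with $b_j \in \ifalse{\Phi,\kappa}$, hence $b_j \in \ifalse{\Pi,\kappa} \subseteq \ifalse{\Pi,\lambda-1}$ — which is precisely the condition $\naf b_j$ demands in $\Pi$ (using monotonicity of the $U$ sequence). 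The reverse direction is symmetric. Note the rules $\neg p \neck \naf p$ of $\Phi$ never have an atom of $At(\Pi)$ as head, so they do not contribute to $\itrue{\Phi,\cdot}$ restricted to $At(\Pi)$; this is why Lemma \ref{explicitNF1} suffices to keep the negative literals under control.

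For the $U$-part I would argue that a set $S \subseteq At(\Pi)$ is unfounded in $\Pi$ wrt $\tva{\Pi,\lambda-1}$ iff it is unfounded in $\Phi$ wrt $\tva{\Phi,\lambda-1}$, so the greatest unfounded sets coincide. Checking the unfounded-set conditions: for $p \in S$ and a rule $r$ for $p$, condition (1) (some non-default body atom in $\ifalse{} \cup S$) is unchanged between $\Pi$ and $\Phi$ by the induction hypothesis; condition (2) in $\Pi$ (some $\naf b_j$ with $b_j \in \itrue{\Pi,\lambda-1}$) translates in $\Phi$ to the non-default literal $\neg b_j$ being in $\ifalse{\Phi,\lambda-1}$, which by Lemma \ref{explicitNF1}(2) and the induction hypothesis is equivalent to $b_j \in \itrue{\Phi,\kappa}$ for some $\kappa < \lambda-1$, i.e. $b_j \in \itrue{\Pi,\lambda-1}$. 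So the two unfounded-set notions agree on subsets of $At(\Pi)$, their greatest unfounded sets intersect $At(\Pi)$ in the same set, and $\ifalse{\Pi,\lambda} \cap At(\Pi) = \ifalse{\Phi,\lambda} \cap At(\Pi)$. The main obstacle is bookkeeping the off-by-one ordinal shifts introduced by the detour rules $\neg p \neck \naf p$ (a negative literal's status at stage $\lambda$ reflects the atom's status strictly before $\lambda$), but Lemma \ref{explicitNF1} is tailored precisely for this, so once it is invoked carefully the argument goes through; monotonicity of both sequences absorbs the resulting slack.
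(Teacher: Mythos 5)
There is a genuine gap: your induction invariant --- that for every $\lambda$ and every $b \in At(\Pi)$, $b \in \itrue{\Pi,\lambda}$ \emph{iff} $b \in \itrue{\Phi,\lambda}$ (and likewise for $\ifalse{}$) --- is false, and the ``off-by-one'' you flag at the end is not slack that monotonicity can absorb while the claim is stated as a same-stage biconditional. Concretely, take $\Pi = \{p \neck \naf q\}$. Then $\tva{\Pi,1} = \langle \myemptyset, \{q\}\rangle$ and $p \in \itrue{\Pi,2}$. In $\Phi = \{p \neck \neg q,\ \neg q \neck \naf q,\ \neg p \neck \naf p\}$ we also get $\tva{\Phi,1} = \langle\myemptyset,\{q\}\rangle$, but at stage $2$ only $\neg q$ enters $\itrue{\Phi,2}$; the rule $p \neck \neg q$ cannot fire until $\neg q$ is already well-founded, so $p$ first appears in $\itrue{\Phi,3}$. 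Hence $p \in \itrue{\Pi,2}$ while $p \notin \itrue{\Phi,2}$, and your base-plus-successor induction breaks at the first interesting step. The same delay defeats the claim that the greatest unfounded sets agree at matched stages: the $\Pi$-side condition $b_j \in \itrue{\Pi,\lambda-1}$ corresponds on the $\Phi$-side to $\neg b_j \in \ifalse{\Phi,\lambda}$, one stage too late.

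The repair is exactly what the paper does: drop the attempt to synchronize stages and prove two separate one-directional inductions, each sending a \emph{stage} of one sequence into the \emph{well-founded model} (the limit) of the other --- i.e.\ show $p \in \itrue{\Pi,\lambda} \Rightarrow p \in \itrue{\Phi,WF}$ and $p \in \ifalse{\Pi,\lambda} \Rightarrow p \in \ifalse{\Phi,WF}$ by induction on the $\Pi$-sequence, and then the converse implications by a second induction on the $\Phi$-sequence. Against the fixed target $\tva{\Phi,WF}$ the one-step detour through $\neg b \neck \naf b$ costs nothing, because Lemma~\ref{explicitNF1} and monotonicity let you place $\neg b_j$ in $\itrue{\Phi,WF}$ whenever $b_j \in \ifalse{\Phi,WF}$ regardless of at which ordinal that happened; your uses of Lemma~\ref{explicitNF1} and your unfounded-set bookkeeping are otherwise essentially the right calculations, just aimed at the wrong invariant. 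Alternatively you could keep a single induction but weaken the hypothesis to ``$b \in \itrue{\Pi,\lambda}$ implies $b \in \itrue{\Phi,\eta}$ for \emph{some} $\eta$'' (and conversely), which amounts to the same thing.
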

\begin{proof*}
\noindent{}\textbf{(LR)}  The proof is by induction on the sequence $(\tva{\Pi, 0}, \tva{\Pi, 1}$, $\ldots)$.  Suppose for all $\kappa{} < \ordK$ and all $p \in At(\Pi)$, if $p \in \itrue{\Pi, \kappa{}}$, then $p \in \itrue{\Phi,WF}$; if  $p \in \ifalse{\Pi, \kappa{}}$ then $p \in \ifalse{\Phi, WF}$.  We may assume \emph{wlog} that $\ordK$ is a successor ordinal.

\begin{enumerate}
\item
 Suppose  $p \in \itrue{\Pi, \ordK}$. Then there is a rule $r$ with head $p$
such that $body(r)^{+} \subseteq \itrue{\Pi, \ordK-1}$ and $body(r)^{-}
\subseteq \ifalse{\Pi, \ordK-1}$. Let $r'$ be the rule of $\Phi$
corresponding to $r$. By the inductive hypothesis, $body(r)^{+} \subseteq
\itrue{\Phi,WF}$ and $body(r)^{-} \subseteq \ifalse{\Phi,WF}$.  By  Lemma \ref{explicitNF1} for each $q \in body(r)^{-}$  it follows that $\neg q \in  \itrue{\Phi,WF}$.  Since $body(r)^{+} \subseteq \itrue{\Phi, WF}$ and for each $q \in body(r)^{-}$ we have $\neg q \in  \itrue{\Phi,WF}$, it must be the case that $body(r') \subseteq \itrue{\Phi, WF}$.  Since $r'$ is strict, then by definition of $T_{\Phi}$ and $\tva{\Phi,WF}$,  $p \in \itrue{\Phi,WF}$.

\item Suppose $p \in \ifalse{\Pi,\ordK}$.  Let $q \in At(\Pi)$ be any literal such that $q \in \ifalse{\Pi,\ordK}$.  Then for
all $r\in R_\Pi[q]$ there is an $a \in
body(r)^{+}$ such that $a \in \ifalse{\Pi,\ordK}$ or else a
$\naf b\in body(r)^{-}$ such that $b \in \itrue{\Pi,\ordK-1}$.  If $b \in \itrue{\Pi,\ordK-1}$, then by the inductive hypothesis $b \in \itrue{\Phi, WF}$ and so from Lemma \ref{explicitNF1}  $\neg b \in \ifalse{\Phi,WF}$.  Generalizing, for all rules $r$ for $q$, each
corresponding rule $r'$ has a classical literal $a \in body(r')$
such that $a \in \ifalse{\Pi,\ordK}$ or else a (still classical literal)
$\neg b$ such that  $\neg b \in \ifalse{\Phi,WF}$. Generalizing on
$q$,   $\ifalse{\Pi,\ordK}$ is  unfounded  \emph{wrt}
 $\Phi$ and $\tva{\Phi,WF}$.  As such,   $\ifalse{\Pi,\ordK} \subseteq U_\Phi(\tva{\Phi,WF}) = \ifalse{\Phi,WF}$, and so $p \in \ifalse{\Phi,WF}$.

\end{enumerate}

\noindent{}\textbf{(RL)}  The proof is by induction on the sequence $(\tva{\Phi, 0}, \tva{\Phi, 1}$, $\ldots)$.  Suppose for all $\ordJ < \ordK$ and $p \in At(\Pi)$, if
 $p \in \itrue{\Phi, \ordJ}$, then  $p \in \itrue{\Pi, WF}$; if  $p \in \ifalse{\Phi, \ordJ}$ then  $p \in \ifalse{\Pi, WF}$. We may assume \emph{wlog} that $\ordK$ is a successor ordinal. Let $p$ be any atom of $At(\Pi)$.

\begin{enumerate}
\item Suppose $p \in \itrue{\Phi, \ordK}$.  Then there is a rule $r'$ with
head $p$ such that $body(r') \subseteq \itrue{\Phi,\ordK-1}$.  By Lemma \ref{explicitNF1}, for each $\neg b \in
body(r')$, we have $b  \in \ifalse{\Phi,\ordG}$ for some $\ordG < \ordK$. Let $r$ be the rule of $\Pi$
corresponding to $r'$. By the inductive hypothesis, $body(r)^{+} \subseteq
\itrue{\Pi,WF}$ and $body(r)^{-} \subseteq \ifalse{\Pi,WF}$. By definition of $T_{\Pi}$ and $\tva{\Pi,WF}$,  $p \in \itrue{\Pi,WF}$.

\item Now suppose $p \in \ifalse{\Phi,\ordK}$ and let $q$ be any literal such that $q \in \ifalse{\Phi,\ordK}$.  If $q$ is a classical negative literal, then no rules for $q$ appear in $\Pi$.  Suppose $q$ is an atom. Since $q \in
\ifalse{\Phi,\ordK}$, for all rules $r'$ with head $q$ there is a
classical literal $a \in body(r)$ such that $a \in \ifalse{\Phi,\ordK}$.
If $a$ is of the form $\neg b$, then by Lemma \ref{explicitNF1}, $b \in
\itrue{\Phi,\ordG}$ for some $\ordG < \ordK$. By the inductive hypothesis, each such $b$ is in
$\itrue{\Pi,WF}$. Recall that if $\neg b$ appears in the body of $r'$, then $\naf b$ appears in the corresponding rule $r$ of $\Pi$.  Generalizing on $r'$, for each rule $r$ in $\Pi$ with head $q$, there is an $a\in body(r)^{+}$ such that $a \in
\ifalse{\Phi,\ordK}$, or else a $b \in body(r)^{-}$ such that $b
\in \itrue{\Pi,WF}$. Generalizing on $q$, $\ifalse{\Phi,\ordK}$ is
unfounded  \emph{wrt} $\Pi$ and $\tva{\Pi,WF}$.  As such, $\ifalse{\Phi,\ordK} \subseteq U_\Pi(\ifalse{\Pi, WF}) = \ifalse{\Pi, WF}$, and so $p \in \ifalse{\Pi,WF}$.\mathproofbox
\end{enumerate}
\end{proof*}

\begin{definition}\label{labelDefDL2LP} Let $\Pi$ be a normal logic program. If rule $r$

\begin{center}
$p \neck  a_1,\ldots,a_n, \naf b_1,\ldots,\naf b_m$
\end{center}

 appears in $\Pi$, then $r_{D_\Pi}$ is the rule

\begin{center}
$\{a_1,\ldots,a_n, \neg b_1,\ldots,\neg b_m\} \rightarrow p$
\end{center}
\end{definition}

\begin{definition}
If  $\Pi$ is a normal logic program, then the defeasible theory translation $D_\Pi$ of $\Pi$ is $\langle Str\cup Pr, C_{MIN}, \myemptyset\rangle$, where

\begin{enumerate}
\item $Str = \{r_{D_\Pi}|r \in \Pi\}$.
\item $Pr = \{\myemptyset \Rightarrow \neg p|p \in At(\Pi)\}$.
\end{enumerate}

\end{definition}

The default literals in the program have become presumptions in the defeasible theory. The
rules of the original program are strict in the defeasible theory. It should be obvious that translating $D_\Pi$ back into a logic
program using the Brewka inspired scheme yields $\Phi$. Given the soundness and completeness
 results of the last section and also Lemma \ref{explicitNF1}, it follows that $p$ is well-founded in $D_\Pi$ under ADL
if and only if $\neg p$ is unfounded under ADL, and $\neg p$ is well-founded in ADL if and only if $p$ is unfounded in
ADL. Given Lemma \ref{explicitNF2}, the
results of $D_\Pi$ under ADL agree with those of $\Pi$ \emph{wrt} $At(\Pi)$.

\begin{proposition}
If $\Pi$ is a normal logic program, then for
any $p \in At(\Pi)$,

\begin{enumerate}
\item $D_\Pi \asmodels p$ iff $D_\Pi \asrefutes \neg p$,
\item $D_\Pi \asmodels \neg p$ iff $D_\Pi \asrefutes p$
\end{enumerate}

\end{proposition}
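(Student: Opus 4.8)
The plan is to reduce the claim to the corollary of Lemma~\ref{explicitNF1} already noted in the text, using the soundness and completeness results of Section~\ref{sec:SoundnessAndCompleteness} as a bridge between $D_\Pi$ under ADL and its logic program translation under the WFS. First I would make precise the remark that translating $D_\Pi$ back into a logic program yields the explicit version $\Phi$ of $\Pi$. Each strict rule $\{a_1,\ldots,a_n,\neg b_1,\ldots,\neg b_m\}\rightarrow p$ of $Str$ translates, by the first clause of the definition of the logic program translation, to $p\neck a_1,\ldots,a_n,\neg b_1,\ldots,\neg b_m$, which is exactly the first clause of the explicit version; and each presumption $\myemptyset\Rightarrow\neg p$ of $Pr$ translates by the second clause, where, since $D_\Pi$ has only minimal conflict sets, $C[\neg p]=\{\{p,\neg p\}\}$ and hence $Prod(C[\neg p])=\{\{p\}\}$, producing the single rule $\neg p\neck\naf p$, which is the second clause of the explicit version. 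Thus $\Pi_{D_\Pi}=\Phi$.

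Next I would invoke Propositions~\ref{soundness} and~\ref{completeness}. The theory $D_\Pi$ has $C=C_{MIN}$, contains no defeaters, and has empty priority relation, and every $C[q]$ is finite, so both the soundness and the completeness hypotheses are met. Therefore, for every literal $q\in Lit(D_\Pi)$, we have $D_\Pi\asmodels q$ iff $\Phi\dsmodels_{WFS}q$, and $D_\Pi\asrefutes q$ iff $\Phi\dsrefutes_{WFS}q$. Combining this with the corollary of Lemma~\ref{explicitNF1} (that $\Phi\dsmodels_{WFS}b$ iff $\Phi\dsrefutes_{WFS}\neg b$, and $\Phi\dsmodels_{WFS}\neg b$ iff $\Phi\dsrefutes_{WFS}b$, for $b\in At(\Pi)$), part~1 follows by the chain $D_\Pi\asmodels p$ iff $\Phi\dsmodels_{WFS}p$ iff $\Phi\dsrefutes_{WFS}\neg p$ iff $D_\Pi\asrefutes\neg p$, and part~2 follows by the chain $D_\Pi\asmodels\neg p$ iff $\Phi\dsmodels_{WFS}\neg p$ iff $\Phi\dsrefutes_{WFS}p$ iff $D_\Pi\asrefutes p$.

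The only step requiring genuine care, and the one I would write out most carefully, is the identification $\Pi_{D_\Pi}=\Phi$ — specifically checking that the $Prod$ construction applied to the presumptions $\myemptyset\Rightarrow\neg p$ yields precisely the closed-world rules $\neg p\neck\naf p$ and no others, which is where minimality of the conflict sets of $D_\Pi$ is used. Everything after that is a direct appeal to Propositions~\ref{soundness} and~\ref{completeness} together with the corollary of Lemma~\ref{explicitNF1}. (One could instead prove the proposition directly by an induction on the sequence $(\tva{D_\Pi,0},\tva{D_\Pi,1},\ldots)$ mirroring the proof of Lemma~\ref{explicitNF1}, but the reduction above avoids duplicating that induction.)
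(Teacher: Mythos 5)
Your proposal is correct and follows essentially the same route as the paper's own proof: chain Proposition~\ref{soundness}, the corollary of Lemma~\ref{explicitNF1}, and Proposition~\ref{completeness}, after noting that the logic program translation of $D_\Pi$ is exactly $\Phi$. The only difference is that you spell out the verification that $\Pi_{D_\Pi}=\Phi$ (which the paper asserts as obvious just before the proposition), and that check is accurate.
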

\begin{proof*}Let $\Phi$ be the explicit normal form of $\Pi$, and
suppose $D_\Pi \asmodels p$.  By Prop. \ref{soundness}, $\Phi \dsmodels_{WFS} p$.
By Lemma \ref{explicitNF1}, $\Phi \dsrefutes_{WFS} \neg p$.
By Prop. \ref{completeness}, $D_\Pi \asrefutes \neg p$.
Now suppose   $D_\Pi \asrefutes p$.  By Prop. \ref{soundness}, $\Phi \dsrefutes_{WFS} p$.
By Lemma. \ref{explicitNF1}, $\Phi \dsmodels_{WFS} \neg p$.
By Prop. \ref{completeness}, $D_\Pi \asmodels \neg p$.
The remaining cases are analogous. \mathproofbox
\end{proof*}

\begin{proposition}
If $\Pi$ is a normal logic program, then for
any $p \in At(\Pi)$,

\begin{enumerate}
\item $\Pi \dsmodels_{WFS} p$ \emph{iff} $D_\Pi \asmodels p$.
\item $\Pi \dsrefutes_{WFS} p$ \emph{iff} $D_\Pi \asrefutes p$.
\end{enumerate}

\end{proposition}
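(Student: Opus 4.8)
The plan is to chain together the two bridges already established: Lemma~\ref{explicitNF2}, which relates $\Pi$ to its explicit version $\Phi$ on the atoms of $At(\Pi)$, and Propositions~\ref{soundness} and~\ref{completeness}, which relate a defeasible theory to its logic-program translation. The linchpin is the identity $\Pi_{D_\Pi} = \Phi$, i.e.\ that translating $D_\Pi$ back into a logic program by the scheme of Section~\ref{DL2LP} reproduces exactly the explicit version of $\Pi$.

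First I would check this identity carefully, since it is what licenses invoking the soundness/completeness results verbatim. The theory $D_\Pi = \langle Str\cup Pr, C_{MIN}, \myemptyset\rangle$ has no defeaters, an empty priority relation, and only minimal conflict sets; in particular $C[q]$ is finite for every $q \in Lit(D_\Pi)$, so both Proposition~\ref{soundness} and Proposition~\ref{completeness} apply to it. Running the translation of Definition in Section~\ref{DL2LP}: each strict rule $r_{D_\Pi} = \{a_1,\ldots,a_n,\neg b_1,\ldots,\neg b_m\}\rightarrow p$ produces the program clause $p \neck a_1,\ldots,a_n,\neg b_1,\ldots,\neg b_m$, which is precisely clause~(1) of $\Phi$; and for a presumption $\myemptyset \Rightarrow \neg p$ we have $C[\neg p] = \{\{p,\neg p\}\}$, hence $Prod(C[\neg p]) = \{\{p\}\}$, so it produces exactly the single clause $\neg p \neck \naf p$, which is clause~(2) of $\Phi$. (Here negative literals are treated as fresh atoms on both sides, consistently with the conventions of Sections~\ref{DL2LP} and~\ref{LPtoDL}.) Thus $\Pi_{D_\Pi} = \Phi$.

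With the identity in hand the remainder is pure transitivity of ``iff''. Fix $p \in At(\Pi)$. By Proposition~\ref{soundness} applied to $D_\Pi$, if $D_\Pi \asmodels p$ then $\Phi \dsmodels_{WFS} p$, and if $D_\Pi \asrefutes p$ then $\Phi \dsrefutes_{WFS} p$; by Proposition~\ref{completeness} the converse implications hold as well, so $D_\Pi \asmodels p$ iff $\Phi \dsmodels_{WFS} p$ and $D_\Pi \asrefutes p$ iff $\Phi \dsrefutes_{WFS} p$. Lemma~\ref{explicitNF2} gives $\Pi \dsmodels_{WFS} p$ iff $\Phi \dsmodels_{WFS} p$ and $\Pi \dsrefutes_{WFS} p$ iff $\Phi \dsrefutes_{WFS} p$. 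Concatenating these equivalences yields both clauses. There is no genuinely hard step here, the content having been front-loaded into the earlier lemmas and propositions; the only item demanding a moment's care is confirming that $\Pi_{D_\Pi}$ coincides with $\Phi$ on the nose, and a minor notational point is that Lemma~\ref{explicitNF2} is stated only for $b \in At(\Pi)$, which is exactly the range of $p$ in the statement, so no extension to negative literals is needed. This argument parallels the proof of the preceding proposition, which used the same three ingredients together with Lemma~\ref{explicitNF1}.
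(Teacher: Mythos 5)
Your proposal is correct and follows essentially the same route as the paper: the paper's proof likewise chains Proposition~\ref{soundness}, Proposition~\ref{completeness}, and Lemma~\ref{explicitNF2} through the identity $\Pi_{D_\Pi}=\Phi$ (which the paper merely asserts as obvious, whereas you verify it explicitly). Nothing is missing.
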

\begin{proof*}
Suppose $D_\Pi \asmodels p$. Then $\Phi \dsmodels_{WFS} p$ by Prop. \ref{soundness}. By Lemma \ref{explicitNF2}, $\Pi \dsmodels_{WFS} p$. Now
suppose $\Pi \dsmodels_{WFS} p$. Then $\Phi \dsmodels_{WFS} p$ by Lemma \ref{explicitNF2}. By Prop. \ref{completeness}, $D_\Pi \asmodels p$.
Again, the remaining cases are analogous.  \mathproofbox
\end{proof*}

\begin{example} A logic program $\Pi$, its explicit form $\Phi$, and its defeasible logic translation $D_\Pi$ are
shown below.

\begin{center}
\begin{tabular}{ccc}\hline
$\Pi$ & $\Phi$ & $D_\Pi$\\\hline
\begin{minipage}{.30\textwidth}
\begin{enumerate}
\item $p \neck \naf q$
\item $q \neck \naf p$
\item[]
\item[]
\end{enumerate}
\end{minipage}
 &
\begin{minipage}{.30\textwidth}
\begin{enumerate}
\item $p \neck \neg q$
\item $q \neck \neg p$
\item $\neg q \neck \naf q$
\item $\neg p \neck \naf p$
\end{enumerate}
\end{minipage}

&
\begin{minipage}{.30\textwidth}
\begin{enumerate}
\item $\{\neg q\} \rightarrow p$
\item $\{\neg p\} \rightarrow q$
\item $\myemptyset \Rightarrow \neg q$
\item $\myemptyset \Rightarrow \neg p$
\end{enumerate}
\end{minipage}\\\hline
\end{tabular}
\end{center}
\end{example}

In the rules of $\Pi$, we have replaced each $\naf a$  (where $a$ is an atom) with $\neg a$ and added the rules $\neg a \neck \naf a$. The explicitly negative literals
occur nowhere in $\Pi$. The well-founded model of both $\Pi$ and $\Phi$ is empty. In $D_\Pi$, in order to show $\neg p$, we must first show that $\neg q$ is unfounded, and furthermore, to show
$\neg q$, we must first show that $\neg p$ is unfounded. Because of this nothing can be determined in ADL about $p$, $\neg p$, $q$, or $\neg q$.

\section{Antimonotone Operators for ADL and NDL}\label{AltFixpoints}

The  account of the WFS provided above is found in \cite{vangelder91}.  It is, however, more typical today to present the WFS in terms of the so-called Gelfond-Lifschitz (GL) operator $\GLop$, which was first defined for the stable model semantics \cite{gelfond88}.  In this section, we review the definition of $\GLop$ and use it to define ambiguity blocking and propagating operators for defeasible theories. With some restrictions, these can be used to calculate the consequences of theories according to ADL and NDL. As shown in the next section, they can also be used to define stable model semantics for defeasible theories.

The GL operator $\GLop$  works with Herbrand interpretations---sets of ground atoms. If $S$ is a Herbrand interpretation,  then atom $p$ is true in $S$ if $p \in S$, and false if $p \notin S$.
If $\Pi$ is a normal logic program and $S$ a Herbrand interpretation,
then

\begin{center}$\GLop_\Pi(S) =_{def} T_{\Pi^S}\uparrow \omega$
 \end{center}

 \noindent{}where $\Pi^S$ is the so-called \emph{reduct} of $\Pi$ \emph{wrt} $S$. Specifically, $\Pi^{S}$ is the NAF-free program  obtained by

\begin{enumerate}
\item deleting from $\Pi$ all rules $r$ such that $body(r)^{-} \cap
S \neq \myemptyset$.
\item deleting all remaining default literals.
\end{enumerate}

For an NAF-free program $\Pi$, the immediate consequence operator $T_\Pi$ reduces to

\begin{center}$T_\Pi(S) =_{def} \{head(r) |$ $r \in \Pi$ and $body(r) \subseteq
S\}$\end{center}

The sequence $T_\Pi\uparrow{}0$, $T_\Pi\uparrow{}1$, $\ldots$, is  defined for ordinals $\lambda \geq 0$.

\begin{enumerate}
\item $T_\Pi\uparrow{}0 = \myemptyset$
\item$T_\Pi\uparrow{}\lambda+1 = T_\Pi(T_\Pi\uparrow{}\lambda)$ (for successor ordinals $\lambda+1$)
\item$T_\Pi\uparrow{}\lambda = \bigcup_{\kappa < \lambda} T_\Pi \uparrow \kappa$ (for limit ordinals $\lambda$)
\end{enumerate}

The set of Herbrand interpretations forms a complete lattice under $\subseteq$, and it is also the case that $T_\Pi$ is continuous on this lattice.  As such, $lfp(T_\Pi) = T_P\uparrow \omega$ \cite{vanemden}.
$T_\Pi\uparrow \omega$ is sometimes written as $Cl(\Pi)$, and so if $\Pi$ is normal, then $\gamma_\Pi(S) = Cl(\Pi^S)$.

The $\gamma$ operator is antimonotone, and so $\gamma^2$ is monotone.  As shown by Baral and Subrahmanian \citeyear{baral91}, the well-founded model of $\Pi$ can be defined in terms of $\gamma^2_\Pi$.  Specifically,
 $wfm(\Pi) = \langle\itrue{},\ifalse{}\rangle$, where $\itrue{} = lfp(\gamma_{\Pi}^2)$, and $\ifalse{} = At(\Pi) - \gamma_{\Pi}(\itrue{})$.

Like $\GLop$, the ambiguity propagating ($\dlGLop$) and blocking ($\beta$) operators for defeasible theories are defined using reducts and an immediate consequence operator. $\beta$ is defined for all defeasible theories, but $\dlGLop$ is only defined for a restricted class.  We consider $\alpha$ first.

\begin{definition}Let $D = \langle R,C,\myemptyset\rangle$ be a defeasible theory such that $R_u = \myemptyset$.  If $S \subseteq Lit(D)$, then the $\dlGLop$-reduct $D_{\dlGLop}^S$ of $D$ \emph{wrt} $S$ is the set of rules $R_s \cup R^S_{d}$, where
\begin{center}
$R_d^S = \{r| r \in R_d$ and $(\forall c \in C[head(r)])(\exists q \in c-\{head(r)\})(q \notin S)\}$
\end{center}

\end{definition}

\begin{definition}Let $R$ be a set of strict and defeasible rules taken from $D$. If $S \subseteq Lit(D)$, then
\begin{center}
$T_{R}(S) =_{def} \{p|r \in R$ and $body(r) \subseteq S\}$.\end{center}
\end{definition}

\begin{definition}If $R$ is a set of strict and defeasible rules, the sequence $T_R\uparrow 0$, $T_R\uparrow 1$, $\ldots$ is:

\begin{enumerate}
\item $T_R\uparrow 0 = \myemptyset$
\item $T_R\uparrow\lambda+1 = T_R(T_R\uparrow\lambda)$ \emph{(for successor ordinals $\lambda+1$)}
\item $T_R\uparrow \lambda = \displaystyle\bigcup_{\kappa<\lambda}T_R\uparrow\kappa$ \emph{(for limit ordinals $\lambda$)}
\end{enumerate}
\end{definition}
As with logic programs, where $R$ is a set of (defeasible and strict rules) we define $Cl(R)$ as $T_R\uparrow\omega$.

\begin{definition}Let $D = \langle R,C,\myemptyset\rangle$ be a defeasible theory such that $R_u = \myemptyset$.  For any $S \subseteq Lit(D)$,  $\dlGLop_D(S) =_{def} Cl(D_{\dlGLop}^S)$.
\end{definition}

Under the translation of defeasible theories into logic programs, there is a correspondence between $\dlGLop$ and
$\GLop$. In order for the correspondence to hold, $C[p]$ is still required to be finite for
each $p \in Lit(D)$, and both $R_u$ and $\prec$ must be empty.
\begin{proposition}\label{equivGLops1}
 If $D = \langle R,C,\myemptyset\rangle$ is a defeasible theory such that $R_u = \myemptyset$ and $C[p]$ is finite for each $p \in Lit(D)$,
and if $\Pi$ is the logic program translation of $D$, then for any $S \subseteq Lit(D)$,

\begin{center}$\dlGLop_D(S) = \GLop_\Pi(S)$.\end{center}
\end{proposition}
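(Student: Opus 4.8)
The plan is to show that the $\dlGLop$-reduct $D_{\dlGLop}^S$ and the Gelfond--Lifschitz reduct $\Pi^S$ are, modulo the standing notational conventions (negative literals treated as atoms, and a rule $A \srule p$ or $A \defrule p$ identified with the definite rule $p \neck A$), literally the same NAF-free program. Once this is established the result is immediate: $\dlGLop_D(S) = Cl(D_{\dlGLop}^S) = T_{D_{\dlGLop}^S}\uparrow\omega$ and $\GLop_\Pi(S) = Cl(\Pi^S) = T_{\Pi^S}\uparrow\omega$, and if the two rule sets coincide then $T_{\Pi^S}$ and $T_{D_{\dlGLop}^S}$ are the same operator, so their $\omega$-iterates from $\myemptyset$ agree.

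First I would handle the strict rules, which is trivial: a strict rule $\{q_1,\ldots,q_n\}\srule p$ gives rise, via $trans$, to the single rule $p \neck q_1,\ldots,q_n$, which has no default literals, hence survives both steps of the reduct $\Pi^S$ untouched; and $R_s \subseteq D_{\dlGLop}^S$ contributes the same rule on the other side. The substantive case is the defeasible rules. A defeasible rule $r = \{q_1,\ldots,q_n\}\defrule p$ gives rise in $\Pi_D$ to exactly the rules $p \neck \naf a_1,\ldots,\naf a_m,q_1,\ldots,q_n$, one for each $Q = \{a_1,\ldots,a_m\} \in Prod(C[p])$. Such a rule is deleted in the first step of the reduct iff some $a_i \in S$, i.e. iff $Q \cap S \neq \myemptyset$; if it survives, the second step strips the $\naf$-literals and leaves $p \neck q_1,\ldots,q_n$. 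So $r$ contributes $p \neck q_1,\ldots,q_n$ to $\Pi^S$ iff there is a $Q \in Prod(C[p])$ with $Q \cap S = \myemptyset$, and I would then show this last condition is equivalent to $r \in R_d^S$, i.e. to: for every $c \in C[p]$ there is a $q \in c-\{p\}$ with $q \notin S$. For the forward direction, a witnessing $Q$ contains exactly one element of each $c - \{p\}$ ($c \in C[p]$) and all of them lie outside $S$, giving the required literals. Conversely, picking for each $c \in C[p]$ a literal $q_c \in c-\{p\}$ with $q_c \notin S$ produces $Q = \{q_c \mid c \in C[p]\} \in Prod(C[p])$ with $Q \cap S = \myemptyset$; this is where finiteness of $C[p]$ is used, since it is what makes every such $Q$ a finite set and hence $\Pi$ a well-defined program in the first place. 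Contrapositively, if $r \notin R_d^S$ there is a $c^{*} \in C[p]$ with $c^{*}-\{p\} \subseteq S$, and since every $Q \in Prod(C[p])$ meets $c^{*}-\{p\}$ it meets $S$, so no translated rule of $r$ survives $\Pi^S$.

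Putting the two cases together, $\Pi^S$ is precisely the set $\{head(r)\neck body(r) \mid r \in R_s\} \cup \{head(r)\neck body(r) \mid r \in R_d^S\}$, which under the stated identifications is exactly $D_{\dlGLop}^S = R_s \cup R_d^S$; hence $T_{\Pi^S} = T_{D_{\dlGLop}^S}$ as operators on subsets of $Lit(D)$, their least fixpoints $Cl(\Pi^S)$ and $Cl(D_{\dlGLop}^S)$ coincide, and therefore $\dlGLop_D(S) = \GLop_\Pi(S)$. The only step requiring real care is the combinatorial equivalence ``$\exists Q \in Prod(C[p])$ with $Q\cap S = \myemptyset$'' $\Leftrightarrow$ ``$r \in R_d^S$'' together with its degenerate cases --- in particular that $c - \{p\} \neq \myemptyset$ for every $c \in C[p]$ (which holds because $\{p,\overline{p}\} \in C[p]$ always) and the minimal-conflict-set case where $Prod(C[p]) = \{\{\overline{p}\}\}$; everything else is routine bookkeeping about the two deletion steps of the reduct and the fact that $Cl$ depends only on the underlying set of rules.
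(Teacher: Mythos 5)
Your proof is correct, and it reorganizes the argument along a somewhat different route than the paper's. The paper proves the proposition by induction on $n$, showing $T_{\Pi^S}\uparrow n = T_{D_{\dlGLop}^S}\uparrow n$ for each $n<\omega$ and carrying out the rule-matching inside the inductive step. You instead factor out the real content as a standalone syntactic claim --- that $\Pi^S$ and $D_{\dlGLop}^S$ are the \emph{same} set of definite rules once a surviving member of $trans(r)$ is identified with $head(r)\neck body(r)$ --- after which no induction is needed, since $Cl$ applied to identical rule sets gives identical results. The pivotal combinatorial step is the same in both proofs: a defeasible rule $r$ with head $p$ satisfies the $\dlGLop$-reduct condition (for every $c\in C[p]$ some $q\in c-\{p\}$ lies outside $S$) iff some $Q\in Prod(C[p])$ is disjoint from $S$, which is exactly the condition for some rule in $trans(r)$ to survive the deletion step of the GL reduct. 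Your organization buys a cleaner separation of concerns and makes explicit where the finiteness of $C[p]$ enters. Two small cautions: a witnessing $Q$ contains \emph{at least} (not exactly) one element of each $c-\{p\}$, which is all the forward direction needs; and your justification that $c-\{p\}\neq\myemptyset$ for every $c\in C[p]$ (``because $\{p,\overline{p}\}\in C[p]$'') only covers that one conflict set --- nothing in the definition of a defeasible theory formally excludes a singleton conflict set containing $p$ alone. The paper tacitly makes the same assumption, so this is not a gap relative to the paper's own standard of rigor, but it is worth flagging as a hypothesis rather than a consequence.
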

\begin{proof*}
The proof proceeds by induction on the simple immediate consequence operator $T$ used to
compute the closure of reducts. Note that this operator is continuous and $T\uparrow \omega =
\displaystyle\bigcup_{n < \omega} T\uparrow n$, and so
it suffices to show that for each $n < \omega$,

\begin{center}$T_{\Pi^S}\uparrow n =  T_{D_{\dlGLop}^S}\uparrow n$.\end{center}

\noindent
The claim trivially holds for $n = 0$.
Suppose it holds for all $i < n$ and let $p \in T_{D_{\dlGLop}^S}\uparrow n$. Then there
is an $r \in R_{sd}[p]$ in the reduct ${D_{\dlGLop}^S}$ such that $body(r) \subseteq  T_{{D_{\dlGLop}^S}}\uparrow (n-1)$. If $r$ is strict, then there exists
an $r' \in \Pi$ such that $r' = r$ (ignoring notational differences; observe that $body(r) = body(r')$). Since $r'$ lacks default literals, $r' \in \Pi^S$. Since $body(r) \subseteq  T_{{D_{\dlGLop}^S}}\uparrow (n-1)$, by the
inductive hypothesis $body(r) \subseteq  T_{\Pi^S}\uparrow (n-1)$, and so by definition of the immediate consequence
operator $p \in T_{\Pi^S}\uparrow n$. If $r$ is defeasible, then for all $c \in C[p]$, there exists a $q \in c-\{p\}$ such that
$q \notin S$. As such, there exists a rule $r' \in trans(r)$ such that for each $\naf b$ in the body of $r'$ we have
$b \notin S$. As this is so, $r' \in  \Pi^S$
 and every default literal in the body of $r'$ has been deleted.
 And so $body(r) = body(r')$.
 Since $body(r) \subseteq  T_{{D_{\dlGLop}^S}}\uparrow (n-1)$, by the
inductive hypothesis $body(r) \subseteq  T_{\Pi^S}\uparrow (n-1)$.
 It follows that  $p \in T_{\Pi^S}\uparrow n$.

 Now suppose $p \in T_{\Pi^S}\uparrow n$. Then there is a rule $t \in \Pi^S$ such that $body(t) \subseteq  T_{\Pi^S}\uparrow (n-1)$. If $t$
corresponds to a strict rule of D, then $t \in {D_{\dlGLop}^S}$ and by inductive hypothesis $body(t) \subseteq  T_{{D_{\dlGLop}^S}}\uparrow (n-1)$
and so by definition of the immediate consequence operator $p\in  T_{{D_{\dlGLop}^S}}\uparrow n$. If $t$ corresponds to a
defeasible rule $t'$, since $t$ appears in $\Pi^S$, it must be the case that every default
literal of $t$ has been deleted. This means that for each conflict set $c \in C[p]$ there is an element
$q \in c-\{p\}$ such that $q \notin S$. Since this is so, then by definition of ${D_{\dlGLop}^S}$, $t' \in  {D_{\dlGLop}^S}$. By the inductive
hypothesis, $body(t) \subseteq  T_{{D_{\dlGLop}^S}}\uparrow (n-1)$ and so as before, $p \in  T_{{D_{\dlGLop}^S}}\uparrow n$.  \mathproofbox
\end{proof*}

Earlier, we showed a correspondence between the well-founded model for logic programs
and the well-founded model for ADL. This correspondence holds for theories with minimal conflict
sets and no defeaters or priorities on rules. Given the correspondence just shown between $\dlGLop$ and the GL-operator
$\GLop$, we can now see that $\dlGLop$ can be used to determine the consequences of ADL for these
theories. Note, however, that in Proposition \ref{equivGLops1}  conflict sets need not be minimal.  In this way, the operator defines a consequence relation that more generally corresponds to the WFS consequences than does ADL. Returning to Example \ref{ADLdiffersWFS}, the defeasible theory's well-founded model according  to $\dlGLop$ coincides with the well-founded model of the corresponding logic program. Both differ from the ADL-consequences of the theory.

As shown in \ref{elimDefeaters}, it is possible to transform a defeasible theory into an equivalent one in which defeaters, priorities on rules, and extended conflict sets do not appear. As this is so, $\dlGLop$ can in fact be used to compute the consequences of theories according to ADL. Nevertheless, this is not as satisfying as having an operator which more naturally corresponds to ADL, and we do not know at this point whether  $\dlGLop$ can be easily modified to serve this purpose.

Unlike $\dlGLop$, the blocking operator $\beta$ places no special restrictions on defeasible theories.
As shown below, the alternating fixpoint procedure defined with it can be used to compute the well-founded model according to NDL.

\begin{definition} Let $D = \langle R, C, \prec\rangle$ be a defeasible theory and $S
\subseteq Lit(D)$.   The $\beta$--reduct of $D$ \emph{wrt} $S$ (written $D_\bclos^S$) is $R_{s} \cup R_{d}^S$, where $R_d^S$ is the set of rules $r$ such that

\begin{itemize}
\item $r \in R_d$, and
\item $(\forall c \in C[head(r)])(\exists q \in c-\{head(r)\})(\forall s \in R[q])[body(s) \nsubseteq S$ or $s \prec r]$
\end{itemize}
\end{definition}
Here, no defeaters are  included in the reduct.

\begin{definition}
If $D$ is a defeasible theory and $S \subseteq Lit(D)$, then
 $\bclos_D{}(S) =_{def} Cl(D_{\bclos{}}^S)$.
\end{definition}

\begin{definition}
Let $D$ be a defeasible theory. We define the following sequence:
\begin{enumerate}
\item $X_D{}\uparrow 0 =_{def} \myemptyset$.
\item $X_D{}\uparrow \lambda + 1 =_{def} \bclos_{D}^2(X_D\uparrow \lambda)$ (for successor ordinals $\lambda + 1$).
\item $X_D{}\uparrow \lambda =_{def} \displaystyle\bigcup_{\kappa < \lambda} X_D{}\uparrow \kappa$ (for limit ordinals $\lambda$).
\end{enumerate}
\end{definition}

Below, since we will only use a single theory $D$, we will omit $D$ as a subscript, writing, e.g., $\beta(S)$ instead of $\beta_D(S)$.  Furthermore, we will omit the $\beta$ when writing the blocking reduct of $D$ wrt $S$, writing $D^S$ instead of $D_\beta^S$. The  $\alpha$--reduct is never used here, and so there will be no confusion.

\begin{proposition}\label{NDLop1}
Let $\tva{D,WF} = \langle \itrue{D,WF},
\ifalse{D,WF} \rangle$ be the \emph{wfm} of defeasible theory $D$ \emph{wrt} NDL. For
all $\lambda\geq 0$,
\begin{enumerate}
\item  if $p \in X_D\uparrow \lambda$, then $p \in \itrue{D,WF}$.
\item  if $p \notin \bclos(X_D\uparrow \lambda)$, then $p \in \ifalse{D,WF}$.
\end{enumerate}

\end{proposition}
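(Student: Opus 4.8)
The plan is to reduce Proposition~\ref{NDLop1} to two lemmas describing how the operator $\bclos$ interacts, at the well-founded model, with the two ingredients $T_D$ and $U_D$ of $W_D$. Write $\tva{D,WF}=\langle\itrue{D,WF},\ifalse{D,WF}\rangle$ for the well-founded model of $D$ under NDL. Since $\tva{D,WF}=lfp(W_D)$ is a fixpoint of $W_D$, we have $\itrue{D,WF}=T_D(\tva{D,WF})$ and $\ifalse{D,WF}=U_D(\tva{D,WF})$, and $\ifalse{D,WF}$ is by construction the \emph{greatest} unfounded set of $D$ with respect to $\tva{D,WF}$. These two identities are what the whole argument hangs on.

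The first lemma is: if $S\subseteq\itrue{D,WF}$, then $Lit(D)-\bclos(S)\subseteq\ifalse{D,WF}$. To prove it I set $M=Lit(D)-\bclos(S)$ and show directly that $M$ is an unfounded set of $D$ with respect to $\tva{D,WF}$, so that $M\subseteq\ifalse{D,WF}$. Fix $p\in M$. For any strict rule $r\in R_s[p]$, $r$ belongs to the reduct $D_\bclos^S$; since $\bclos(S)=Cl(D_\bclos^S)$ is a fixpoint of $T_{D_\bclos^S}$ and $p\notin\bclos(S)$, the body of $r$ cannot be contained in $\bclos(S)$, so $body(r)$ meets $M$. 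The same argument covers any defeasible rule $r\in R_d[p]$ that lies in the reduct. If instead $r\in R_d[p]$ is \emph{not} in the reduct, then negating the membership condition defining $R_d^S$ produces a conflict set $c\in C[p]$ such that for every $q\in c-\{p\}$ there is a rule $s\in R[q]$ with $body(s)\subseteq S$ and $s\nprec r$; since $S\subseteq\itrue{D,WF}$, this is verbatim the second disjunct of the defeasible-rule clause in the definition of ``unfounded in NDL'' evaluated at $\tva{D,WF}$. Hence $M$ satisfies the unfoundedness conditions.

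The second lemma is the converse-flavoured half: if $T\subseteq Lit(D)$ and $Lit(D)-T\subseteq\ifalse{D,WF}$, then $\bclos(T)\subseteq\itrue{D,WF}$. Because $\bclos(T)=Cl(D_\bclos^T)=T_{D_\bclos^T}\uparrow\omega$, it suffices to show $T_{D_\bclos^T}\uparrow n\subseteq\itrue{D,WF}$ by induction on $n<\omega$. At a successor stage a rule $r$ of the reduct fires with $body(r)$ already in $\itrue{D,WF}$; if $r$ is strict it is immediately a witness of provability for $head(r)$ with respect to $\tva{D,WF}$, and if $r\in R_d^T$ then the membership condition for $R_d^T$, together with the hypothesis $Lit(D)-T\subseteq\ifalse{D,WF}$ (which upgrades each ``$body(s)\nsubseteq T$'' to ``$body(s)\cap\ifalse{D,WF}\neq\myemptyset$''), shows $r$ meets the defeasible clause of the witness-of-provability definition. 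Either way $head(r)\in T_D(\tva{D,WF})=\itrue{D,WF}$.

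With these two lemmas in hand, Proposition~\ref{NDLop1} follows by transfinite induction on $\lambda$, establishing both parts at once. Part~1 for $\lambda=0$ is immediate; for $\lambda=\mu+1$ we have $X_D\uparrow\lambda=\bclos(\bclos(X_D\uparrow\mu))$, and Part~2 at $\mu$ (the inductive hypothesis) says exactly that $Lit(D)-\bclos(X_D\uparrow\mu)\subseteq\ifalse{D,WF}$, so the second lemma applies with $T=\bclos(X_D\uparrow\mu)$; for limit $\lambda$ one simply takes the union of the inclusions $X_D\uparrow\kappa\subseteq\itrue{D,WF}$ for $\kappa<\lambda$. Part~2 at $\lambda$ then follows by applying the first lemma with $S=X_D\uparrow\lambda$, which Part~1 has just shown to be contained in $\itrue{D,WF}$. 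The only real obstacle I anticipate is bookkeeping: one must carefully align the quantifier structure of the $\bclos$-reduct with that of ``unfounded set'' (for the first lemma) and of ``witness of provability'' (for the second), and keep the nesting of the outer transfinite induction and the inner induction on $n$ straight; the priority relation $\prec$ is harmless, since the ``$s\prec r$'' clauses match up on the nose in both translations.
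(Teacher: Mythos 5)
Your proof is correct and takes essentially the same route as the paper: Part~1 via an inner induction on the stages of $Cl(D_\bclos^{T})$ matching the reduct condition against the witness-of-provability definition, and Part~2 by showing the complement of $\bclos(S)$ is an NDL-unfounded set, with the two parts feeding each other across the transfinite induction exactly as in the paper. The only difference is cosmetic---you factor the two halves out as standalone lemmas about arbitrary $S\subseteq\itrue{D,WF}$ and $T$ with $Lit(D)-T\subseteq\ifalse{D,WF}$, where the paper inlines them with $S=X_D\uparrow\lambda$ and $T=\bclos(X_D\uparrow{\kappa-1})$.
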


\begin{proof*}
By definition of $X\uparrow 0$, $p \notin X\uparrow 0$ for any $p \in Lit(D)$. If
$p \notin \beta(X\uparrow 0)$ then it is impossible to
derive $p$ from the rules of $D$ under any circumstances, and so $p$ is in $\ifalse{D,WF}$. Suppose the hypothesis
holds for all $\kappa < \lambda$. We prove each case.

\begin{enumerate}
 \item Suppose $p \in X\uparrow \lambda$.  There must exist a least successor ordinal $\kappa \leq \lambda$ such that $p \in X\uparrow \kappa$. Recall that
\begin{displaymath}X\uparrow \kappa = \bclos(\bclos(X\uparrow {\kappa{}-1})) = Cl(D^{\bclos(X\uparrow {\kappa{}-1})}) = T_{D^{\bclos(X\uparrow {\kappa{}-1})}}\uparrow \omega.\end{displaymath}
Suppose that for all $i < m$, if $a \in T_{D^{\bclos(X\uparrow {\kappa{}-1})}}\uparrow i$,
then $a \in \itrue{D,WF}$  (this obviously holds for
$i = 0$). Let $p \in  T_{D^{\bclos(X\uparrow {\kappa{}-1})}}\uparrow m$.  Then there is an $r$ in
$D^{\bclos(X\uparrow {\kappa{}-1})}$ such that $body(r) \subseteq  T_{D^{\bclos(X\uparrow {\kappa{}-1})}}\uparrow (m-1)$. By inductive hypothesis, $body(r)
\subseteq \itrue{D,WF}$.

If $r$ is defeasible, then since $r \in D^{\bclos(X\uparrow {\kappa{}-1})}$, for all conflict sets $c \in C[p]$, there
exists a $q \in c - \{p\}$ such that for each rule $s \in R[q]$,
either $s \prec r$ or  $body(s) \nsubseteq  \bclos(X\uparrow {\kappa{}-1})$. If the latter, then by inductive hypothesis, $body(s) \cap  \ifalse{D,WF} \neq \myemptyset$.  So, there is a rule $r$ of $D$ such that $r$ is
strict and $body(r) \subseteq \itrue{D,WF}$, or else $r$ is defeasible,
$body(r) \subseteq \itrue{D,WF}$,  and for all conflict sets  $c \in
C[p]$, there exists a $q \in c - \{p\}$ such that for each rule $s
\in R[q]$, either $s \prec r$ or $body(s) \cap  \ifalse{D,WF} \neq \myemptyset$.  By definition of immediate
consequence in NDL and $\tva{D,WF}$, $p \in \itrue{D,WF}$.

\item  Now suppose $p \notin \bclos(X\uparrow {\lambda})$, and let  $A$ be the set of
elements not in $\bclos(X\uparrow {\lambda})$.  Let $r$ be a rule for $p$.  If
$r$ is strict, then $r$ is in the reduct of $D$ relative to
$X\uparrow \lambda$ and there is some $q \in body(r)$ such that $q \in A$ (this must be the case since $\bclos(X\uparrow {\lambda})$ is closed).
If $r$ is defeasible then $r$ is either in the reduct or not.  If it
is, then as before there is some $q \in body(r)$ such that $q \in
A$. If not, then there is a conflict set $c \in C[p]$ such that for
all $q \in c -\{p\}$, there is a rule $s \in R[q]$ such that
$body(s) \subseteq X\uparrow \lambda$ and $s \nprec r$.  From Case 1, if $body(s) \subseteq X\uparrow \lambda$ then  $body(s) \subseteq \itrue{D,WF}$.  Generalizing on $r$ and
then on $p$, for each $a \in A$ and each $r \in R_{D,sd}[a]$, either there exists a $q \in body(r)$ such that $q \in A$ or else $r \in R_d[a]$ and there exists a conflict set $c \in C[a]$ such that for each $v \in c - \{a\}$ there is a rule $s \in R[v]$ such that $body(s) \subseteq \itrue{D,WF}$ and $s \nprec r$.  It can be seen that
$A$ is unfounded under NDL with respect
to $D$ and $\tva{D,WF}$.  As such $A\subseteq U_D(\tva{D,WF}) \subseteq \ifalse{D,WF}$. Since $p \in A$, $p \in \ifalse{D,WF}$. \proofend
\end{enumerate}
\end{proof*}

\begin{proposition}\label{NDLop2}
Let  $D$ be a defeasible theory and $(\tva{D})$ the
sequence of interpretations defined for $D$ under the NDL-well-founded
semantics. For any $\lambda \geq 0$, there exists a $\eta \geq 0$ such that

\begin{enumerate}
\item if $p \in \itrue{D,\lambda}$, then $p \in X\uparrow\eta$.

\item if  $p \in
\ifalse{D,\lambda}$, then $p \notin
\bclos(X\uparrow\eta)$.
\end{enumerate}
\end{proposition}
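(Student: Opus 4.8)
The plan is to prove the statement by transfinite induction on $\lambda$, as the mirror image of Proposition~\ref{NDLop1}. Two bookkeeping facts come first. Since $\bclos$ is antimonotone, $\bclos^{2}$ is monotone, so the sequence $(X\uparrow\eta)$ is non-decreasing, and consequently both target properties are \emph{persistent} in $\eta$: if $p \in X\uparrow\eta$ then $p \in X\uparrow\eta'$ for every $\eta' \geq \eta$, and (by antimonotonicity of $\bclos$) if $p \notin \bclos(X\uparrow\eta)$ then $p \notin \bclos(X\uparrow\eta')$ for every $\eta' \geq \eta$. Persistence lets one $\eta$ serve both clauses and lets limit ordinals be handled by suprema, so it is enough to prove, by induction on $\lambda$, that there is an $\eta$ with $\itrue{D,\lambda} \subseteq X\uparrow\eta$ and $\ifalse{D,\lambda} \cap \bclos(X\uparrow\eta) = \myemptyset$. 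The case $\lambda = 0$ is vacuous with $\eta = 0$; for a limit $\lambda$, take $\eta = \sup_{\kappa<\lambda}\eta_\kappa$, where each $\eta_\kappa$ is furnished by the induction hypothesis, and conclude by persistence. The real work is the successor case.

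For $\lambda = \mu+1$ we have $\itrue{D,\mu+1} = T_D(\tva{D,\mu})$ and $\ifalse{D,\mu+1} = U_D(\tva{D,\mu})$. Let $\eta'$ be the ordinal given by the hypothesis at $\mu$ and set $Z = X\uparrow\eta'$, $Y = \bclos(Z)$, and $W = \bclos(Y) = X\uparrow(\eta'+1) = Cl(D^Y)$, so that $\itrue{D,\mu} \subseteq Z$, $\ifalse{D,\mu} \cap Y = \myemptyset$, and --- since $Z \subseteq W$ and $\bclos$ is antimonotone --- the crucial containment $\bclos(W) \subseteq \bclos(Z) = Y$ holds. I claim $\eta = \eta'+1$ works. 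For clause~(1), take a witness of provability $r$ for $p$ \emph{wrt} $D$ and $\tva{D,\mu}$; its body lies in $\itrue{D,\mu} \subseteq Z \subseteq W = Cl(D^Y)$. If $r$ is strict it is in the $\bclos$-reduct $D^Y$ automatically; if $r$ is defeasible, the witness condition supplies for each $c \in C[p]$ some $q \in c-\{p\}$ such that every $s \in R[q]$ has $s \prec r$ or $body(s) \cap \ifalse{D,\mu} \neq \myemptyset$, and as $\ifalse{D,\mu} \cap Y = \myemptyset$ the latter gives $body(s) \nsubseteq Y$ --- precisely the requirement that puts $r$ in $R_d^Y$. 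Either way $r \in D^Y$ with body inside $Cl(D^Y)$, so $p \in Cl(D^Y) = X\uparrow(\eta'+1)$.

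For clause~(2), write $S = U_D(\tva{D,\mu})$; I show $Cl(D^W) \cap S = \myemptyset$ --- i.e. $\bclos(X\uparrow(\eta'+1)) \cap \ifalse{D,\mu+1} = \myemptyset$ --- by an inner induction on the finitary iterates $T_{D^W}\uparrow n$. Suppose some rule $r \in D^W$ (necessarily strict or defeasible, as the reduct contains no defeaters) fires a literal $p \in S$ at stage $n+1$, so $body(r) \subseteq T_{D^W}\uparrow n$. Since $S$ is unfounded \emph{wrt} $D$ and $\tva{D,\mu}$, one of the unfounded-set clauses applies to $r$: either some $b \in body(r)$ lies in $\ifalse{D,\mu} \cup S$ --- impossible, because $b \in T_{D^W}\uparrow n$, which is disjoint from $S$ by the inner hypothesis and contained in $Cl(D^W) = \bclos(W) \subseteq Y$, disjoint from $\ifalse{D,\mu}$ --- or $r$ is defeasible and there is $c \in C[p]$ with, for each $q \in c-\{p\}$, a rule $s \in R[q]$ such that $body(s) \subseteq \itrue{D,\mu} \subseteq W$ and $s \nprec r$; but this exhibits, for that $c$, a rule defeating every candidate $q$, so $r \notin R_d^W$, contradicting $r \in D^W$. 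Hence $T_{D^W}\uparrow(n+1) \cap S = \myemptyset$, and taking the union over $n$ establishes clause~(2) with $\eta = \eta'+1$.

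I expect the main obstacle to be exactly the alignment of the ordinal $\eta$ and the auxiliary sets $Z, Y, W$ so that a single $\eta$ discharges both clauses. Clause~(1) wants $\itrue{D,\mu}$ and the literals false in $\tva{D,\mu}$ to lie on opposite sides of $Y$, one application of $\bclos$ apart, whereas clause~(2) appears to need $\ifalse{D,\mu}$ to avoid $\bclos(W)$, which is $\bclos$ applied twice. The reconciling observation is the one line $Z \subseteq W \Rightarrow \bclos(W) \subseteq Y$, which collapses the two-step demand of clause~(2) onto the one-step information that the induction hypothesis actually delivers. With that in hand both clauses close with $\eta = \eta'+1$, and what remains is routine unwinding of the definitions of witness of provability, of unfounded set in NDL, and of the $\bclos$-reduct.
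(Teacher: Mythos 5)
Your proof is correct and follows essentially the same route as the paper's: transfinite induction on $\lambda$, with the positive clause discharged by showing that the witnessing rule survives into the $\beta$--reduct (because literals falsified at stage $\mu$ cannot lie in $\bclos(X\uparrow\eta')$) and the negative clause by a minimal-stage contradiction on the iterates of $T$ over the reduct. Your explicit persistence observation and the single containment $Z \subseteq W$, hence $\bclos(W) \subseteq Y$, merely tidy up the ordinal bookkeeping that the paper handles by taking suprema of the witnessing ordinals inside each case.
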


\begin{proof*}
The hypothesis is trivially satisfied for $\lambda=0$.  Suppose it holds for all ordinals less than $\lambda$. We consider each case.

\begin{enumerate}
\item Suppose $p \in \itrue{\lambda}$.  Then
one of two cases applies:

\begin{enumerate}

\item There exists a rule $r \in R_s[p]$
such that $body(r) \subseteq \itrue{\kappa}$ for some successor ordinal $\kappa < \lambda$. If that is the case, then
by inductive hypothesis, there exists an $\eta \geq 0$ such that
$body(r) \subseteq X\uparrow \eta$.  Since $r$ is strict and $X\uparrow\eta$
is closed under strict rules, $p \in X\uparrow\eta$.

\item there exists a defeasible rule $r$ such that such
that $body(r) \subseteq \itrue{\kappa}$ for some $\kappa < \lambda$ and for all $c \in C[p]$ there
is a $q \in c -\{p\}$ such that for all rules $s \in R[q]$,  either
$s \prec r$ or else there exists a $v \in body(s)$ such that $v \in \ifalse{\kappa}$. If the latter, then by inductive hypothesis, there
exists a $\eta$ such that $v \notin \bclos(X\uparrow\eta)$. Since $body(r)
\subseteq \itrue{\kappa}$, then by inductive hypothesis, $body(r)
\subseteq X\uparrow\iota$ for some ordinal $\iota$.  Note that for any ordinals $\alpha$ and $\gamma$, if $\alpha < \gamma$, then $X\uparrow\alpha \subseteq X\uparrow\gamma$ and $\bclos(X\uparrow\gamma) \subseteq \bclos(X\uparrow\alpha)$, and so for any literal $b$, if $b \notin \bclos(X\uparrow\alpha)$, then for all $\gamma > \alpha$ it follows that $b \notin \bclos(X\uparrow\gamma)$.
With that in mind,
generalizing on $s$ and then $c$, and letting $\iota'$ be the least ordinal such that $\eta<\iota'$ and $\iota < \iota'$ for any of the above $\iota$'s, we have $body(r) \subseteq X\uparrow \iota'$ and
for all $c \in C[p]$ there is a $q \in c -\{p\}$ such that for each $s \in R[q]$, $body(s) \nsubseteq \bclos(X\uparrow \iota')$ or $s \prec r$.  As such $r \in D^{\bclos(X\uparrow \iota')}$. Since $body(r) \subseteq X\uparrow {\iota'}$, by monotonicity we have
$body(r) \subseteq X\uparrow {\iota'+1}$.  Recall that $X\uparrow {\iota'+1} = \bclos(\bclos(X\uparrow \iota')) = Cl(D^{\bclos(X\uparrow \iota')})$.
We thus have $r \in D^{\bclos(X\uparrow \iota')}$ and $body(r) \subseteq Cl(D^{\bclos(X\uparrow \iota')})$.  From this it follows that
$p \in
Cl(D^{\bclos(X\uparrow \iota')})$, i.e. $p \in X\uparrow {\iota'+1}$.

\end{enumerate}

\item Suppose  $p \in \ifalse{\lambda}$.  If $\lambda$ is a limit ordinal, then there exists some successor ordinal $\kappa < \lambda$ such that $p \in \ifalse{\kappa}$. By inductive hypothesis, there exists some $\eta \geq 0$ such that $p \notin
\bclos(X\uparrow\eta)$. So suppose $\lambda$ is a successor ordinal.  By definition, $\ifalse{\lambda}$ is
an unfounded set \emph{wrt} to $D$ and $\tva{\lambda-1}$.

Let $a \in \ifalse{\lambda}$ and $r \in R_{sd}[a]$. As such, either (1) there
is a $v \in body(r)$ such that $v \in \ifalse{\lambda} \cup \ifalse{\lambda-1}$ (which by monotonicity of $U_D$ means $v \in \ifalse{\lambda}$)  or else (2) $r \in R_d[a]$ and there exists a conflict set $c \in
C[a]$ such that for each $q \in c-\{a\}$, there is a rule $s \in
R[q]$ such that $body(s) \subseteq \itrue{\lambda-1}$ and $s \nprec r$. Suppose (2) holds. Since
$body(s) \subseteq \itrue{\lambda-1}$, then by inductive
hypothesis, there exists a $\gamma \geq 0$ such that $body(s) \subseteq
X\uparrow \gamma$. Generalizing on $q$, there exists a $\eta \geq 0$ such that for each $q \in c -\{a\}$ there exists a $s \in R[q]$ such that $body(s) \subseteq X\uparrow \eta$ and $s \nprec r$. As this is so, by definition of reduct for NDL,  $r \notin D^{X\uparrow \eta}$.
Thus, if  $r \in D^{X\uparrow \eta}$, then $body(r) \cap  \ifalse{\lambda} \neq \myemptyset$.
Generalizing on $r$ and then $a$, we may conclude that for each $v \in \ifalse{\lambda}$ and each $r \in R_{sd}[v]$, if  $r \in D^{X\uparrow \eta}$, then $body(r) \cap  \ifalse{\lambda} \neq \myemptyset$.

Suppose for a proof by contradiction that $\ifalse{\lambda} \cap Cl(D_\bclos^{X\uparrow \eta}) \neq \myemptyset$. Then there is a least integer $i > 0$ such that
$\ifalse{\lambda}  \cap T_{D^{X\uparrow \eta}}\uparrow i \neq \myemptyset$.
Let $v \in \ifalse{\lambda}  \cap T_{D^{X\uparrow \eta}}\uparrow i$. Since $v \in T_{D^{X\uparrow \eta}}\uparrow i$, it follows that there exists a rule $r \in R_{sd}[v]$ such that $r \in D^{X\uparrow \eta}$ and $body(r) \subseteq T_{D^{X\uparrow \eta}}\uparrow(i-1)$. However, since $r \in D^{X\uparrow \eta}$, it must be that  $body(r) \cap \ifalse{\lambda} \neq \myemptyset$. Thus $\ifalse{\lambda}  \cap T_{D^{X\uparrow \eta}}\uparrow i-1 \neq \myemptyset$.  This is a contradiction, and so $\ifalse{\lambda} \cap Cl(D_\bclos^{X\uparrow \eta}) = \myemptyset$.

Since $p \in \ifalse{\lambda}$ and $Cl(D_\bclos^{X\uparrow \eta}) = \bclos(X\uparrow \eta + 1)$, it follows that $p \notin
\bclos(X\uparrow \eta + 1))$.\proofend
\end{enumerate}\end{proof*}

From the above propositions, a correspondence between the sequence $X\uparrow 0$, $X\uparrow 1$, $\ldots$, and the well-founded model according to NDL is
established.

\begin{proposition} If $D$ is a defeasible theory, $\tva{D,WF}$ its well-founded model according to NDL, and $\lambda$ the closure ordinal of the sequence  $X_D\uparrow0$, $X_D\uparrow1$, $\ldots$, then $$\tva{D,WF} = \langle X_D\uparrow\lambda,  Lit(D) - \beta_D(X_D\uparrow\lambda)\rangle.$$
\end{proposition}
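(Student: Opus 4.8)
The plan is to derive this directly from Propositions~\ref{NDLop1} and~\ref{NDLop2}, since together they squeeze $\tva{D,WF}$ from both sides. Write $\tva{D,WF} = \langle \itrue{D,WF}, \ifalse{D,WF}\rangle$. First I would record two preliminary facts about the sequence $X_D\uparrow 0, X_D\uparrow 1,\ldots$. Because $\beta$ is antimonotone, $\beta^2$ is monotone, so starting from $X_D\uparrow 0 = \myemptyset$ a routine transfinite induction shows the sequence is $\subseteq$-increasing and hence stabilises at its closure ordinal $\lambda$; consequently $X_D\uparrow\eta \subseteq X_D\uparrow\lambda$ for every ordinal $\eta$ (for $\eta\le\lambda$ by monotonicity of the sequence, for $\eta>\lambda$ because $X_D\uparrow\lambda$ is then already a fixpoint of $\beta^2$). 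Antimonotonicity of $\beta$ then also gives $\beta(X_D\uparrow\lambda) \subseteq \beta(X_D\uparrow\eta)$ for every $\eta$.

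Next I would obtain the ``easy'' inclusions by instantiating Proposition~\ref{NDLop1} at the ordinal $\lambda$: part~(1) gives $X_D\uparrow\lambda \subseteq \itrue{D,WF}$ and part~(2) gives $Lit(D) - \beta(X_D\uparrow\lambda) \subseteq \ifalse{D,WF}$.

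For the converse inclusions I would let $\mu$ be the closure ordinal of the sequence $(\tva{D})$, so that $\tva{D,WF} = \tva{D,\mu} = \langle\itrue{D,\mu},\ifalse{D,\mu}\rangle$, and apply Proposition~\ref{NDLop2} with its universally quantified ordinal taken to be $\mu$. This yields an $\eta$ with $\itrue{D,\mu} \subseteq X_D\uparrow\eta$ and $\ifalse{D,\mu} \cap \beta(X_D\uparrow\eta) = \myemptyset$. Using the preliminary facts, $\itrue{D,WF} = \itrue{D,\mu} \subseteq X_D\uparrow\eta \subseteq X_D\uparrow\lambda$; and since $\beta(X_D\uparrow\lambda) \subseteq \beta(X_D\uparrow\eta)$, the set $\ifalse{D,WF} = \ifalse{D,\mu}$ is disjoint from $\beta(X_D\uparrow\lambda)$, i.e.\ $\ifalse{D,WF} \subseteq Lit(D) - \beta(X_D\uparrow\lambda)$. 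Combining with the inclusions from the previous paragraph gives $\itrue{D,WF} = X_D\uparrow\lambda$ and $\ifalse{D,WF} = Lit(D) - \beta(X_D\uparrow\lambda)$, which is the claim.

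The only point requiring care is the bookkeeping with ordinals: Proposition~\ref{NDLop2} returns an $\eta$ with no a priori relation to $\lambda$, so the argument genuinely relies on the $X_D$-sequence being increasing (to get $X_D\uparrow\eta \subseteq X_D\uparrow\lambda$) and on the antimonotonicity of $\beta$ (to get $\beta(X_D\uparrow\lambda) \subseteq \beta(X_D\uparrow\eta)$), together with the fact that $(\tva{D})$ does reach $\tva{D,WF}$ at some ordinal $\mu$. None of these is difficult, so I do not anticipate a real obstacle; this proposition is essentially a corollary assembling the two preceding ones.
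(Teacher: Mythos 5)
Your proposal is correct and matches the paper's intent exactly: the paper states this proposition without proof as an immediate consequence of Propositions~\ref{NDLop1} and~\ref{NDLop2}, and your argument supplies precisely the omitted bookkeeping (monotonicity of the $X_D$-sequence via monotone $\beta^2$, antimonotonicity of $\beta$, and instantiating the two propositions at the respective closure ordinals).
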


\section{Stable Sets for Defeasible Theories}\label{stableModelsForDL}

NDL and ADL, like the well-founded semantics for  logic programs,  are \emph{directly skeptical} formalisms.  If a literal $p$ is a consequence of a theory, then there must be some rule for it with a body that is also a consequence of the theory. This is in contrast to \emph{indirectly skeptical} formalisms, such as  default logic \cite{reiter:1} and the stable model and answer--set  \cite{gelfond91} semantics for logic programs, where  consequences are defined indirectly via the intersection of \emph{extensions} (stable models, answer--sets).
These formalisms allow  {\em floating conclusions} \cite{makinson-schlecta}---i.e. consequences which appear in every extension but which have no support appearing in every extension. Directly skeptical formalisms do not allow floating conclusions.

\begin{example}[Ginsberg's extended Nixon Diamond]\label{NixonDiamond}$D = \langle R, C_{MIN}\cup\{\{dove,hawk\}\}, \varnothing\rangle$, where $R$ is
\end{example}

\begin{center}
\begin{tabular}{ll}
1. $\varnothing \rightarrow nixon$ & \\
2.  $\{nixon\} \rightarrow republican$ &
3.  $\{nixon\} \rightarrow quaker$  \\
4.  $\{quaker\} \Rightarrow dove$ &
5.  $\{republican\} \Rightarrow hawk$ \\
6. $\{hawk\} \rightarrow \neg dove$&
 7.  $\{dove\} \rightarrow \neg hawk$\\
8. $\{hawk\} \Rightarrow extremist$&
9. $\{dove\} \Rightarrow extremist$\\
\end{tabular}
\end{center}
The logic program counterpart to the above theory is
\begin{center}
\begin{tabular}{ll}
1. $nixon$ & \\
2. $republican \neck nixon$ &
3. $quaker \neck  nixon$  \\
4. $dove \neck \naf \neg dove, \naf hawk, quaker$ &
5. $hawk \neck \naf \neg hawk, \naf dove, republican$ \\
6. $\neg dove \neck hawk$&
7. $\neg hawk \neck dove$\\
8. $extremist \neck \naf \neg extremist, hawk$&
9. $extremist \neck \naf \neg extremist, dove$\\
\end{tabular}
\end{center}
Here, the positive ADL-- and NDL-- consequences of the theory  agree with the well-founded model of the logic program: $nixon$, $republican$, and $quaker$ are all well-founded, but no other literal is. In ADL, the literals $dove$, $hawk$, and $extremist$ are all ambiguous. They are unfounded in NDL.

The logic program has two stable models, where $S$ is a stable model of $\Pi$ if $\GLop_\Pi(S) = S$.

\begin{center}
\begin{tabular}{l}
$S_1$: $\{nixon, republican, quaker, dove, \neg hawk, extremist\}$\\
$S_2$: $\{nixon, republican, quaker, \neg dove, hawk, extremist\}$ \\
\end{tabular}
\end{center}
Since $extremist$ appears in each such model, it is taken as a consequence of the program according to the stable model semantics. Since neither $dove$ nor $hawk$ appears in both models,  $extremist$ is a floating conclusion.

It is indeed possible to use both $\dlGLop$ and $\beta$ to define indirectly skeptical semantics similar to the stable model semantics for logic programs.  We do that here. As before, the semantics based on $\dlGLop$ only applies to a restricted class of defeasible theories.

\begin{definition}
Let $D = \langle R,C,\prec\rangle$ be a defeasible theory and $S \subseteq Lit(D)$.
\begin{enumerate}
\item If $\prec = \myemptyset$ and $R_u = \myemptyset$, then $S$ is an $\dlGLop$-\emph{stable set} of $D$ iff $S = \dlGLop_D(S)$.
\item $S$ is a $\beta$-\emph{stable set} of $D$ iff $S = \beta_D(S)$.
\end{enumerate}
\end{definition}
\begin{definition}Let $D = \langle R,C,\prec\rangle$ be a defeasible theory and $p \in Lit(D)$.
\begin{enumerate}
\item If $\prec = \myemptyset$ and $R_u = \myemptyset$,
\begin{enumerate}
    \item $D \dsmodels_{\dlGLop} p$ \emph{iff} $p\in S$ for all $\dlGLop$-stable sets $S$.
    \item $D \dsrefutes_{\dlGLop} p$ \emph{iff} $p\notin S$ for all $\dlGLop$-stable sets $S$.
\end{enumerate}
\item For arbitrary theories $D$,
\begin{enumerate}
    \item $D \dsmodels_{\beta} p$ \emph{iff} $p\in S$ for all $\beta$-stable sets $S$.
    \item $D \dsrefutes_{\beta} p$ \emph{iff} $p\notin S$ for all $\beta$-stable sets $S$.
\end{enumerate}
\end{enumerate}
\end{definition}

In Example \ref{NixonDiamond}, $D$ has two $\dlGLop$--stable sets, and these correspond to $S_1$ and $S_2$ above. As such,
$D \dsmodels_{\dlGLop} extremist$.  There is only one $\beta$--stable set, however:
\begin{center}
$\{nixon, republican, quaker\}$.
\end{center}
These three literals must appear in any $\beta$--stable set. However, this implies that rules  4 and 5 can appear in no $\beta$--reduct of $D$, and so $extremist$ can appear in no $\beta$--stable set. It is not a floating conclusion according to the semantics based on $\beta$.

As in the case for logic programs, the well-founded models according to ADL and NDL, respectively, are contained within the stable sets defined using $\dlGLop$ and $\beta$.

\begin{proposition}\label{equivGLops}
Let $D = \langle R,C,\prec\rangle$ be a defeasible theory.
  \begin{enumerate}

\item  If $\prec = \myemptyset$, $R_u = \myemptyset$,  and $\langle \itrue{}, \ifalse{}\rangle$ is the well-founded model of $D$ according to ADL, then   for any $\dlGLop$-stable set $M$ of $D$,
 $\itrue{} \subseteq M$, and
 $\ifalse{} \cap \dlGLop_D(M) = \myemptyset$.

\item  If $\langle \itrue{}, \ifalse{}\rangle$ is the well-founded model of $D$ according to NDL, then   for any $\beta$-stable set $M$ of $D$,
 $\itrue{} \subseteq M$, and
 $\ifalse{} \cap \beta_D(M) = \myemptyset$.

\end{enumerate}

\end{proposition}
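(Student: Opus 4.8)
The plan is to prove parts~1 and~2 in parallel by transfinite induction on the canonical sequence $(\tva{D,0},\tva{D,1},\ldots)$ whose limit is $wfm(D)$: for part~1 this is the sequence generated by the ADL operators $T_D,U_D,W_D$, and for part~2 the one generated by their NDL counterparts. Fix a stable set $M$, so that $M=\dlGLop_D(M)=Cl(D_{\dlGLop}^M)$ in part~1 and $M=\beta_D(M)=Cl(D_\beta^M)$ in part~2; in either case write $D^M$ for the relevant reduct. The strengthened statement to be proved is that for every ordinal $\lambda$ both $\itrue{D,\lambda}\subseteq M$ and $\ifalse{D,\lambda}\cap M=\myemptyset$ hold. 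Passing to the limit then yields $\itrue{}\subseteq M$ and $\ifalse{}\cap M=\myemptyset$, and since $M$ is a fixpoint of $\dlGLop_D$ (resp.\ $\beta_D$) the second conjunct is literally $\ifalse{}\cap\dlGLop_D(M)=\myemptyset$ (resp.\ $\ifalse{}\cap\beta_D(M)=\myemptyset$). The base case $\lambda=0$ is trivial and the limit case is immediate since both conjuncts pass to unions, so the only work is in the successor step, where $\itrue{D,\lambda}=T_D(\tva{D,\lambda-1})$ and $\ifalse{D,\lambda}=U_D(\tva{D,\lambda-1})$, and we may use $\itrue{D,\lambda-1}\subseteq M$ and $\ifalse{D,\lambda-1}\cap M=\myemptyset$. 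Two elementary facts about any reduct $D^M$ are used throughout: it contains every strict rule of $D$, and its closure $Cl(D^M)=M$ is a fixpoint of the simple consequence operator (by continuity, as observed in the proof of Proposition~\ref{equivGLops1}), hence closed under the rules of $D^M$.

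For the ``$T$''-half, take $p\in T_D(\tva{D,\lambda-1})$ with a witness of provability $r$. If $r$ is strict then $body(r)\subseteq\itrue{D,\lambda-1}\subseteq M$, $r\in D^M$, and closure gives $p\in M$. If $r$ is defeasible then again $body(r)\subseteq M$, and for each $c\in C[p]$ the witness condition supplies a $q\in c-\{p\}$ such that for every $s\in R[q]$ either $s\prec r$ or $body(s)\cap\ifalse{D,\lambda-1}\neq\myemptyset$; the inductive hypothesis on $\ifalse{D,\lambda-1}$ turns the latter into $body(s)\nsubseteq M$. Conflict set by conflict set, this is exactly the membership criterion for the defeasible part of $D^M$ (for ADL, where $\prec=\myemptyset$, the ``$s\prec r$'' alternative never arises), so $r\in D^M$, and since $body(r)\subseteq M=Cl(D^M)$ we get $p\in M$. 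For the ``$U$''-half, suppose $\ifalse{D,\lambda}\cap M\neq\myemptyset$; writing $M=Cl(D^M)=T_{D^M}\uparrow\omega=\bigcup_{n<\omega}T_{D^M}\uparrow n$, pick the least $n$ with $\ifalse{D,\lambda}\cap T_{D^M}\uparrow n\neq\myemptyset$, a $p$ in that intersection, and a rule $r\in D^M$ with head $p$ and $body(r)\subseteq T_{D^M}\uparrow(n-1)\subseteq M$. Since $\ifalse{D,\lambda}=U_D(\tva{D,\lambda-1})$ is an unfounded set (Definition~\ref{unfoundedSetDL}, or its ADL variant), its defining clause applies to $p$ and $r$. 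If some literal of $body(r)$ lies in $\ifalse{D,\lambda-1}\cup U_D(\tva{D,\lambda-1})$, then, lying in $body(r)\subseteq T_{D^M}\uparrow(n-1)$, it is either in $\ifalse{D,\lambda-1}\cap M$, contradicting the inductive hypothesis, or in $\ifalse{D,\lambda}\cap T_{D^M}\uparrow(n-1)$, contradicting the minimality of $n$. Otherwise $r$ is defeasible and some $c\in C[p]$ has, for each $q\in c-\{p\}$, a rule $s\in R[q]$ with $body(s)\subseteq\itrue{D,\lambda-1}$ and, for NDL, $s\nprec r$, or, for ADL, $s$ strict; since $\itrue{D,\lambda-1}\subseteq M$, in the ADL case each such $s$ is a strict rule of $D^M$, forcing $c-\{p\}\subseteq M$, which contradicts that $r\in D^M$ requires some element of $c-\{p\}$ to be outside $M$, while in the NDL case ``$body(s)\subseteq M$ and $s\nprec r$, for each $q\in c-\{p\}$'' is precisely the negation, for this $c$, of the clause that $r\in D^M$ asserts for every conflict set in $C[p]$. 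Either way there is a contradiction, and the induction goes through.

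I expect the main obstacle to be the NDL ``$U$''-direction: one has to recognize that ``$r$ belongs to the $\beta$-reduct'' and ``conflict set $c$ witnesses the unfoundedness of $head(r)$ via the second disjunct of clause~2'' are complementary quantifier alternations over the rules in $R[q]$, once the hypothesis $\itrue{D,\lambda-1}\subseteq M$ is used to replace $body(s)\subseteq\itrue{D,\lambda-1}$ by $body(s)\subseteq M$, and one must keep the priority bookkeeping straight (the $\beta$-reduct speaks of $s\prec r$, the unfounded-set clause of $s\nprec r$). The ADL cases are comparatively routine once $\prec=\myemptyset$ is exploited, since strictness then carries all the weight in both the reduct and the unfounded-set clause; the only other point needing care is that the closure of a reduct equals $T\uparrow\omega=\bigcup_{n<\omega}T\uparrow n$ (continuity), which is what justifies the finite minimal-counterexample argument inside the ``$U$''-direction.
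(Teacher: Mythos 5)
Your proof is correct, but it takes a genuinely different route from the paper's. The paper's argument is a short algebraic one: it works with the alternating-fixpoint sequence $X\uparrow 0, X\uparrow 1,\ldots$ generated by $\dlGLop_D^2$ (resp.\ $\beta_D^2$), shows $X\uparrow\lambda\subseteq M$ by transfinite induction using only the monotonicity of the squared operator and the fixpoint identity $M=\dlGLop_D^2(M)$, and then gets $\ifalse{}\cap M=\myemptyset$ from the antimonotonicity of $\dlGLop_D$ applied to $\itrue{}\subseteq M$; it leans on the preceding identification $wfm(D)=\langle X\uparrow\lambda,\,Lit(D)-\beta_D(X\uparrow\lambda)\rangle$. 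You instead induct directly on the canonical $W_D$-sequence and unfold the definitions of witness of provability, unfounded set, and reduct membership, with a least-$n$ counterexample argument inside $Cl(D^M)=\bigcup_{n<\omega}T_{D^M}\uparrow n$ for the unfounded half. Your version is longer but buys something real: it is self-contained for part~1, whereas the paper's ``the proofs are the same for both'' tacitly assumes the $\dlGLop$-based sequence computes the ADL well-founded model, a correspondence the paper only establishes for minimal conflict sets (and which Example~\ref{ADLdiffersWFS} shows can fail otherwise); your direct argument never needs that. One small presentational point: for the $\dlGLop$-reduct the membership criterion is $q\notin M$, not ``every $s\in R[q]$ has $body(s)\nsubseteq M$,'' so your ``this is exactly the membership criterion'' needs the one-line bridge that $q\in M=Cl(D_{\dlGLop}^M)$ would require a supporting rule in the reduct with body contained in $M$ --- a step you do use explicitly elsewhere, so this is a quibble rather than a gap.
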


\begin{proof}The proofs are the same for both ADL and NDL, and so we consider only $\dlGLop_D$. Define $X\uparrow0$, $X\uparrow1$, etc., as above. Clearly, since $X\uparrow0 = \myemptyset$, $X\uparrow0 \subseteq M$. Suppose $X\uparrow\kappa \subseteq M$ for each $\kappa < \lambda$. We may assume \emph{wlog} that $\lambda$ is a successor ordinal. Observe that $X\uparrow\lambda =  \dlGLop_{D}^2(X\uparrow{\lambda-1})$. By inductive hypothesis, $X\uparrow{\lambda-1} \subseteq M$. Since $\dlGLop_{D}^2$ is monotone,  $\dlGLop_{D}^2(X\uparrow{\lambda-1}) \subseteq \dlGLop_{D}^2(M)$.  Since $X\uparrow\lambda =  \dlGLop_{D}^2(X\uparrow{\lambda-1})$ and $M$ is a stable set, it follows that $X\uparrow{\lambda} \subseteq M$. Generalizing, $\itrue{} \subseteq M$. Since $\dlGLop_{D}$ is antimonotone, it follows that $\dlGLop_{D}(M) \subseteq \dlGLop_{D}(\itrue{})$, and so
  $Lit(D) -  \dlGLop_{D}(\itrue{}) \subseteq Lit(D) - \dlGLop_{D}(M)$. I.e.,   $\ifalse{} \subseteq Lit(D) - M$. And so
  $\ifalse{} \cap M = \myemptyset$.\end{proof}

No stable set of a defeasible theory is a subset of another.  This parallels the case for the stable models/answer-sets of logic programs \cite{gelfond91} and is the result of $\alpha$ and $\beta$ being antimonotone. For instance, if $S_1$ and $S_2$ are $\dlGLop$--stable sets such that $S_1 \subseteq S_2$, then $\alpha_D(S_2) \subseteq \alpha_D(S_1)$, and so $S_2 \subseteq S_1$.

\begin{proposition}
If $S_1$ and $S_2$ are distinct $\alpha$ ($\beta$) stable sets of defeasible theory $D$, then $S_1 \not\subseteq S_2$.
\end{proposition}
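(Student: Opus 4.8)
The plan is to argue by contradiction, exploiting the fact (used already in the proof of Proposition~\ref{equivGLops}, and noted in the discussion preceding the present statement) that both $\dlGLop$ and $\bclos$ are antimonotone on the complete lattice $\langle 2^{Lit(D)}, \subseteq\rangle$. Since the argument is word-for-word the same for the two operators, I would fix a generic antimonotone operator $\Phi \in \{\dlGLop_D, \bclos_D\}$ and let $S_1, S_2$ be $\Phi$-stable sets, i.e.\ $\Phi(S_1) = S_1$ and $\Phi(S_2) = S_2$.

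First I would suppose, for contradiction, that $S_1 \subseteq S_2$. By antimonotonicity of $\Phi$, $\Phi(S_2) \subseteq \Phi(S_1)$. Substituting the stability equations $\Phi(S_1)=S_1$ and $\Phi(S_2)=S_2$ then gives $S_2 \subseteq S_1$. Together with the assumption $S_1 \subseteq S_2$ this forces $S_1 = S_2$, contradicting the hypothesis that $S_1$ and $S_2$ are distinct. Hence $S_1 \not\subseteq S_2$, which is exactly the claim. In the $\dlGLop$ case one also has the standing side conditions $\prec = \myemptyset$ and $R_u = \myemptyset$ built into the notion of $\dlGLop$-stable set, so that $\dlGLop_D$ is in fact defined on all of $2^{Lit(D)}$; nothing further is needed.

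There is essentially no obstacle here: the only ingredient beyond pure lattice-theoretic bookkeeping is the antimonotonicity of $\dlGLop$ and $\bclos$, which I would simply cite as already established in the excerpt. If one preferred a self-contained treatment, the single point that would need a line of justification is antimonotonicity itself --- namely that enlarging $S$ can only remove defeasible rules from the reduct $D_{\dlGLop}^S$ (resp.\ $D_{\bclos}^S$), since the membership conditions on $R_d^S$ are of the form ``$(\forall c)(\exists q)\,(q \notin S)$'' and hence downward-persistent in $S$, so $D_{\dlGLop}^{S_2} \subseteq D_{\dlGLop}^{S_1}$ whenever $S_1 \subseteq S_2$, whence $Cl(D_{\dlGLop}^{S_2}) \subseteq Cl(D_{\dlGLop}^{S_1})$ by monotonicity of the simple closure operator $Cl$.
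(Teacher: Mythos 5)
Your proof is correct and is essentially identical to the paper's own argument, which appears (informally) in the paragraph immediately preceding the proposition: assume $S_1 \subseteq S_2$, apply antimonotonicity to get $S_2 = \Phi(S_2) \subseteq \Phi(S_1) = S_1$, and conclude $S_1 = S_2$. Your added justification of antimonotonicity from the downward-persistence of the reduct conditions is a correct and welcome supplement, but nothing beyond the paper's route.
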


Given the close connection between $\dlGLop$ and $\gamma$, for each defeasible theory $D$ with no defeaters and priorities, then provided that $C[p]$ is finite for each $p \in Lit(D)$, we may conclude that the $\dlGLop$-stable sets of $D$ correspond to the stable models of its logic program  translation. Furthermore,  as shown below (Propositions \ref{stablemodels1}--\ref{LP2DL2}), a correspondence for the translation in the reverse direction also holds. That is, if $\Pi$ is a normal logic program, then the stable models of $\Pi$ correspond to the $\alpha$-stable sets of $D_{\Pi}$.

Observe that this implies that any stable model of $\Phi$ is a classical interpretation (and so can be represented as a set of atoms).
Below, if $X \subseteq At(\Pi)$, let

\begin{center}
$X^{\neg} = X \cup \{\neg p|p \in At(\Pi)$ and $p \notin X\}$.
\end{center}

\begin{proposition}\label{stablemodels1}
Let $\Pi$ be a normal logic program and $\Phi$ the explicit version of $\Pi$. For any $M \subseteq At(\Pi)$, $T_{\Pi^M}\uparrow \omega = (T_{\Phi^{M^{\neg}}}\uparrow \omega \cap At(\Pi))$.
\end{proposition}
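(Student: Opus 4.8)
The plan is to first compute the reduct $\Phi^{M^{\neg}}$ explicitly, and then prove the two inclusions separately, in each case by exhibiting a set that is closed under the rules of the relevant reduct. I will use throughout that both $\Pi^{M}$ and $\Phi^{M^{\neg}}$ are NAF-free (the reduct strips all default literals, and the translated rules of $\Phi$ carry none to begin with), so that $T_{\Pi^{M}}\uparrow\omega$ is the smallest set of atoms closed under the rules of $\Pi^{M}$, and likewise $T_{\Phi^{M^{\neg}}}\uparrow\omega$ is the smallest set closed under the rules of $\Phi^{M^{\neg}}$.

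First I would spell out $\Phi^{M^{\neg}}$. Since $M \subseteq At(\Pi)$, for an atom $p \in At(\Pi)$ we have $p \in M^{\neg}$ iff $p \in M$, and $\neg p \in M^{\neg}$ iff $p \notin M$. The rules of $\Phi$ come in two kinds: the translations $r'$ of rules $r \in \Pi$, which carry no default literals and so pass into $\Phi^{M^{\neg}}$ untouched; and the closed-world rules $\neg p \neck \naf p$, which survive in $\Phi^{M^{\neg}}$ exactly when $p \notin M$, in which case they become the facts $\neg p \neck\;$. Hence $\Phi^{M^{\neg}} = \{r' \mid r \in \Pi\} \cup \{\neg p \neck\; \mid p \in At(\Pi)\setminus M\}$. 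Writing $N = \{\neg p \mid p \in At(\Pi)\setminus M\}$, I note that no $\neg$-literal occurs in the head of a translated rule, so the only rules of $\Phi^{M^{\neg}}$ with a $\neg$-literal as head are the facts listed by $N$; consequently $N \subseteq T_{\Phi^{M^{\neg}}}\uparrow 1$, and $\neg p \in T_{\Phi^{M^{\neg}}}\uparrow\omega$ iff $p \notin M$.

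For $T_{\Pi^{M}}\uparrow\omega \subseteq T_{\Phi^{M^{\neg}}}\uparrow\omega \cap At(\Pi)$, I would check that $S := T_{\Phi^{M^{\neg}}}\uparrow\omega \cap At(\Pi)$ is closed under the rules of $\Pi^{M}$. Every rule of $\Pi^{M}$ has the form $p \neck a_1,\ldots,a_n$ and comes from some $r : p \neck a_1,\ldots,a_n,\naf b_1,\ldots,\naf b_m$ of $\Pi$ with $\{b_1,\ldots,b_m\}\cap M = \myemptyset$; its translation $r' : p \neck a_1,\ldots,a_n,\neg b_1,\ldots,\neg b_m$ lies in $\Phi^{M^{\neg}}$, and each $\neg b_j$ lies in $N$ (since $b_j \notin M$), hence in $T_{\Phi^{M^{\neg}}}\uparrow\omega$. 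So if $a_1,\ldots,a_n \in S$, the body of $r'$ lies in $T_{\Phi^{M^{\neg}}}\uparrow\omega$, whence $p \in T_{\Phi^{M^{\neg}}}\uparrow\omega$, and as $p \in At(\Pi)$ we get $p \in S$. By minimality of $T_{\Pi^{M}}\uparrow\omega$, $T_{\Pi^{M}}\uparrow\omega \subseteq S$. For the converse I would show that $S' := T_{\Pi^{M}}\uparrow\omega \cup N$ is closed under the rules of $\Phi^{M^{\neg}}$: the facts in $N$ have heads in $N \subseteq S'$; and if a translated rule $r' : p \neck a_1,\ldots,a_n,\neg b_1,\ldots,\neg b_m$ has its body in $S'$, then each $\neg b_j \in S'$ must be in $N$ (the only $\neg$-literals of $S'$), so $b_j \notin M$ and therefore $p \neck a_1,\ldots,a_n \in \Pi^{M}$, while each $a_i$ is an atom in $S'$, hence in $T_{\Pi^{M}}\uparrow\omega$; closure of $T_{\Pi^{M}}\uparrow\omega$ then gives $p \in T_{\Pi^{M}}\uparrow\omega \subseteq S'$. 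Hence $T_{\Phi^{M^{\neg}}}\uparrow\omega \subseteq S'$, and since $N \cap At(\Pi) = \myemptyset$, intersecting with $At(\Pi)$ yields $T_{\Phi^{M^{\neg}}}\uparrow\omega \cap At(\Pi) \subseteq T_{\Pi^{M}}\uparrow\omega$. The two inclusions give the equality.

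The argument is not deep; the points to watch are (i) pinning down the exact form of $\Phi^{M^{\neg}}$, in particular that the reduced closed-world rules are precisely the facts $\{\neg p \neck \mid p \notin M\}$, and (ii) the observation that $\neg p \in T_{\Phi^{M^{\neg}}}\uparrow\omega$ forces $p \notin M$ — this is what makes a translated rule inside the reduct behave exactly like the corresponding reduced rule of $\Pi^{M}$. One also uses silently that every body atom $b_j$ of a rule of $\Pi$ belongs to $At(\Pi)$, so that the matching closed-world rule $\neg b_j \neck \naf b_j$ is indeed present in $\Phi$, and that $At(\Pi)$ is disjoint from $\{\neg p \mid p \in At(\Pi)\}$.
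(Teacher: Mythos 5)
Your proof is correct. It rests on exactly the same three facts as the paper's: the translated rules carry no default literals and so pass into $\Phi^{M^{\neg}}$ unchanged; the closed-world rules $\neg p \neck \naf p$ reduce to the facts $\neg p \neck$ precisely for $p \notin M$; and consequently $\neg p$ is derivable from the reduct iff $p \notin M$, so a translated rule fires in $\Phi^{M^{\neg}}$ exactly when its source rule survives into $\Pi^{M}$. The difference is in the packaging: the paper runs a simultaneous induction on the stage $i$ of the $T$ operator, proving both directions level by level, whereas you compute $\Phi^{M^{\neg}}$ explicitly up front and then obtain each inclusion by exhibiting a set closed under the relevant reduct ($T_{\Phi^{M^{\neg}}}\uparrow\omega \cap At(\Pi)$ for one direction, $T_{\Pi^{M}}\uparrow\omega \cup N$ for the other) and invoking minimality of the least fixpoint. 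Your version buys a cleaner separation of the combinatorial content (the exact shape of the reduct) from the fixpoint machinery, and the witness set $T_{\Pi^{M}}\uparrow\omega \cup N$ makes the second inclusion essentially immediate; the paper's stage induction is more self-contained in that it never needs the "least pre-fixed point" characterization, only the definition of $T\uparrow\omega$. Your closing caveats (that each $b_j \in At(\Pi)$ so its closed-world rule exists, and that $At(\Pi)$ is disjoint from the $\neg$-literals) are exactly the right things to flag.
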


\begin{proof}We show that for all $i \geq 0$ and $p \in At(\Pi)$, $p \in T_{\Pi^M}\uparrow i$ implies $p \in T_{\Phi^{M^{\neg}}}\uparrow \omega$, and $p \in T_{\Phi^{M^{\neg}}}\uparrow i$ implies $p \in T_{\Pi^M}\uparrow \omega$. The case for $i = 0$ is vacuous. Suppose the claim holds for all $i<n$.

If $p \in T_{\Pi^M}\uparrow n$, then there is a rule $r \in\Pi$ such that $body(r)^{+}\subseteq  T_{\Pi^M}\uparrow (n-1)$ and   $q \notin M$ for each $q \in body(r)^{-}$. Let $r' \in \Phi$ be strict rule corresponding to $r$. By inductive hypothesis, $body(r)^{+}  \subseteq T_{\Phi^{M^{\neg}}}\uparrow \omega$.  For each $q$, $\neg q\neck \myemptyset\in\Phi^{M^{\neg}}$ and so $\neg q \in T_{\Phi^{M^{\neg}}}\uparrow \omega$. It follows that $body(r') \subseteq T_{\Phi^{M^{\neg}}}\uparrow \omega$, and so $p \in T_{\Phi^{M^{\neg}}}\uparrow \omega$.

If $p \in T_{\Phi^{M^{\neg}}}\uparrow n$, then there is a rule $r\in\Phi^{M^{\neg}}$ such that $body(r) \subseteq T_{\Phi^{M^{\neg}}}\uparrow (n-1)$. By inductive hypothesis, $a  \in T_{\Pi^{M}}\uparrow \omega$ for each atom $a \in body(r)$.  For each  $\neg q \in body(r)$, it must be that $\neg q \neck \myemptyset \in\Phi^{M^{\neg}}$, and so $q \notin M^{\neg}$ and  $q \notin M$. Rule $r$ corresponds to a rule $r' \in \Pi$ such that atom $a \in body(r)$ \emph{iff} $a \in body(r')^{+}$, and  $\neg q \in body(r)$ \emph{iff} $q \in body(r')^{-}$. Given that no $q\in body(r')^{-}$ appears in $M$,  $p \neck body(r')^{+}\in \Pi^{M}$.  Since $body(r')^{+}  \subseteq T_{\Pi^{M}}\uparrow \omega$, it follows that $p \in T_{\Pi^{M}}\uparrow \omega$.\end{proof}

\begin{proposition}\label{stablemodels2}
Let $\Pi$ be a normal logic program and $\Phi$ its explicit version. $M\subseteq At(\Pi)$ is a stable model of $\Pi$ \emph{iff} $M^\neg$ is a stable model of $\Phi$.
\end{proposition}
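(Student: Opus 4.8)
The plan is to combine Proposition~\ref{stablemodels1} with a direct inspection of the reduct $\Phi^{M^{\neg}}$. Recall that $M$ is a stable model of $\Pi$ exactly when $T_{\Pi^{M}}\uparrow\omega = M$, and $M^{\neg}$ is a stable model of $\Phi$ exactly when $T_{\Phi^{M^{\neg}}}\uparrow\omega = M^{\neg}$. So the first thing I would do is pin down the shape of $\Phi^{M^{\neg}}$. Every rule of $\Phi$ arising from a rule of $\Pi$ has its default literals $\naf b_i$ replaced by the (classical, hence atom-like) literals $\neg b_i$, so it is NAF-free and survives the reduct unchanged. The only remaining rules of $\Phi$ are the rules $\neg p \neck \naf p$ for $p \in At(\Pi)$; such a rule is deleted from the reduct precisely when $p \in M^{\neg}$, i.e.\ when $p \in M$ (since $p$ is an atom), and otherwise it becomes the fact $\neg p \neck \myemptyset$. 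Hence $\Phi^{M^{\neg}}$ consists of the NAF-free images of the rules of $\Pi$ together with the facts $\{\neg p \mid p \in At(\Pi)\setminus M\}$.

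From this I would read off the negative part of $T_{\Phi^{M^{\neg}}}\uparrow\omega$: the rules $\neg p \neck \naf p$ are the only rules of $\Phi$ with a negatively signed head, so the only negative literals ever derivable in $\Phi^{M^{\neg}}$ are exactly the facts just listed, giving $\neg p \in T_{\Phi^{M^{\neg}}}\uparrow\omega$ iff $p \in At(\Pi)\setminus M$. Combining this with Proposition~\ref{stablemodels1}, which gives $T_{\Phi^{M^{\neg}}}\uparrow\omega \cap At(\Pi) = T_{\Pi^{M}}\uparrow\omega$, we obtain
\begin{center}
$T_{\Phi^{M^{\neg}}}\uparrow\omega = (T_{\Pi^{M}}\uparrow\omega)\cup\{\neg p \mid p \in At(\Pi)\setminus M\}$.
\end{center}
Since $M^{\neg} = M \cup \{\neg p \mid p \in At(\Pi)\setminus M\}$ and the two negative parts already coincide, $T_{\Phi^{M^{\neg}}}\uparrow\omega = M^{\neg}$ holds if and only if $T_{\Pi^{M}}\uparrow\omega = M$; that is, $M^{\neg}$ is a stable model of $\Phi$ iff $M$ is a stable model of $\Pi$. (The right-to-left direction can also be had more cheaply: if $T_{\Phi^{M^{\neg}}}\uparrow\omega = M^{\neg}$, then intersecting with $At(\Pi)$ and applying Proposition~\ref{stablemodels1} yields $T_{\Pi^{M}}\uparrow\omega = M$ at once.)

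I do not expect a serious obstacle here; the only point that needs care is the handling of the explicitly negative literals. One must remember that in $\Phi$ they behave as ordinary atoms rather than default literals, so that the reduct leaves the $\Pi$-derived rules untouched, and one must verify that no negative literal $\neg p$ with $p \in M$ can leak into $T_{\Phi^{M^{\neg}}}\uparrow\omega$ through some other rule---which holds because $\neg p \neck \naf p$ is the unique rule of $\Phi$ with head $\neg p$. Everything else is bookkeeping already carried out in Proposition~\ref{stablemodels1}.
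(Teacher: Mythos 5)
Your proposal is correct and follows essentially the same route as the paper: both rely on Proposition~\ref{stablemodels1} for the atoms and on the observation that $\neg p \neck \naf p$ is the unique rule of $\Phi$ with head $\neg p$, so that $\neg p \in T_{\Phi^{M^{\neg}}}\uparrow\omega$ iff the fact $\neg p$ survives into the reduct iff $p \notin M$ iff $\neg p \in M^{\neg}$. Your explicit unconditional identity $T_{\Phi^{M^{\neg}}}\uparrow\omega = T_{\Pi^{M}}\uparrow\omega \cup \{\neg p \mid p \in At(\Pi)\setminus M\}$ is a slightly cleaner packaging of the same bookkeeping, but there is no substantive difference.
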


\begin{proof}
$M^\neg \cap At(\Pi) = M$, and from  Prop. \ref{stablemodels1}, $T_{\Pi^M}\uparrow \omega = (T_{\Phi^{M^\neg}}\uparrow \omega \cap At(\Pi))$.
If $M$ is a stable model of $\Pi$, $M =  (T_{\Phi^{M^\neg}}\uparrow \omega \cap At(\Pi))$, and so for each  $p\in At(\Pi)$, $p \in M^\neg$ \emph{iff} $p \in T_{\Phi^{M^\neg}}\uparrow\omega$.  If $\neg p \in T_{\Phi^{M^\neg}}\uparrow \omega$, then $\neg p \neck \myemptyset\in\Phi^{M^\neg}$ and so $p \notin M^\neg$. If that is so, then $p \notin M$ and (by definition of $M^\neg$)  $\neg p \in M^\neg$. Conversely, if
$\neg p \in M^\neg$, then  $p \notin M$, and so $\neg p \neck \myemptyset\in\Phi^{M^\neg}$. Consequently, $\neg p \in T_{\Phi^{M^\neg}}\uparrow \omega$.  As such, $M^\neg$ is a stable model of $\Phi$.
If, in turn, $M^\neg$ is a stable model of $\Phi$,  $T_{\Pi^M}\uparrow \omega =  (M^\neg \cap At(\Pi)) = M$, and so $M$ is a stable model of $\Pi$. \end{proof}

\begin{proposition}\label{LP2DL2}
Let $\Pi$ be a normal logic program and $D_\Pi$ its defeasible logic translation.  $M$ is a stable model of $\Pi$ \emph{iff} $M^\neg$ is an $\dlGLop$-stable set of $D_\Pi$.
\end{proposition}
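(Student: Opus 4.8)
The plan is to route the argument through the explicit version $\Phi$ of $\Pi$, exploiting the observation (already noted in the text) that $\Phi$ is, up to the harmless identification of each $\neg p$ with an atom, exactly the logic program translation $\Pi_{D_\Pi}$ of $D_\Pi$. Combined with Proposition~\ref{stablemodels2} (which relates the stable models of $\Pi$ to those of $\Phi$) and Proposition~\ref{equivGLops1} (which identifies $\alpha$ with $\gamma$ under the translation), the result will fall out as a short composition of equivalences.

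First I would verify $\Pi_{D_\Pi} = \Phi$. The strict rules of $D_\Pi$ are the rules $r_{D_\Pi}$ obtained from the rules of $\Pi$ by replacing each $\naf b$ with $\neg b$; these translate back to rules $p \neck a_1,\ldots,a_n,\neg b_1,\ldots,\neg b_m$, which are precisely clause~(1) of the explicit version. The only defeasible rules of $D_\Pi$ are the presumptions $\varnothing \Rightarrow \neg p$; since $C = C_{MIN}$ we have $C[\neg p] = \{\{p,\neg p\}\}$, hence $Prod(C[\neg p]) = \{\{p\}\}$, so the presumption translates to the single rule $\neg p \neck \naf p$, which is clause~(2) of the explicit version. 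Along the way I would check that $D_\Pi$ meets the hypotheses of Proposition~\ref{equivGLops1}: it has no defeaters, $\prec = \varnothing$, and $C[\ell]$ is finite for every $\ell \in Lit(D_\Pi)$ (each such conflict set is the minimal one, a two-element set), and that the translation introduces no atoms outside $At(\Pi)$.

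Next I would apply Proposition~\ref{equivGLops1} with $D = D_\Pi$ and its translation $\Phi$, obtaining $\alpha_{D_\Pi}(S) = \gamma_\Phi(S)$ for every $S \subseteq Lit(D_\Pi)$. Since $M \subseteq At(\Pi)$ gives $M^\neg \subseteq Lit(D_\Pi)$, instantiating $S = M^\neg$ shows that the fixpoint condition $M^\neg = \alpha_{D_\Pi}(M^\neg)$ defining an $\alpha$-stable set of $D_\Pi$ is literally the fixpoint condition $M^\neg = \gamma_\Phi(M^\neg)$ defining a stable model of $\Phi$. Thus $M^\neg$ is an $\alpha$-stable set of $D_\Pi$ iff $M^\neg$ is a stable model of $\Phi$, and by Proposition~\ref{stablemodels2} the latter holds iff $M$ is a stable model of $\Pi$; chaining these yields the claim.

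The main obstacle is bookkeeping rather than mathematics: one must be careful about the convention under which the negative literals $\neg p$ of the defeasible theory are treated as atoms when forming the logic program translation, so that $\alpha_{D_\Pi}$ and $\gamma_\Phi$ operate on the same underlying set of atoms, and one must confirm that nothing in $D_\Pi$ escapes $Lit(\Pi)$ or $C_{MIN}$. Once those identifications are pinned down, the argument is essentially a two-line composition of Propositions~\ref{equivGLops1} and~\ref{stablemodels2}.
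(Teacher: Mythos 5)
Your proposal is correct and follows essentially the same route as the paper's own proof: both observe that translating $D_\Pi$ back into a logic program yields $\Phi$, invoke Proposition~\ref{equivGLops1} to identify the $\alpha$-stable sets of $D_\Pi$ with the stable models of $\Phi$, and then apply Proposition~\ref{stablemodels2} to pass from $\Phi$ to $\Pi$. Your version merely spells out the verification that $\Pi_{D_\Pi} = \Phi$ and that $D_\Pi$ satisfies the hypotheses of Proposition~\ref{equivGLops1}, details the paper leaves implicit.
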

\begin{proof*}Translating $D_\Pi$ into a logic program using the Brewka-inspired scheme yields $\Phi$. As implied by Proposition \ref{equivGLops1}, the $\dlGLop$-stable sets of $D_\Pi$ correspond to the stable models of $\Phi$.  However, by Prop. \ref{stablemodels2}, there is a 1-1 correspondence between the stable models of  $\Phi$ and those of $\Pi$.
\mathproofbox
\end{proof*}

\begin{example}\label{ambpropex}The defeasible theory from Example \ref{ambiex} and its logic program translation are shown again below.

\begin{center}
\begin{tabular}{cc}
\begin{minipage}{.4\textwidth}
\begin{enumerate}
\item  $\myemptyset \Rightarrow p$
\item  $\myemptyset \Rightarrow \neg p$
\item  $\{p\} \Rightarrow \neg q$
\item  $\myemptyset \Rightarrow q$
\end{enumerate}\end{minipage}
 &
\begin{minipage}{.40\textwidth}
\begin{enumerate}
\item $p \neck \naf \neg p$
\item $\neg p \neck \naf p$
\item $\neg q \neck \naf q, p$
\item $q \neck \naf \neg q$
\end{enumerate}
\end{minipage}
\end{tabular}
\end{center}
The $\dlGLop$-stable sets of the defeasible theory are $\{p,q\}$,  $\{p,\neg q\}$, and $\{\neg p, q\}$. These are also the stable models of the counterpart logic program. The only $\beta$-stable set is $\{q\}$, however. Neither $p$ nor $\neg p$ can appear in any stable set (the rules for them would be deleted in any $\beta$-reduct), and this implies that $\neg q$ cannot appear, either.
\end{example}

As reported earlier \cite{maier2009}, ADL is more conservative than NDL, in the sense that for all $D$ and $p$,  if $D \dsmodels_{ADL} p$, then $D \dsmodels_{NDL} p$.
From this, it readily follows that  $D \dsmodels_{ADL} p$ implies $D \dsmodels_{\beta} p$.
However, the similar claim does not hold if $\dsmodels_\alpha$ is used in place of $\dsmodels_{ADL}$. That is, $D \dsmodels_{\alpha} p$ does not  imply $D \dsmodels_{\beta} p$.

\begin{example}$D = \langle R, C_{MIN}, \varnothing\rangle$, where $R$ is

\begin{enumerate}
\item $\myemptyset \Rightarrow p$
\item $\myemptyset \Rightarrow \neg p$
\item $\{p\} \Rightarrow q$
\item $\{\neg p\} \Rightarrow q$
 \end{enumerate}
\end{example}
Here, the $\alpha$-stable sets are $\{p,q\}$ and $\{\neg p, q\}$, and so  $D \dsmodels_\alpha q$. However, the only $\beta$-stable set is $\myemptyset$, which implies $D \dsrefutes_\beta q$.

ADL is also more conservative than NDL in the sense that $D \dsrefutes_{ADL} p$ implies $D \dsrefutes_{NDL} p$.   We don't know yet whether $D \dsrefutes_{\alpha} p$ implies $D \dsrefutes_{\beta} p$.

\section{Related Work}\label{relatedwork}

As stated above, NDL \cite{nute1999,nute2001,donnelly} and its ambiguity propagating counterpart ADL \cite{maier2006} were the first defeasible logics to incorporate failure-by-looping, and this cycle check essentially requires the proof-systems to be tree-based---different branches of computation must be kept distinct. In  BDL \cite{billington1993} and most other variants of defeasible logic, proofs are linear sequences of tagged literals. In these logics, cycles cannot be detected, and this affects the conclusions they can draw.  Maher and Governatori \shortcite{maher99}, however, do provide a well-founded semantics for BDL which correctly handles cycles. Presumably, the BDL proof system is sound but not complete relative to this semantics.

The logics based on BDL also differ from NDL and ADL in that they make a distinction between \emph{strict} and \emph{defeasible} derivations.   E.g.,  the expression $\definitely p $  in a derivation indicates that $p$ is derivable using only the strict rules of a theory, while $\bdldefeasibly p$ means that $p$ is derivable using the theory as a whole (the corresponding negative expressions $\definitelyno p$ and $\bdldefeasiblyno p$ indicate that $p$ is refutable).  Significantly,  if the body of a strict rule $r$ is only defeasibly derivable, then the rule is treated as a defeasible rule, i.e. a rule which can be defeated.   This prevents BDL from inferring contradictions except for those due to strict rules alone.

\begin{example}\label{intuitions}\mbox{$D = \langle R, C_{MIN}, \myemptyset\rangle$, $R$ is }

\begin{center}
\begin{minipage}{0.9\textwidth}
\begin{tabular}{lll}
1. $\myemptyset\Rightarrow married$ &
2. $\{married\} \rightarrow \neg bachelor$ &
3. $\myemptyset \Rightarrow bachelor$\\
\end{tabular}
\end{minipage}
\end{center}
\end{example}

In BDL and its variants, $married$ and
$bachelor$ do not conflict, and so $married$  is  defeasibly derivable (there is a proof ending in $\bdldefeasibly married$).  However, since the body of rule 2 is only defeasibly derivable, rule 2 is considered defeasible. Since BDL blocks ambiguity, in that logic both $bachelor$ and $\neg bachelor$ are defeasibly refuted (in the ambiguity propagating logic described by Antoniou et al. \citeyear{antoniou1}, both literals are ambiguous).
 In contrast, if conflict sets are closed under strict rules, then NDL and ADL hold that $married$ and $bachelor$
conflict and refrain from deriving either (they are refuted in NDL and ambiguous in ADL).  Antoniou  \citeyear{antoniou2006} calls the approach taken in ADL and NDL the
``purist view'', and he defends the alternative.
Brewka \citeyear{brewka} rejects the dual treatment of strict rules, however:
Strict rules are used to specify definitions, necessary relationships, etc.
To treat them sometimes defeasibly undermines this.  Essentially the same argument was made when the semantics for ADL and NDL was first developed \cite{maier2009}.
Extended conflict sets were introduced in NDL and ADL to avoid drawing inconsistent conclusions based on defeasible rules while at the same time maintaining the monolithic nature of strict rules.

The first ambiguity propagating defeasible logics appeared around the
year 2000  \cite{antoniou2000}. Up to that
point, all defeasible logics were ambiguity blocking. In
\cite{antoniou2000}, the basic propagating logic---based on BDL---is
 presented as a system embedded in a logic program.  A formal proof system appeared separately
\cite{antoniou1,antoniou2001b}.

BDL itself extends an earlier logic
\cite{nute1989,billington1990}. Specifically, BDL adds variables and function symbols to the logic, and (importantly) it  allows the precedence relation to range over both strict and defeasible rules.  In the earlier
logic (as in NDL and ADL), strict rules are superior to all defeasible rules and no
strict rule is superior to any other strict rule. In his analysis, Brewka \citeyear{brewka} shows that when the
precedence relation is restricted to defeasible rules, the defeasible logic is sound but not
complete \emph{wrt} his prioritized well-founded semantics.

In a separate line of work, David Billington has developed a family of formalisms that are generally called \emph{plausible logic} (Billington \citeyearNP{Billington04,billington2005a,billington2005b,Billington08};
Billington and Rock \citeyearNP{Billington01}).  Plausible logic is based on defeasible logic, using both strict and defeasible rules, but it expands it to handle arbitrary clauses. Unlike in defeasible logic, disjunctions can be proved.  Extended conflict sets are not used, but the logics have what is called the \emph{general conflict property} \cite{Billington08}, meaning that defeasible rules conflict if they cannot all fire without contradicting the strict part of the theory. Proofs are again sequences of tagged formulas, and these tags are used to define multiple consequence relations (which correspond to different levels of certainty). Through the use of tags, the proof system simultaneously allows both the blocking and propagation of ambiguity. Loop detection is discussed in (Billington \citeyearNP{Billington04,Billington08}).  
Given the number of NMR formalisms in existence today and the differing intuitions they embody, a formalism such as plausible logic---which attempts to unify these intuitions into a single system---appears very attractive.

In \cite{billington2007}, multiple semantics for plausible theories are provided, corresponding to differing intuitions  about acceptable consequences. Plausible theories are related to default theories \cite{reiter:1}, and it is shown how the framework can provide an ambiguity blocking semantics for default logic. Given the known relationships between default logic and the stable model semantics for logic programs \cite{marek1989}, the work in \cite{billington2007} can be seen as applying to logic programs.

Other variants of defeasible logic have been related to different NMR formalisms. A Dung-like
argumentation semantics for BDL and its variants is provided in \cite{governatori2000} and \cite{governatori2004}.  The relationship between defeasible logic (again, an ambiguity propagating variant of BDL)  and default logic is addressed in \cite{antoniou2001b}. A means of translating defeasible theories into default
theories is given, and it is shown that every defeasible consequence appears in every extension of the corresponding default theory. The
paper does not address refutations---i.e., it is not proven whether a literal defeasibly refuted is absent from every default extension.

The logic-programming embedding used by Antoniou et al. first appeared in \cite{maher99}.  It is shown there that the BDL-consequences of a defeasible theory correspond to those of the counterpart  program  under the Kunen semantics \cite{kunen}. The same paper presents the well-founded semantics for BDL mentioned above and shows that the consequences  under this semantics correspond to the well-founded model of the program.  In
\cite{antoniou05}, it is shown that under the translation, the
conclusions of the defeasible theory correspond to the intersection
of stable models of the program. This result holds only for what the authors call
\emph{decisive} theories---theories in which every literal is either
provable or refutable (or, equivalently, theories whose dependency graph is acyclic). Without decisiveness, the correspondence
holds only in one direction: every literal provable in the defeasible
logic appears in the intersection of stable models.

We note that the translation used by Antoniou et al. is not at all like the Brewka-inspired scheme described above, and in our opinion it does not by itself expose a close relationship between defeasible logic and logic programming. In their method of translation, the defeasible logic proof system is explicitly encoded in the logic program.  E.g., the proof-conditions governing strict derivations are represented (in Prolog notation) as

\begin{verbatim}
     definitely(X):-
          fact(X).

     definitely(X):-
          strict(RuleID,X, [Y1, ... , Yl]),
          definitely(Y1), ... , definitely(Yn).
\end{verbatim}

\noindent{}A statement $X$ is definitely (strictly) derivable ($\definitely X$) if $X$ is a fact of the theory, or if there is a strict rule with head $X$ and every literal of the body is also definitely derivable.  The rules of a defeasible theory are represented as facts in the logic program. E.g.,

\begin{verbatim}
     strict(rule1, bird(a), [swan(a)]).
     defeasible(rule2, white(a), [swan(a)]).
\end{verbatim}
In this fashion, the logic program encodes both the defeasible theory (as terms appearing in facts and rules) and the proof system itself. In the Brewka-inspired scheme, it is only the defeasible theory that is translated and not the entirety of the proof system.  Because of this, we  consider the relationships between ADL and the WFS, proven above, to be more insightful.

The fixpoint semantics for NDL and ADL appear in \cite{maier2009}, and it is shown there that the proof systems for NDL and ADL are sound with respect to their counterpart semantics, and that they are complete for locally finite theories.  It is also shown there that, when the priorities on rules are transitive, ADL and NDL  satisfy versions of Cut and Cautious Monotony (that is, they are cumulative). It is widely accepted that a good  nonmonotonic formalism should satisfy these.

Defeasible theories such as the one shown in Example \ref{ADLdiffersWFS} are problematic for both ADL and NDL.  In that example, $q$ is not well-founded in either NDL or ADL, but it intuitively should be  (in the corresponding logic program, $q$ is indeed well-founded). Examples such as this show that, while extended conflict sets are needed in some cases to draw reasonable conclusions, their use can cause problems in other cases. An alternative to using extended conflict sets is to keep conflict sets minimal while adding all possible transpositions of strict rules to the defeasible theory. If this is done, then the intuitively correct result can be drawn in Example \ref{ADLdiffersWFS}.  Doing this (or else closing conflict sets under strict rules), allows NDL and ADL to satisfy Consistency Preservation. That is, the logics cannot be used to derive contradictions that do not follow from the strict rules alone.  This is shown in \cite{maier2009}.

\section{Conclusion}

Nute's logic NDL was developed in isolation of the well-founded semantics, but the desire to handle theories containing cycles appears to be the same.
While it is unsurprising that the consequences under NDL do not correspond to those of the WFS---NDL blocks ambiguity while the WFS propagates it---we have shown here that under  natural translations of defeasible theories into logic programs (and vice versa), the consequences according to ADL and the WFS actually do coincide. This, in a sense, is surprising, as ADL was developed by making only a minor modification to the proof system of NDL.

The present research was initiated with an eye toward practicality. The ability to translate defeasible theories into logic programs means that existing logic programming systems
 can be used to reason according to ADL.  In the other direction,  NDL indirectly provides an ambiguity blocking semantics for logic programs, and ADL provides a representation of logic programs under the WFS that in some cases is intuitively easier to comprehend (this is an arguable point; nevertheless,  we suppose that some at least will find $\Rightarrow$ more readily understood than default negation).

The antimonotone operator defined for ADL only works properly when defeaters are not present in the defeasible theory and when the priority relation over rules is empty. Both defeaters and priorities can in fact be compiled way, however. That is, a defeasible theory $D$ of ADL or NDL can be transformed into an equivalent one $E$ such that $R_u = \myemptyset$, $\prec_{} = \myemptyset$, and $C$ is minimal. This is shown in \ref{elimDefeaters}.  A similar transformation is discussed in the context of other defeasible logics in \cite{antoniou2001}.  Nevertheless, while the elimination of priorities and defeaters allows the use of $\dlGLop$ to compute all of the ADL consequences of a theory, it is not a very satisfying solution, as it requires expressing important elements of the logic (e.g., conflicts, priorities) directly in the rules of the theory. In that sense, the transformation is similar to the embedding noted above of BDL  into logic programs.
  An operator which does not require any sort of transformation in order to do its work would be far better.  

Similar work on adding priorities to the WFS has been performed, notably by Brewka \citeyear{brewka96}, and also by
Torsten Schaub and Kewen Wang \citeyear{schaub2001}. Both have  developed prioritized well-founded semantics for extended logic programs, and in both cases, the models can be computed in polynomial time relative to the size of the program. At this point, we don't know how ADL relates to these formalisms,  and we haven't investigated whether their way of handling preferences can be easily adopted for use with ADL (or other defeasible logics).  It is certainly the case, however, that the two logic programming formalisms yield  results different than ADL, for the simple reason that both formalisms are explosive. E.g., in both formalisms, the well-founded model of the program

\begin{enumerate}
\item $p$
\item $\neg p$
\item $q \neck r, s, t$
 \end{enumerate}
is the set of all literals.  In contrast, $q$ would be considered unfounded according to ADL. In our view, this is the correct conclusion, as we really have no reason to believe $q$.  Other varieties of defeasible logic would similarly consider $q$ unfounded; none would conclude $q$. By their  nature, defeasible logics are paraconsistent.

\appendix

\section{NDL and ADL proof systems}\label{PS}

Proofs in NDL and ADL form argument trees, with nodes labeled with tagged literals (for a given node $n$, $label(n)$ refers to the label of $n$). In earlier defeasible logics, such as BDL, proofs are linear sequences of tagged literals.

\begin{definition}
Let $D$ be a defeasible theory. A
\emph{defeasible argument tree for} $D$ is a finite tree $\tau$ such that
every node of $\tau$ is labeled with one of $+p$ or $-p$, where $p$ is any literal in $Lit(D)$. If $\tau$ is a defeasible argument tree for $D$ and $n$ is a node in $\tau$, then $\tau$ is a \emph{positive} node iff $n$ is labeled $+p$, and $n$ is a \emph{negative} node iff $n$ is labeled $-p$.
\end{definition}

\begin{definition}Let $A$ be a set of
literals, and $n$ a node of a defeasible argument tree $\tau$.
    \begin{itemize}
    \item[1.]  $A$ \emph{succeeds} at $n$ iff for all $q \in A$,
    there is a child  of $n$ labeled $+q$.
       \item[2.]  $A$ \emph{fails} at $n$ iff there is a $q \in A$
       and a child  of $n$ labeled $-q$.
\end{itemize}
\end{definition}

A tree over $D$ with root $+p$ indicates that $p$ is
defeasibly derivable from $D$; a tree over $D$ with root $-p$ indicates that $p$ is defeasibly refuted. In order to count as a valid proof in NDL or ADL, the nodes of the tree
 must satisfy certain conditions.  We discuss the conditions for NDL first.

\begin{definition}\label{NDLProofB} An argument tree $\tau$ over defeasible theory $D$ is an NDL-\emph{proof} for $D$ iff for each node $n$ of $\tau$, one of the following
obtains.
    \begin{itemize}

    \item[1.]  $label(n) =  +p$ and either

\begin{itemize}

    \item[a.]  there is an $r \in R_s[p]$ such that $body(r)$ succeeds at $n$, or

    \item[b.]  there is an $r \in R_d[p]$ such that

    \begin{itemize}
          \item[i.] $body(r)$ succeeds at $n$, and
          \item[ii.]  for all $c \in C[p]$ there is a $q \in c-\{p\}$
     such that for all $s \in R[q]$, either $body(s)$ fails at $n$ or else $s \prec
     r$.

\end{itemize}

\end{itemize}
         \item[2.]  $label(n) = -p$ and
\begin{itemize}

    \item[a.]  for all $r \in R_s[p]$, $body(r)$ fails at $n$, and

    \item[b.]  for all $r \in R_d[p]$, either
    \begin{itemize}
          \item[i.] $body(r)$ fails at $n$, or
          \item[ii.] there is a $c \in C[p]$ such that for all $q \in c-\{p\}$,
          there is a $s \in R[q]$ such that $body(s)$ succeeds at $n$  and $s \nprec
          r$.
\end{itemize}
\end{itemize}
   \item[3.]  $label(n) = -p$ and $n$ has an ancestor $m$ in $\tau$  with $label(m) = -p$,
   and all nodes between $n$ and $m$ are negative.
\end{itemize}
\end{definition}

\begin{definition}\label{NDLProofConsequences}Let  $D$ be a defeasible theory and $\tau$  an NDL-proof for $D$.

\begin{enumerate}
\item
$\tau$ is an NDL-\emph{proof of} $p$ \emph{in} $D$ iff $\tau$ is an NDL-proof for $D$, $p \in Lit(D)$, and the root node of $\tau$ is labeled $+p$. If such a proof exists, then $D \dmodels_{NDL} p$.

\item
$\tau$ is an NDL-\emph{refutation of} $p$ \emph{in} $D$ iff $\tau$ is an NDL-proof for $D$, $p \in Lit(D)$, and the root node of $\tau$ is labeled $-p$.
If such a refutation exists, then $D \drefutes_{NDL} p$.
\end{enumerate}
\end{definition}

The third condition in Definition \ref{NDLProofB} is called \emph{failure-by-looping}, and it prevents a literal from being derived using a circular argument. According to the condition, the nodes between $n$ and $m$ must all be negative. This ensures that literals are not simultaneously provable and refutable.
It is failure--by--looping that requires the proofs to be trees rather than linear sequences of literals.

NDL is an ambiguity blocking logic. Returning to Example \ref{ambiex}, the conclusions are
$D \drefutes_{NDL} p$, $D \drefutes_{NDL} \neg p$, $D \drefutes_{NDL} \neg q$, and $D \dmodels_{NDL} q$.
NDL can be modified, however, in a simple way to make it propagate ambiguity---yielding ADL. In ADL, a defeasible rule $r$ can only be defeated by a conflicting set of rules that are strict or else of {higher priority} (in NDL, rules simply not inferior to $r$ can be used).
 The modification to the proof system is shown in Definition \ref{ADLProofB}.
 Proofs and refutations in ADL are otherwise defined as they are in NDL.

\begin{definition}\label{ADLProofB}
An argument tree $\tau$ for $D$ is an ADL-\emph{proof for} $D$ iff  each
node $n$ of $\tau$ satisfies conditions 1, 2.a, 2.b.i, or 3 of Definition \ref{NDLProofB}, or else the modified condition 2.b.ii below:

\begin{center}
\begin{minipage}{0.9\textwidth}
\begin{itemize}
          \item[2.b.ii.] there is a $c \in C[p]$ such that for all $q \in c-\{p\}$,
          there is a $s\in R[q]$ such that $body(s)$ succeeds at $n$  and $s$ is strict or else $r \prec
          s$.
\end{itemize}
\end{minipage}
\end{center}
\end{definition}

Proofs in ADL and NDL are finite trees, and so must work with finite sets of literals. The fixpoint semantics for NDL and ADL can work with infinite sets, however, and because of this the proof systems cannot be complete with respect to their counterpart semantics. Nevertheless, the proof systems are sound with respect to the semantics, and for \emph{locally finite theories} they are also complete.

\begin{definition}\label{Dep} Let $D$ be a defeasible theory and  $p \in Lit(D)$.

\begin{enumerate}
\item $Dep_D(p)$ is the smallest set such that (i)
    $p\in Dep_D(p)$; (ii) for each $q \in Dep_D(p)$, if  $c \in C[q]$,  then $c \subseteq Dep_D(p)$; and (iii)
    for each $q \in Dep_D(p)$, if  $r \in R[q]$,  then $body(r) \subseteq Dep_D(p)$.
\item Literal $p$ is \emph{locally finite in} $D$ iff $Dep_D (p)$ is finite.
 \item $D$ is \emph{locally finite}  iff each literal of $Lit(D)$ is locally finite in $D$.
\end{enumerate}
\end{definition}

\begin{proposition} \cite{maier2009} If $D$ is a defeasible theory, $p \in Lit(D)$, and $L$ one of NDL or ADL, 

\begin{enumerate}
\item  $D \dmodels_L p$ implies $D \dsmodels_L p$, and    $D \drefutes_L p$, implies  $D \dsrefutes_L p$.
\item If $p$ is locally finite in $D$,  $D \dsmodels_L p$ implies $D \dmodels_L p$ and
 $D \dsrefutes_L p$ implies $D \drefutes_L p$.
 \end{enumerate}
\end{proposition}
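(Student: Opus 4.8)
The statement has two halves. Part~(1) is the \emph{soundness} of the NDL and ADL proof systems with respect to the well-founded semantics of Section~\ref{WFSSec}, and part~(2) is their \emph{completeness} for locally finite theories, where local finiteness is exactly what makes it possible to exhibit a \emph{finite} argument tree. The plan is to prove everything uniformly in $L$: wherever, for $L$ equal to NDL, one appeals to condition~2.b.ii of Definition~\ref{NDLProofB} and to clause~2(b) of Definition~\ref{unfoundedSetDL}, one appeals instead, for $L$ equal to ADL, to the modified condition~2.b.ii of Definition~\ref{ADLProofB} and to the ADL reading of clause~2(b) of Definition~\ref{unfoundedSetDL}; that single substitution is the only place the two logics diverge.

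\textbf{Part (1).} I would argue by induction on the number of nodes of an $L$-proof $\tau$, proving at once that if the root of $\tau$ is $+p$ then $p \in \itrue{D,WF}$, and if the root is $-p$ then $p \in \ifalse{D,WF}$, where $\tva{D,WF} = \langle \itrue{D,WF}, \ifalse{D,WF}\rangle = lfp(W_D)$. The structural fact that drives the induction is that failure-by-looping (condition~3 of Definition~\ref{NDLProofB}) never justifies a positive node and always links a looping node to an ancestor \emph{along an all-negative path}; hence the subtree below any positive node, and the subtree below any negative node that is itself a child of a positive node, is again a valid $L$-proof with strictly fewer nodes, so the inductive hypothesis applies to it. For a positive root $+p$ justified by a rule $r$, its positive children give $body(r) \subseteq \itrue{D,WF}$ and its negative children give, for each $c \in C[p]$, a $q \in c-\{p\}$ every rule $s$ of which is either $\prec$-below $r$ or has a body literal in $\ifalse{D,WF}$; this is exactly the definition of a witness of provability for $p$ with respect to $\tva{D,WF}$, so $p \in T_D(\tva{D,WF}) = \itrue{D,WF}$. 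For a negative root $-p$, every positively-labelled literal of $\tau$ is already in $\itrue{D,WF}$ by the inductive hypothesis, and I then claim that the set $S$ of \emph{all} literals labelling negative nodes of $\tau$ is unfounded with respect to $D$ and $\tva{D,WF}$: for each occurrence $-q$, if it is justified by condition~3 one chases the chain of equal-labelled ancestors upward, which terminates (finiteness of $\tau$; the root if necessary) in an occurrence of $-q$ justified by condition~2, and reads conditions~2a and~2b of Definition~\ref{NDLProofB} off that occurrence as clauses~1 and~2 of Definition~\ref{unfoundedSetDL}, using $\itrue{D,WF}$ for the positive children there. Then $S \subseteq U_D(\tva{D,WF}) = \ifalse{D,WF}$, so $p \in \ifalse{D,WF}$.

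\textbf{Part (2).} Here I would build the argument tree by recursion. Because $p$ is locally finite, $Dep_D(p)$ of Definition~\ref{Dep} is finite, and I would use this three ways: only finitely many rules and conflict sets are relevant to $p$, the sequence $(\tva{D,\lambda})$ restricted to $Dep_D(p)$ closes at a finite stage, and any tree labelled from $Dep_D(p)$ is finitely branching. If $p \in \itrue{D,WF}$, take the least (finite) $k$ with $p \in \itrue{D,k}$, fix a witness of provability $r$ for $p$ with respect to $\tva{D,k-1}$, label the root $+p$ by $r$, and recursively attach $+b$-children for the body literals of $r$ (which live at strictly smaller index) and $-v$-children for the required falsified conflicting-rule bodies. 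If $p \in \ifalse{D,WF} = U_D(\tva{D,WF})$, then $\ifalse{D,WF}$ is an unfounded set, and its defining clauses dictate how to expand a node $-q$: a $-v$-child for each strict or falsified defeasible rule body, and $+b$-children realising the conflict clause when a defeasible rule is not otherwise killed. The positive recursions terminate because the approximant index strictly decreases; the negative recursions are controlled by failure-by-looping, since along any all-negative branch the labels lie in the finite set $\ifalse{D,WF} \cap Dep_D(p)$, so a label eventually repeats and the node is there justified by condition~3. K\"onig's lemma then yields a finite tree, which by construction is an $L$-proof (respectively $L$-refutation) of $p$.

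\textbf{Main obstacle.} The routine part is translating the semantic conditions into the proof conditions and back. The delicate point, in both directions, is the handling of failure-by-looping. In part~(1) one must be sure that the set of negatively-labelled literals carved out of a refutation is genuinely unfounded and never swallows a literal that is actually well-founded---which is precisely why the two implications must be run together, the positive nodes supplying $\itrue{D,WF}$ to the unfounded-set check---and that the loop-chasing step always lands on a condition-2-justified node. In part~(2) one must verify that closing a branch by condition~3 never breaks its ``all nodes between are negative'' side condition, i.e.\ that no positive node can sit inside a would-be cycle. Both points reduce to the single structural observation that an all-negative path cannot pass through a positive node, so once that is isolated the remainder is bookkeeping.
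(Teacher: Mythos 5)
The paper does not prove this proposition: it is stated in \ref{PS} and explicitly imported from \cite{maier2009}, so there is no in-paper argument to compare yours against. Judged on its own terms, your reconstruction has the right architecture, and you have correctly isolated the genuinely delicate points: the self-containedness of subtrees with respect to failure-by-looping (an all-negative path cannot cross a positive node, so a condition-3 back-pointer never escapes the subtree rooted at a child of a positive node), and the need to run the positive and negative implications together. Your soundness argument---induction on node count, reading a witness of provability off a positive root, and packaging \emph{all} negatively labelled literals of a refutation into one unfounded set, with condition-3 occurrences resolved by chasing the chain of equal-labelled ancestors up to a condition-2 occurrence---is correct in shape, and the uniformity in $L$ is legitimate because NDL and ADL differ only in the matching pair of clauses (2.b.ii of Definitions \ref{NDLProofB}/\ref{ADLProofB} versus clause 2(b) of Definition \ref{unfoundedSetDL}).

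Two points in Part (2) are asserted rather than proved and would need to be filled in. First, that the approximation sequence restricted to $Dep_D(p)$ stabilises at a \emph{finite} stage is true but not free: it rests on $Dep_D(p)$ being closed under rule bodies and conflict sets (Definition \ref{Dep}), so that membership of literals of $Dep_D(p)$ in $\itrue{D,\lambda+1}$ and $\ifalse{D,\lambda+1}$ depends only on $\tva{D,\lambda}$ restricted to $Dep_D(p)$; one then argues that a monotone stepwise-generated sequence over a finite lattice cannot first stall at a finite stage and later grow at a limit. Second, your two termination devices (decreasing stage index for positive recursions, label repetition within all-negative segments) must be welded into a single well-founded measure, since branches alternate polarity: the key observation is that the positive children demanded by clause 2(b)(i) of the unfounded-set condition at stage $\lambda$ have bodies in $\itrue{D,\lambda-1}$, so the stage index strictly drops at every negative-to-positive crossing, which bounds the number of alternations along a branch. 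Finally, a small correction: local finiteness bounds the set of \emph{literals} relevant to $p$, not the set of \emph{rules} ($R[q]$ may be infinite for $q \in Dep_D(p)$); finite branching of your constructed tree nevertheless holds because children are indexed by literals of $Dep_D(p)$, a single child labelled $-v$ witnessing failure for every rule whose body contains $v$.
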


\section{Eliminating Priorities and Defeaters}\label{elimDefeaters}

If $D=\langle R_D,C_D,\prec_D\rangle$ is a defeasible theory such that $C_D[p]$ is finite for all $p \in Lit(D)$, then $D$ can be translated into an equivalent theory $E=\langle R_E,C_E,\prec_E\rangle$, called the \emph{defeater-- and priority--free form} of $D$, in which $R_{E,u} = \varnothing$, $\prec_{E} = \varnothing$, and $C_E$ is minimal. Specifically, $E$ is the smallest theory such that
the following hold (in the following, $A \dashrightarrow p$  stands for an arbitrary rule):

\begin{enumerate}
\item If rule $r$: $A \dashrightarrow p$ is in $R_D$,
the  rules

\begin{enumerate}

\item[$r'$:] $A \rightarrow su(r)$

\item[$r''$:] $\{su(r)\} \dashrightarrow fi(r)$
\end{enumerate}
appear in $R_E$, with $r''$ strict (defeasible) if $r$ is strict (defeasible or a defeater).

\item If $r$: $A \dashrightarrow p$ is strict or defeasible, the
following also occurs in $R_E$:

\begin{enumerate}
\item[$r'''$:] $\{fi(r)\} \rightarrow p$.
\end{enumerate}

\item Let $c = \{q_1, \ldots, q_n, p\} \in C_D[p]$ be a conflict set,
$r \in R_{D,d}[p]$ and $s_1,  \ldots, s_n$ rules such that $s_i \in
R_D[q_i]$ and  $s_i \nprec_D r$. The following rule appears in $R_E$:

\begin{center}$\{su(s_1),  \ldots, su(s_n)\} \dashrightarrow \neg fi(r)$.\end{center}
It is strict if $s_i \in R_s$ or $r \prec_D s_i$ for all $s_i$, and defeasible otherwise.
\end{enumerate}

The rules of $E$ explicitly encode
when a rule $r$ of $D$  is supported and when it may fire. In item 3, only rules $s_i$ that could defeat $r$ are used. For any conflict set $c \in C_D[head(r)]$,
let $trans(c,r)$ denote the set of rules for $\neg fi(r)$ created
from $c$.  Importantly, the conflict sets of $E$ are minimal.  If $D$ itself uses minimal conflict sets, then if $r \in R_{D,sd}[p]$ and $s
\in R_{}[\neg p]$, condition 3 above reduces to $\{su(s)\} \rightarrow
\neg fi(r)$ if $s$ is strict or $r \prec_D s$, and to $\{su(s)\} \Rightarrow \neg fi(r)$ if  $s \nprec_D r$.

\begin{proposition}Let $D$ be a defeasible theory such that for all $p \in Lit(D)$, $C_D[p]$ is finite, and let $E$ be the defeater-- and priority--free form of $D$.
If $p \in \itrue{D,WF}$ ($p \in \ifalse{D,WF}$),  then $p \in \itrue{E,WF}$ ($p \in \ifalse{E,WF}$).
\end{proposition}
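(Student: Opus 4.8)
The plan is a transfinite induction on the approximation sequence $(\tva{D,0}, \tva{D,1}, \ldots)$ with limit $wfm(D) = \tva{D,WF}$, establishing simultaneously that for every ordinal $\lambda$ and every $p \in Lit(D)$: (a) $p \in \itrue{D,\lambda}$ implies $p \in \itrue{E,WF}$, and (b) $p \in \ifalse{D,\lambda}$ implies $p \in \ifalse{E,WF}$. Taking $\lambda$ up to the closure ordinal then gives the proposition; the auxiliary atoms $su(r)$ and $fi(r)$ are dealt with en route. The standing tools would be: $\tva{E,WF}$ is a fixpoint of $W_E$, so $\itrue{E,WF} = T_E(\tva{E,WF})$ and $\ifalse{E,WF} = U_E(\tva{E,WF})$; and the transparency of the construction --- in $E$ the only rule for $su(r)$ is $r'\colon body(r) \srule su(r)$, the only rule for $fi(r)$ is $r''\colon \{su(r)\} \dashrightarrow fi(r)$, the only rules for $p \in Lit(D)$ are the strict $r'''\colon \{fi(r)\} \srule p$ with $r \in R_{D,sd}[p]$, and the only rules for $\neg fi(r)$ are those in $\bigcup_{c \in C_D[head(r)]} trans(c,r)$. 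Since $C_E$ is minimal and $\prec_E = \varnothing$, every atom $x$ of $E$ has $C_E[x] = \{\{x, \neg x\}\}$. I intend the argument to go through for both $L = \mathrm{NDL}$ and $L = \mathrm{ADL}$, the only point of divergence being the unfounded-set step of (b).

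For the successor step of (a): a literal $p \in \itrue{D,\lambda}$ has a witness of provability $r$ w.r.t.\ $D$ and $\tva{D,\lambda-1}$, and by the inductive hypothesis $body(r) \subseteq \itrue{E,WF}$, so $su(r) \in \itrue{E,WF}$ through $r'$. If $r$ is strict, $r''$ and $r'''$ are strict and give $fi(r), p \in \itrue{E,WF}$ at once. If $r$ is defeasible I would show $r''$ is a witness for $fi(r)$: since the only conflict set of $fi(r)$ is $\{fi(r), \neg fi(r)\}$ and $\prec_E = \varnothing$, this reduces to showing that every rule for $\neg fi(r)$ has a body atom in $\ifalse{E,WF}$. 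Any such rule sits in some $trans(c,r)$, with $c = \{q_1, \ldots, q_n, head(r)\}$ and body $\{su(s_1), \ldots, su(s_n)\}$ where $s_i \in R_D[q_i]$ and $s_i \nprec_D r$; applying the witness condition for $r$ to $c$ yields an index $j$ with $body(s_j) \cap \ifalse{D,\lambda-1} \neq \varnothing$ (the other disjunct, $s_j \prec_D r$, is excluded by $s_j \nprec_D r$), and then $su(s_j) \in \ifalse{E,WF}$ because the sole rule $s_j'$ for $su(s_j)$ has a body literal in $\ifalse{D,\lambda-1} \subseteq \ifalse{E,WF}$ (inductive hypothesis), so $\{su(s_j)\}$ is unfounded w.r.t.\ $E$ and $\tva{E,WF}$. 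Hence $fi(r) \in \itrue{E,WF}$, and then $p \in \itrue{E,WF}$ via $r'''$. The $\lambda = 0$ and limit cases are immediate.

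For the successor step of (b), $\ifalse{D,\lambda}$ is unfounded w.r.t.\ $D$ and $\tva{D,\lambda-1}$, and I would exhibit
\[
S \;=\; \ifalse{D,\lambda}\ \cup\ \{\, su(r) : body(r) \cap \ifalse{D,\lambda} \neq \varnothing \,\}\ \cup\ \{\, fi(r) : \exists q \in \ifalse{D,\lambda},\ r \in R_{D,sd}[q] \,\}
\]
and prove it is unfounded w.r.t.\ $E$ and $\tva{E,WF}$, whence $S \subseteq U_E(\tva{E,WF}) = \ifalse{E,WF}$ and $p \in \ifalse{E,WF}$. Closure is checked clause by clause: a literal $q \in \ifalse{D,\lambda}$ has in $E$ only the strict rules $\{fi(r)\} \srule q$ with $r \in R_{D,sd}[q]$, and each such $fi(r)$ lies in $S$; an atom $su(r) \in S$ has only $r'$, whose body meets $\ifalse{D,\lambda} \subseteq S$; an atom $fi(r) \in S$ (so $head(r) = q \in \ifalse{D,\lambda}$, $r \in R_{D,sd}[q]$) has only $r''\colon \{su(r)\} \dashrightarrow fi(r)$, and the $D$-unfounded-set clause for $r$ gives either $body(r) \cap \ifalse{D,\lambda} \neq \varnothing$ --- then $su(r) \in S$ --- or a conflict set $c = \{q_1, \ldots, q_n, q\}$ and rules $s_i \in R_D[q_i]$ with $body(s_i) \subseteq \itrue{D,\lambda-1}$ and the appropriate priority side-condition, in which case inductive hypothesis (a) gives $su(s_i) \in \itrue{E,WF}$ and the rule $\{su(s_1), \ldots, su(s_n)\} \dashrightarrow \neg fi(r)$ of $trans(c,r)$ witnesses the defeasible unfounded-set clause for $fi(r)$ against $\{fi(r), \neg fi(r)\}$. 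The crucial point is that this $trans(c,r)$-rule carries exactly the strictness the unfounded-set definition of $E$ requires: for NDL nothing is needed (as $\prec_E = \varnothing$), while for ADL clause (3) of the construction makes the rule strict precisely when each $s_i$ is strict or $r \prec_D s_i$ --- which is what ADL's unfounded-set clause supplies, and which (by acyclicity of $\prec_D$) also gives $s_i \nprec_D r$, so the rule genuinely belongs to $trans(c,r)$.

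The step I expect to be the main obstacle is this second half: choosing $S$ so that it is genuinely closed, and --- since the statement should cover both NDL and ADL --- checking that the one difference between their unfounded-set definitions (``$s \nprec r$'' versus ``$r \prec s$ or $s$ strict'') matches the strictness convention that clause (3) imposes on the rules of $trans(c,r)$. A secondary nuisance is bookkeeping around defeaters of $D$: a defeater may occupy an $s_i$-slot in clause (3), and it receives $r'$ and $r''$ but no $r'''$, so one must observe that its $fi$-atom is consumed by nothing and its $\neg fi$-atom produced by nothing, hence it is inert for the literals of $Lit(D)$. The positive half is comparatively routine once the ``support $\rightarrow$ firing $\rightarrow$ conclusion'' chase through $r', r'', r'''$ is set up.
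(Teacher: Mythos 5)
Your proposal is correct and follows essentially the same route as the paper's proof: a simultaneous transfinite induction on $(\tva{D,0},\tva{D,1},\ldots)$, with the positive case chased through $r',r'',r'''$ and the negative case handled by exhibiting an unfounded set containing $\ifalse{D,\lambda}$ together with the relevant $su(\cdot)$ and $fi(\cdot)$ atoms. Your set $S$ is a slightly leaner variant of the paper's $X$ (which also throws in $fi(r)$ for every $r$ whose body meets $\ifalse{D,\lambda}$), but the closure check and the NDL/ADL strictness-matching in clause (3) are exactly as in the paper.
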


\begin{proof*}The proof proceeds by induction on the sequence $\tva{D,0}$, $\tva{D,1}$, $\ldots$, and   shows that for all $\ordJ \geq 0$, if $p \in \itrue{D,\ordJ}$  then $p \in \itrue{E,WF}$, and if $p \in
\ifalse{D,\ordJ}$ then  $p \in
\ifalse{E,WF}$. The claim holds trivially for $\ordJ=0$.  Suppose that it holds for all $\ordJ < \ordK$. We can assume without loss of generality that $\ordK$ is a successor ordinal.

\begin{enumerate}
\item  Suppose  $p \in \itrue{D,\ordK}$.  Then there is an $r \in R_{D}[p]$ such that
$body(r) \subseteq \itrue{D,\ordK-1}$ and either (1) $r \in R_{D,s}$ or else
(2) $r \in R_{D,d}$ and for each $c \in C_D[p]$, there is a $q \in c -
\{p\}$ and for all $s \in R_D[q]$, $body(s) \cap \ifalse{D,\ordK-1} \neq \varnothing$ or $s
\prec_D r$. By the inductive hypothesis, $body(r) \subseteq \itrue{E,WF}$, and so  $su(r) \in
\itrue{E,WF}$. If (1) holds, $r''$ is strict, and so $p \in \itrue{E,WF}$.

So suppose (2) obtains. Then $r''$ is defeasible. By the inductive hypothesis, for every $c \in C_D[p]$, there is a $q \in c - \{p\}$ such that
for every rule $s \in R_D[q]$, either (i) $body(s) \cap \ifalse{E,WF} \neq \varnothing$, or else (ii)
$s \prec_{D} r$. In other words, if $s \nprec_{D} r$ then $body(s) \cap \ifalse{E,WF} \neq \varnothing$ and so
$su(s) \in \ifalse{E,WF}$.
As such, for every rule $t \in trans(c,r)$, $body(t) \cap \ifalse{E,WF}
\neq \varnothing$. Generalizing on $c$, for every rule $t
\in R_{E}[\neg fi(r)]$, $body(t) \cap \ifalse{E,WF}  \neq
\varnothing$. By definition of $T_{E}$ and $\tva{E,WF}$,  both $fi(r) \in \itrue{E,WF}$ and $p \in \itrue{E,WF}$.

  \item Now suppose that $p \in \ifalse{D,\ordK}$. 
Let $X = \ifalse{D,\ordK} \cup \{fi(r)|$ $r \in R_D[p]$ and $p \in
\ifalse{D,\ordK}\} \cup \{su(r),fi(r)|$ $body(r) \cap
\ifalse{D,\ordK} \neq \varnothing\}$.  We will show that $X$ is unfounded
\emph{wrt} $E$ and $\tva{E, WF}$.  Note that there are
three types of literal in $X$: Those
in $\ifalse{D,\ordK}$, and those of the form $fi(r)$ or  $su(r)$.
Regarding the first two types,
if $p\in \ifalse{D,\ordK}$, then  (by definition of $X$) $fi(r) \in X$ for all
rules $\{fi(r)\} \rightarrow p  \in R_E[p]$.
This exhausts all rules in $E$ for $p$. If $su(r) \in X$, then by
definition of $X$, $body(r) \cap \ifalse{D,\ordK} \neq \varnothing$.

Regarding the third type,
suppose $fi(r) \in X$. Recall   $r \in R_D[p]$ for some $p$, and $r'$ is the only rule in $E$ for
$fi(r)$.  If $body(r)
 \cap \ifalse{D,\ordK} \neq \varnothing$, then $su(r) \in X$.  If $body(r)
 \cap \ifalse{D,\ordK} = \varnothing$, then (by definition of $X$)  $p \in
 \ifalse{D,\ordK}$.  Since $p \in \ifalse{D,\ordK}$ and $body(r)
 \cap \ifalse{D,\ordK} = \varnothing$, $r$ (and $r'$) must be defeasible,
and there must be a $c \in C_D[p]$
such that for all $q_i \in c-\{p\}$, there is a rule $s_i \in R_D[q_i]$ such
that $body(s_i)\in \itrue{D,\ordK-1}$ and $r \prec_{D} s_i$ or $s_i$ is strict (for NDL, $s_i
\nprec_{D} r$). This implies that the rule $t:$ $\{su(s_1), su(s_2), \ldots,
su(s_\ordK)\} \rightarrow \neg fi(r)$ appears in $E$ (for NDL
$t$ is defeasible). By the inductive hypothesis, for each $s_i$,
$body(s_i) \in \itrue{E,WF}$, and so $su(s_i) \in \itrue{E,WF}$.

$X$ is thus unfounded \emph{wrt} $E$ and $\tva{E,WF}$ (i.e., $X \subseteq \ifalse{E,WF}$), and so $p \in \ifalse{E,WF}$. $\square$

\end{enumerate}
\end{proof*}

\begin{proposition}Let $D$ be a defeasible theory such that for all $p \in Lit(D)$, $C_D[p]$ is finite, and let $E$ be the defeater-- and priority--free form of $D$.
 If $p \in \itrue{E,WF}$ ($p \in \ifalse{E,WF}$), then $p \in \itrue{D,WF}$ ($p \in \ifalse{D,WF}$).
\end{proposition}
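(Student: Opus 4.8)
The plan is to prove this as the converse of the preceding proposition, restricted (as implicitly intended) to $p\in Lit(D)$, by running the mirror-image of that proof: a transfinite induction along the sequence $\tva{E,0},\tva{E,1},\ldots$ that computes $wfm(E)$, carrying a combined hypothesis about the auxiliary atoms $su(r)$ and $fi(r)$ introduced by the construction of $E$. At stage $\lambda$ I would maintain: (P1) $su(r)\in\itrue{E,\lambda}$ implies $body(r)\subseteq\itrue{D,WF}$; (P2) if $r$ is strict or defeasible and $fi(r)\in\itrue{E,\lambda}$, then $r$ is a witness of provability for $head(r)$ \emph{wrt} $D$ and $\tva{D,WF}$, so $head(r)\in T_D(\tva{D,WF})=\itrue{D,WF}$ since $\tva{D,WF}$ is a fixpoint of $W_D$; (P3) for $p\in Lit(D)$, $p\in\itrue{E,\lambda}$ implies $p\in\itrue{D,WF}$; and, for the negative side, (N1) the set $\ifalse{E,\lambda}\cap Lit(D)$ is an unfounded set of $D$ \emph{wrt} $\tva{D,WF}$, so that $\ifalse{E,\lambda}\cap Lit(D)\subseteq U_D(\tva{D,WF})=\ifalse{D,WF}$. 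Since the only rule of $E$ with head $su(r)$ is the strict $r':body(r)\rightarrow su(r)$ and $body(r)\subseteq Lit(D)$, (N1) already supplies the auxiliary fact needed inside the positive step: $su(r)\in\ifalse{E,\lambda}$ implies $body(r)\cap\ifalse{D,WF}\neq\varnothing$.

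The induction is driven by the shape of $E$. For $p\in Lit(D)$ the only rules of $E$ with head $p$ are the strict $r''':\{fi(r)\}\rightarrow p$ with $r\in R_{D,sd}[p]$; the only rule with head $fi(r)$ is $r'':\{su(r)\}\dashrightarrow fi(r)$, strict exactly when $r$ is; and, for $r\in R_{D,d}$, the only rules with head $\neg fi(r)$ are the $\{su(s_1),\ldots,su(s_n)\}\dashrightarrow\neg fi(r)$ built from a conflict set $c=\{q_1,\ldots,q_n,head(r)\}\in C_D[head(r)]$ and rules $s_i\in R_D[q_i]$ with $s_i\nprec_D r$, such a rule being strict in $E$ precisely when every $s_i$ is strict or $r\prec_D s_i$ (this is the ADL construction; for NDL the analogue is strict under the NDL side condition, and the only change below is the familiar swap of ``$r\prec_D s_i$ or $s_i$ strict'' for ``$s_i\nprec_D r$'' in the unfounded-set and witness clauses, exactly as in the preceding proof). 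Because $\prec_E=\varnothing$ and $C_E$ is minimal, the relevant NDL/ADL clauses about $fi(r)$ collapse to statements about $su(r)$ and about the bodies $\{su(s_1),\ldots,su(s_n)\}$ of the $\neg fi(r)$-rules.

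With this in hand the cases are routine in form. The positive ones unwind $T_E$ through $r'$, $r''$, $r'''$ using earlier stages of (P1) and (P3); the only one with content is (P2) for $r$ defeasible, where $fi(r)\in\itrue{E,\lambda}$ forces $su(r)\in\itrue{E,\lambda-1}$ (so $body(r)\subseteq\itrue{D,WF}$ by (P1)) and forces every $\neg fi(r)$-rule to have a body atom $su(s_i)\in\ifalse{E,\lambda-1}$, hence $body(s_i)\cap\ifalse{D,WF}\neq\varnothing$ by the auxiliary fact; a contrapositive choice argument over the product of the sets $\{s\in R_D[q_i]:s\nprec_D r\}$ then converts ``every combination of attacking rules contains a refuted one'' into ``for each conflict set $c$ some $q_i\in c-\{head(r)\}$ has all of its rules inferior to $r$ or with a refuted body'', which is exactly the conflict clause of the witness-of-provability condition. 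For (N1) one takes $p\in\ifalse{E,\lambda}\cap Lit(D)$ and checks Definition \ref{unfoundedSetDL} for $D$ against $S=\ifalse{E,\lambda}\cap Lit(D)$ and $\tva{D,WF}$: unfoundedness propagates through $r'''$ to give $fi(r)\in\ifalse{E,\lambda}$, and through $r''$ either to $su(r)\in\ifalse{E,\lambda}$ — whence, through $r'$ and monotonicity of $U_E$, $body(r)$ meets $S$, giving clauses 1 and 2(a) — or to a strict $\neg fi(r)$-rule whose body lies in $\itrue{E,\lambda-1}$, which by (P1) supplies, for each $q_i\in c-\{head(r)\}$, a rule $s_i\in R_D[q_i]$ with $body(s_i)\subseteq\itrue{D,WF}$ and $s_i$ strict or $r\prec_D s_i$, giving clause 2(b). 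Limit stages are handled by unions (all four assertions are preserved under directed unions of the $\tva{E,\kappa}$), and passing to the closure ordinal of the $E$-sequence gives $\itrue{E,WF}\cap Lit(D)\subseteq\itrue{D,WF}$ and $\ifalse{E,WF}\cap Lit(D)\subseteq\ifalse{D,WF}$, which is the proposition.

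I expect the negative direction to be the main obstacle. Because ``unfounded'' is defined relative to the set being constructed, (N1) cannot be argued literal by literal but must exhibit the whole set $\ifalse{E,\lambda}\cap Lit(D)$ as an unfounded set of $D$ at once, mirroring the treatment of $\ifalse{D,\lambda}$ in the proofs of Propositions \ref{soundness} and \ref{completeness}; and the combinatorial translation between ``every attacking rule-combination dies'' in $E$ and ``some conflict-set member is uniformly dead'' in $D$ — kept in lock-step with the ADL- (respectively NDL-) specific strict/defeasible labelling of the $\neg fi(r)$-rules — is where the bookkeeping has to be done with care.
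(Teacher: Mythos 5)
Your proposal is correct and follows essentially the same route as the paper's own proof: a transfinite induction along the sequence computing $wfm(E)$, with the positive case unwinding $T_E$ through $r'$, $r''$, $r'''$ and converting the refutation of every $\neg fi(r)$-rule into the conflict clause of the witness-of-provability condition via the contrapositive choice argument over $trans(c,r)$, and the negative case exhibiting $\ifalse{E,\lambda}\cap Lit(D)$ wholesale as an unfounded set of $D$ \emph{wrt} $\tva{D,WF}$. Your explicit invariants (P1)--(P3) and (N1) merely make the paper's bookkeeping more systematic.
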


\begin{proof*}
The proof proceeds by induction on the sequence $\tva{E,0}$, $\tva{E,1}$, $\ldots$, and
shows that for all $\ordJ \geq 0$, if $p \in \itrue{E,\ordJ}$  then $p \in \itrue{D,WF}$, and if $p \in
\ifalse{E,\ordJ}$ then  $p \in
\ifalse{D,WF}$. The claim holds trivially for $\ordJ=0$.  Suppose that it holds for all $\ordJ < \ordK$.
 We can again assume without loss of generality that $\ordK$ is a successor ordinal.

\begin{enumerate}
\item Suppose $p \in \itrue{E,\ordK}$. Then there is an  $r \in R_D[p]$, matching rules $r',r'',r''' \in R_E$, and a least $\ordG < \ordK$  such that $\{fi(r), su(r)\}\cup body(r) \subseteq
\itrue{E,\ordG}$. By the inductive hypothesis, $body(r) \subseteq
\itrue{D,WF}$. If $r \in R_{D,s}[p]$, then clearly $p \in \itrue{D,WF}$.

So suppose  $r \in
R_d[p]$.   For every $t \in R_E[\neg fi(r)]$, there is an
$su(s) \in body(t)$ such that $su(s)\in
\ifalse{E,\ordG}$. This implies that $body(s) \cap \ifalse{E,\ordG} \neq
\varnothing$. By the inductive hypothesis, $body(s) \cap \ifalse{D,WF} \neq
\varnothing$.   
Given this (and the definition of $trans(c,r)$), for any  $c\in C_D[p]$, there must be a $q \in c - \{p\}$ such that for each $s \in R_D[q]$, $body(s) \cap
\ifalse{D,WF} \neq \varnothing$ or else $s \prec_{D} r$.
By definition of $T_D$ and $\tva{D,WF}$, $p \in \itrue{D,WF}$.

\item Now suppose $p \in \ifalse{E,\ordK}$ and let $a$ be any literal in $Lit(D) \cap \ifalse{E,\ordK}$.  Then for each rule $r'''$:
$\{fi(r)\} \rightarrow a$, $fi(r) \in \ifalse{E,\ordK}$ and for each
rule $r''$: $\{su(r)\} \dashrightarrow fi(r)$, either (1)
$su(r) \in \ifalse{E,\ordK}$,
or (2) $r''$ is defeasible and there is a $t:$ $su(s_1),
 \ldots, su(s_n) \dashrightarrow \neg fi(r)$
such that $body(t) \subseteq \itrue{E,\ordK-1}$ and $t$ is strict (for NDL,
$t \nprec_E r''$). If (1) it follows that $body(r) \subseteq
\ifalse{E,\ordK}$. If (2) then for each $su(s_i) \in body(t)$, $body(s_i) \subseteq \itrue{E, \ordC}$ for some $\ordC < \ordK$ and by the inductive hypothesis $body(s_i) \subseteq \itrue{D, WF}$.
Given the construction of rules such as $t$, there exists a $c \in C_D[a]$ such that for all $q \in c-\{a\}$ there is a $s \in R_D[q]$ such that
$body(s) \subseteq \itrue{D,WF}$ and $r \prec_D s$ or $s$
strict (for NDL, $s \nprec_D r$).

Generalizing on $a$,
$\ifalse{E,\ordK}$ is unfounded \emph{wrt} $D$ and $\itrue{D,WF}$, and
so $p \in \ifalse{D,WF}$. $\square$
\end{enumerate}
\end{proof*}

\end{document}